\newcommand{\err}{\text{err}}
\newcommand{\nnz}{\text{nnz}}
\newcommand{\dcg}{\text{DCG}\xspace}
\newcommand{\ndcg}{\text{nDCG}\xspace}
\newcommand{\tvx}{\tilde\vx}
\newcommand{\tvz}{\tilde\vz}
\newcommand{\tvc}{\tilde\vc}
\newcommand{\tvw}{\tilde\vw}
\newcommand{\tvdelta}{\tilde\vdelta}
\newcommand{\parabel}{\text{Parabel}\xspace}
\newcommand{\dismec}{\text{DiSMEC}\xspace}
\newcommand{\annexml}{\text{AnnexML}\xspace}
\newcommand{\craftml}{\text{CRAFTML}\xspace}
\newcommand{\sleec}{\text{SLEEC}\xspace}
\newcommand{\leml}{\text{LEML}\xspace}
\newcommand{\lpsr}{\text{LPSR}\xspace}
\newcommand{\ppds}{\text{PPDSparse}\xspace}
\newcommand{\plt}{\text{PLT}\xspace}
\newcommand{\pfrexml}{\text{PfastreXML}\xspace}
\newcommand{\fastxml}{\text{FastXML}\xspace}
\newcommand{\itdc}{\text{ITDC}\xspace}
\newcommand{\scbc}{\text{SCBC}\xspace}
\newcommand{\bcls}{\text{BCLS}\xspace}
\newcommand{\lsc}{\text{LSC}\xspace}
\newcommand{\defrag}{\textsf{DEFRAG}\xspace} 
\newcommand{\defragx}{\textsf{DEFRAG-X}\xspace}
\newcommand{\defragxy}{\textsf{DEFRAG-XY}\xspace}
\newcommand{\defragn}{\textsf{DEFRAG-N}\xspace}
\newcommand{\refrag}{\textsf{REFRAG}\xspace} 
\newcommand{\fiat}{\textsf{FIAT}\xspace} 
\newcommand{\newreptheorem}[2]{\newtheorem*{rep@#1}{\rep@title}
\newenvironment{rep#1}[1]{\def\rep@title{#2 \ref*{##1}}\begin{rep@#1}}{\end{rep@#1}}
}
\title{Accelerating Extreme Classification via Adaptive Feature Agglomeration}
\author{
Ankit Jalan \And Purushottam Kar\\
\affiliations
Department of CSE, IIT Kanpur, INDIA\\
\emails
aankitjalan@gmail.com, purushot@cse.iitk.ac.in
}
\begin{document}

\maketitle

\begin{abstract}
Extreme classification seeks to assign each data point, the most relevant labels from a universe of a million or more labels. This task is faced with the dual challenge of high precision and scalability, with millisecond level prediction times being a benchmark. We propose \defrag, an adaptive feature agglomeration technique to accelerate extreme classification algorithms. Despite past works on feature clustering and selection, \defrag distinguishes itself in being able to scale to millions of features, and is especially beneficial when feature sets are sparse, which is typical of recommendation and multi-label datasets. The method comes with provable performance guarantees and performs efficient task-driven agglomeration to reduce feature dimensionalities by an order of magnitude or more. Experiments show that \defrag can not only reduce training and prediction times of several leading extreme classification algorithms by as much as 40\%, but also be used for feature reconstruction to address the problem of missing features, as well as offer superior coverage on rare labels.
\end{abstract}

\section{Introduction}
\label{sec:intro}
The task of taking assigning data points, one or more labels from a vast universe of millions of labels is often referred to as the \emph{extreme classification} problem. Although reminiscent of the classical multi-label learning problem, the emphasis on addressing extremely large label spaces distinguishes extreme classification. Recent advances in extreme classification have allowed problems such as ranking, recommendation and retrieval to be viewed and formulated as multi-label problems, indeed with millions of labels.

This focus on extremely large label sets has given us state-of-the-art methods for product recommendation \cite{JainPV2016}, search advertising \cite{PrabhuKHAV2018}, and video recommendation \cite{WestonMY2013}, as well as led to advances in our understanding of scalable optimization \cite{PrabhuKHAV2018}, and distributed and parallel processing \cite{YenHDRDCX2017,BabbarS2017}. Recent advances have utilized a variety of techniques -- label embeddings, random forests, binary relevance, which we review in \S\ref{sec:related}.

Nevertheless, extreme classification algorithms continue to face several challenges that we enumerate below.

\paragraph{} 1) \textbf{Precision}: data points often have only 5-6 or fewer labels relevant to them (e.g. very few products, of the possibly millions on sale on an online marketplace, would interest any given customer). It is challenging to accurately identify these 5-6 relevant labels among the millions of irrelevant ones.\\
\indent 2) \textbf{Prediction}: given their use in (live) recommendation systems, extremely rapid predictions are expected, typically within milliseconds. This often restricts the algorithmic techniques that can be used, to computationally frugal ones.\\
\indent 3) \textbf{Processing}: extreme classification datasets contain not only millions of labels, but also millions of data points, each represented as a million-dimensional vector itself. It is challenging to offer scalable training on such large datasets.\\
\indent 4) \textbf{Parity}: huge label sets often exhibit power-law behavior with most labels being \emph{rare} i.e. relevant to very few data points. This makes it tempting for algorithms to focus only on popular labels, neglecting the vast majority of rare ones. However, this is detrimental for recommendation outcomes.

\paragraph{} In this work, we develop the \defrag method and variants to address these specific challenges for a large family of algorithms. Our contributions are summarized below.

\paragraph{Our Contributions.}\hspace*{3ex}\\
\indent 1) We propose the \defrag algorithm that accelerates extreme classification algorithms by performing efficient feature agglomeration on datasets with millions of features and data points. \defrag performs agglomeration by constructing a balanced hierarchy which novel, and offers faster and better agglomerates than traditional clustering methods.\\
\indent 2) We show that \defrag \emph{provably} preserves the performance of a large family of extreme classification algorithms. This is corroborated experimentally where using \defrag significantly reduces training and prediction times of algorithms but with no significant reduction in precision levels.\\
\indent 3) We exploit \defrag's agglomerates in a novel manner to develop the \refrag algorithm to address the parity problem by performing efficient label re-ranking. This vastly improves the coverage of existing algorithms by accurately predicting extremely rare labels.\\
\indent 4) We develop the \fiat algorithm to perform scalable feature imputation which preserves prediction accuracy even when a large fraction of data features are removed.\\
\indent 5) We perform extensive experimentation on large-scale datasets to establish that \defrag not only offers significant reductions in training and prediction times, but that it does so with little or no reduction in precision.

\newcommand{\hd}{\hat d}
\newcommand{\hL}{\hat L}

\section{Problem Formulation and Notation}
\label{sec:formulation}
The training data will be provided as $n$ labeled data points $(\vx^i, \vy^i), i = 1, \ldots, n$ where $\vx^i \in \bR^d$ is the feature vector and $\vy^i \in \bc{0,1}^L$ is the label vector. There may be several (upto $L$) labels associated with each data point. Extreme classification datasets exhibit extreme \emph{sparsity} in feature and label vectors. Let $\hd$ denote the average number of non-zero features per data point and $\hL$ denote the average number of active labels per data point. \S\ref{sec:exps} shows that $\hd \ll d$ and $\hL \ll L$. We will denote the feature matrix using $X = \bs{\vx^1, \ldots, \vx^n} \in \bR^{d \times n}$ and the label matrix using $Y = \bs{\vy^1, \ldots, \vy^n} \in \bc{0,1}^{L \times n}$.

\paragraph{Notation.} Let $\cF = \bc{F_1,\ldots,F_K}$ denote any $K$-partition of the feature set $[d]$ i.e. $F_i \cap F_j = \emptyset$ if $i \neq j$ and $\bigcup_{k=1}^KF_k = [d]$. Let $d_k := \abs{F_k}$ denote the size of the $k\nth$ cluster. For any vector $\vz \in \bR^d$, let $\vz_j$ denote its $j\nth$ coordinate. For any set $F_k \in \cF$, let $\vz_{F_k} := [\vz_j]_{j \in F_k}^\top \in \bR^{d_k}$ denote the (shorter) vector containing only coordinates from the set $F_k$.

\paragraph{Feature Agglomeration.} Feature agglomeration involves creating clusters of features and then summing up features within a cluster. If $\cF$ is a partition of the features $[d]$, then corresponding to every cluster $F_k \in \cF$, we create a a single ``super''-feature. Thus, given a vector $\vz \in \bR^d$, we can create an agglomerated vector $\tvz^{[\cF]} \in \bR^K$ (abbreviated to just $\tvz$ for sake of notational simplicity) with just $K$ features using the clustering $\cF$. The $k\nth$ dimension of $\tvz$ will be $\tvz_k = \sum_{j \in F_k}\vz_i$ for $k = 1, \ldots, K$. The \defrag algorithm will automatically learn relevant feature clusters $\cF$.

\section{Related Works}
\label{sec:related}
We discuss relevant works in extreme classification and scalable clustering and feature agglomeration techniques here.
\paragraph{Binary Relevance.} Also known as \emph{one-vs-all} methods, these techniques, for example \dismec \cite{BabbarS2017}, \ppds \cite{YenHDRDCX2017}, and ProXML \cite{BabbarS2019}, learn $L$ binary classifiers: for each label $l \in [L]$, a binary classifier is learnt to distinguish data points that contain label $l$ from those that do not. Binary relevance methods offer some of the highest precision values among extreme classification algorithms \cite{PrabhuKHAV2018}. However, despite advances in parallel training and active set methods, they still incur training and prediction times that are prohibitive for most applications.
\begin{figure}%
\includegraphics[width=\columnwidth]{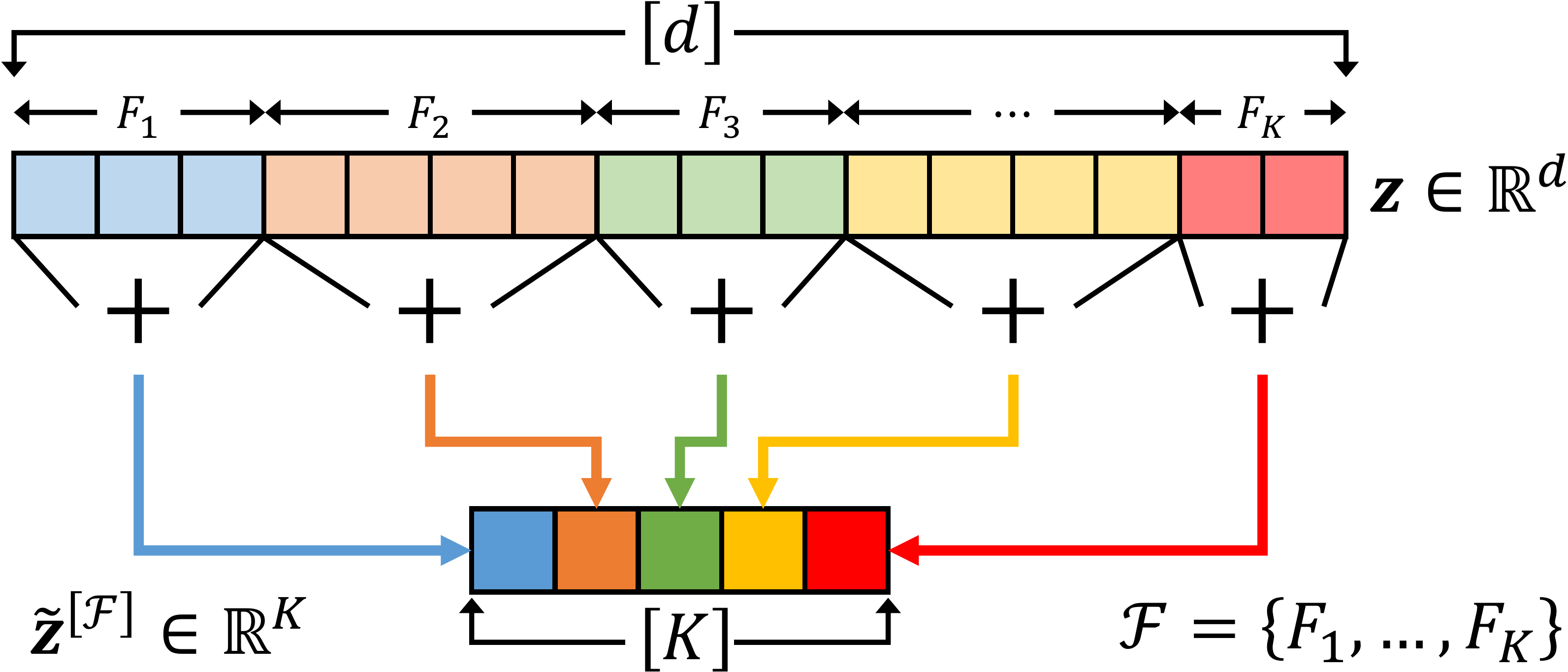}%
\caption{An illustration of the feature agglomeration process.}%
\label{fig:agglo}%
\end{figure}
\paragraph{Label/Feature Embedding.} These techniques project feature and/or label vectors onto a low dimensional space i.e. $\vx^i \mapsto \hat\vx^i, \vy^i \mapsto \hat\vy^i$ where $\hat\vx^i, \hat\vy^i \in \bR^p$, $p \ll \min\bc{d,L}$ using random or learnt projections. Prediction and training is performed in the low dimensional space $\bR^p$ for speed. These methods \sleec \cite{BhatiaJKVJ2015}, \annexml \cite{Tagami2017} and \leml \cite{YuJKD2014} offer strong theoretical guarantees, but are usually forced to choose a moderate value of $p$ to maintain scalability. This often results in low precision values and causes these methods to struggle on rare labels.
\paragraph{Data Partitioning.} These techniques learn a decision tree over the data points which are hierarchically clustered into several leaves, with the hope that the similar data points, i.e. those with similar label vectors, end up in the same leaf. A simple classifier (usually constant) performs label prediction at a leaf. These methods \pfrexml \cite{JainPV2016}, \fastxml \cite{PrabhuV2014} and \craftml \cite{SibliniKM2018} offer fast prediction times due to prediction being logarithmic in number of leaves in a balanced tree.
\paragraph{Label Partitioning.} These methods instead learn to organize labels into (overlapping) clusters, using hierarchical partitioning techniques. Prediction is done by taking a data point to one or more of the leaves of the tree and using a simple method such as 1-vs-all among labels present at that leaf. These methods \plt \cite{JasinskaDB-FPKH2016}, \parabel \cite{PrabhuKHAV2018}, \lpsr \cite{WestonMY2013} offer fast prediction times due to the tree structure, as well as high precision by using a 1-vs-all classifier at the leaves, as \parabel does.
\paragraph{Large Scale (Feature) Clustering.} Clustering, as well as feature clustering and agglomeration, are well-studied topics. Past works include techniques for scalable balanced k-means using alternating minimization techniques \scbc \cite{BanerjeeG2006} and \bcls \cite{LiuHNL2017}, scalable spectral clustering using landmarking \lsc \cite{ChenC2011}, and scalable information-theoretic clustering \itdc \cite{DhillonMK2003}. We do compare \defrag against all these algorithms. These algorithms were chosen since they were able to scale to at least the smallest datasets in our experiments.

\section{Adaptive Extreme Feature Agglomeration}
\label{sec:method}

\begin{algorithm}[t]
	\caption{\defrag: Make-Tree}
	\label{algo:defrag}
	\begin{algorithmic}[1]
		{\small
		\REQUIRE Feature set $S \subseteq [d]$, representative vectors $\vz^i \in \bR^p$ for each feature $i \in S$, maximum leaf size $d_0$
		\ENSURE A tree with each leaf having upto $d_0$ features%
		\IF{$|S| \leq d_0$}
			\STATE $\vn \leftarrow \text{Make-Leaf}(S)$ \COMMENT{No need to split this node}
		\ELSE
			\STATE $\vn \leftarrow \text{Make-Internal-Node}()$
			\STATE $\bc{S_+, S_-} \leftarrow \text{Balanced-Split}(S, \bc{\vz^i, i \in S})$\\\COMMENT{ Balanced spherical k-means or nDCG split}
			\STATE $\vn_+ \leftarrow \text{Make-Tree}(S_+, \bc{\vz^i, i \in S_+}, d_0)$
			\STATE $\vn_- \leftarrow \text{Make-Tree}(S_+, \bc{\vz^i, i \in S_-}, d_0)$
			\STATE $\vn.\text{Left-Child} \leftarrow \vn_+$
			\STATE $\vn.\text{Right-Child} \leftarrow \vn_-$
		\ENDIF
		\STATE \textbf{return} Root node of this tree $\vn$
		}
	\end{algorithmic}
\end{algorithm}%

We now describe the \defrag method, discuss its key advantages and then develop the \refrag method for rare label prediction and the \fiat method for feature imputation. Recall from \S\ref{sec:formulation} that given a $K$-partition $\cF$ of the features $[d]$, feature agglomeration takes each cluster $F_k \in \cF$ and agglomerates all features $j \in F_k$ by summing up their feature values.

\paragraph{\defrag: aDaptive Extreme FeatuRe AGglomeration.} Given a dataset with $d$-dimensional features, \defrag first clusters these features into balanced clusters, with each cluster containing, say no more than $d_0$ features. Suppose this process results in $K$ clusters. \defrag then uses feature agglomeration (see \S\ref{sec:formulation} and Figure~\ref{fig:agglo}) to obtain $K$-dimensional features for all data points in the dataset which are then used for training and testing.

\defrag first creates a representative vector for each feature $j \in [d]$ and then performs \emph{hierarchical clustering} on them (see Algorithm~\ref{algo:defrag}) to obtain feature clusters. At each internal node of the hierarchy, features at that node are split into two children nodes of equal sizes by solving either a balanced spherical $2$-means problem or else by minimizing a \emph{ranking loss} like nDCG \cite{PrabhuV2014} which we call \defragn (see Appendix~\ref{app:method} for details). This process is continued till we are left with less than $d_0$ features at a node, in which case the node is made a leaf. We now discuss two methods to construct these representative vectors.

\paragraph{\defragx} This variant clusters together \emph{co-occurrent} features e.g. $j,j' \in [d]$ where data points with a  non-zero (or high) value for feature $j$ also have a non-zero (or high) value for feature $j'$. \defragx represents each feature $j \in [d]$ as an $n$-dimensional vector $\vp^j = [\vx^1_j,\ldots,\vx^n_j]^\top \in \bR^n$, essentially as the list of values that feature takes in all data points.

\paragraph{\defragxy} This variant clusters together \emph{co-predictive} features e.g. $j,j' \in [d]$ where data points where feature $j$ is non-zero have similar labels as data points where feature $j'$ is non-zero. To do so, \defragxy represents each feature $j \in [d]$ as an $L$-dimensional vector $\vq^j = \sum_{i=1}^n\vx^i_j\vy^i \in \bR^L$, essentially as a weighted aggregate of the label vectors of data points where the feature $j$ is non-zero.



\paragraph{Suited for sparse, high-dim. data.} \defrag is superior to classical dimensionality reduction techniques like PCA/random projection for high-dimensional, sparse data.

1) Applying feature agglomeration to a vector simply involves summing up the coordinates of that vector and is much cheaper than performing PCA or a random projection.\\
\indent 2) PCA/random projection densify vectors and so methods such as \leml and \sleec are compelled to use a small embedding dimension ($\approx$ 500) for sake of scalability which leads to information loss. Feature agglomeration, however, does not densify vectors: if a vector $\vx \in \bR^d$ has only $s$ non-zero coordinates, then for any feature $K$-clustering $\cF$, the vector $\tvx^{[\cF]} \in \bR^K$ cannot have more than $s$ non-zero coordinates. This allows \defrag to operate with relatively large values of $K$ (e.g. $K = d/8$ is default in our experiments as we set $d_0 = 8$) without worrying about memory or time issues. Thus, \defrag can offer \emph{mildly} agglomerated vectors which preserve much of the information of the original vector, yet offer speedups due to the reduced dimensionality.\\
\indent 3) Feature agglomeration has an implicit \emph{weight-tying} effect since once we learn a model over the agglomerated features, all features belonging to a given cluster $F_k \in \cF$ effectively receive the same model weight. This reduces the capacity of the model and can improve generalization error.

\paragraph{Provably bounded distortion, Performance preservation.} We show in \S\ref{sec:analysis} that if we obtain a feature clustering $\cF$ with small clustering error, then feature agglomeration using $\cF$ \emph{provably} preserves the performance of \emph{all} linear models. Specifically, for every model $\vw \in \bR^d$ over the original vectors, there must exist a model $\tvw \in \bR^K$ over the agglomerated vectors such that for \emph{any vector} $\vx \in \bR^d$, we have $\vw^\top\vx \approx \tvw^\top\tvx^{[\cF]}$. This ensures that similar 1-vs-all models can be learnt over $\tvx^{[\cF]}$, as well as similar trees and label partitions can be built. We note that \emph{all} algorithms discussed in \S\ref{sec:related} ultimately use just linear models as components (e.g. binary relevance methods learn $L$ linear classifiers, embedding methods learn linear projections, data and label partitioning methods learn linear models to split internal nodes).



\paragraph{Task adaptivity.} \defragxy takes into account labels in its feature representation which makes it task-adaptive as compared to dimensionality reduction or clustering methods like k-means, PCA which do not take consider labels. Indeed, we will see that on many datasets, \defragxy outperforms \defragx which also does not take labels into account.

\paragraph{Novelty, Speed and Scalability.} Hierarchical \emph{feature} agglomeration is novel in the context of extreme classification although hierarchical \emph{data} partitioning (\parabel) and hierarchical \emph{label} partitioning (\pfrexml) have been successfully attempted before. The representative vectors created by \defrag are themselves sparse and hierarchical feature agglomeration offers speedy feature clustering. \defrag's overhead on the training process is thus, very small.

\paragraph{Time Complexity.} Let $\nnz(X) = n \cdot \hat d$ be the number of non-zero elements in the feature matrix $X$. Computing the feature representations $\vp^j, j \in [d]$ takes $\bigO{\nnz(X)}$ time. The total time taken to perform balanced spherical $2$-means clustering for all nodes at a certain level in the tree is $\bigO{\nnz(X)}$ as well. Since \defrag performs balanced splits, there can be at most $\bigO{\log d}$ levels in the tree, thus giving us a total time complexity of $\bigO{\nnz(X)\log d}$.







\paragraph{\fiat: Feature Imputation via AgglomeraTion.} Co-occurence based feature imputation has been popularly used to overcome the problem of missing features. However, this becomes prohibitive for extreme classification settings since the standard co-occurrence matrix $C = XX^\top$ is too dense to store and operate. We exploit the feature clusters offered by \defrag to create a scalable co-occurrence based feature imputation algorithm \fiat. For any feature cluster $F_k \in \cF$ let $X_{F_k} \in \bR^{d \times n}$ denote the matrix with only those rows that belong to the cluster $F_k$. Given this, we compute a pseudo co-occurrence matrix $C^{\cF} = \sum_{k=1}^KX_{F_k}X_{F_k}^\top \in \bR^d$.

Note that $C^{\cF}$ has a block-diagonal structure and has only upto $\frac{d^2}K$ non-zero entries where $K$ is the number of clusters. Thus, it is much cheaper to store and operate. Given a feature vector $\vx \in \bR^d$ that we suspect has missing features, we perform feature imputation on it by simply calculating $C^{\cF}\vx$.

\paragraph{\refrag: REranking via FeatuRe AGglomeration.} The presence of a vast majority of rare labels that occur in very few data points can cause algorithms to neglect rare labels in favor of popular ones \cite{WeiL2018}. To address this, we propose an efficient reranking solution based on the pseudo co-occurrence matrix $C^{\cF}$ described earlier. First compute the matrix product $C^{\cF}XY^\top \in \bR^{d \times L}$. The $l\nth$ column of this matrix $l \in [L]$ can be interpreted as giving us a \emph{prototype data point} $\vxi^l \in \bR^d$ for the label $l$.

These prototypes can be used to get the affinity score of a test data point $\vx^t$ to a label $l \in [L]$ as $e^{-\frac\gamma2\cdot\norm{\vx^t - \vxi^l}_2^2}$. Once a base classification algorithm such as \parabel or \dismec has given scores for the test point $\vx^t$ with respect to various labels, instead of predicting the labels with the highest scores right-away, we combine the classifier scores with these affinity scores and then make the predictions. We note that a similar approach was proposed by \cite{JainPV2016} who did achieve enhanced performance on rare labels.

However, whereas their method requires an optimization problem to be solved to obtain the prototypes, we have a closed form expression for prototypes in our model given the efficiently computable pseudo co-occurrence matrix $C^{\cF}$.

Due to lack of space, further algorithmic details as well as proofs of theorems in \S\ref{sec:analysis} are presented in the full version of the paper available at the URL given below.


\section{Performance Guarantees}
\label{sec:analysis}
In this section we establish that \defrag provably preserves the performance of extreme classification algorithms. For any vector $\vv \in \bR^p$ we will utilize the orthogonal decomposition $\vv = \vv^\parallel + \vv^\perp$ where $\vv^\parallel$ is the component of $\vv$ along the all-ones vector $\vone_p = (1,\ldots,1) \in \bR^p$ and $\vv^\perp$ is the component orthogonal to it i.e. $\vone_p^\top\vv^\perp = 0$. At the core of our results is the following lemma. Given a real valued matrix $Z \in \bR^{d \times p}$ for some $p > 0$ and a $K$-partition $\cF$ of the feature set $[d]$, we will let $Z_k \in \bR^{d_k \times p}$ denote the matrix formed out of the rows of the matrix that correspond to the partition $F_k$.

\begin{lemma}
\label{lem:approx}
Given any matrix $Z \in \bR^{d \times p}$ and any $K$-partition $\cF = \bc{F_1,\ldots,F_K}$ of $[d]$, suppose there exist vectors $\vmu^1, \ldots, \vmu^K \in \bR^p$ such that $Z_k = \vone_{d_k}(\vmu^k)^\top + \Delta_k$ where $\vone_{d_k} := (1,\ldots,1)^\top \in \bR^{d_k}$, then for \emph{every} $\vw \in \bR^d$ and every $k \in [K]$, there must exist a real value $c_{\vw,k} \in \bR$ such that
\begin{align*}
\resizebox{\linewidth}{!}{$
	\displaystyle
	(\vw_{F_k} - c_{\vw,k}\cdot\vone_{d_k})^\top Z_kZ_k^\top(\vw_{F_k} - c_{\vw,k}\cdot\vone_{d_k}) \leq \norm{\Delta_k^\top\vw_{F_k}^\perp}_2^2.
$}
\end{align*}%
\end{lemma}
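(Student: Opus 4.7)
The plan is to make the natural choice $c_{\vw,k} := \frac{1}{d_k}\vone_{d_k}^\top \vw_{F_k}$, i.e. the mean of the entries of $\vw_{F_k}$. With this choice, the residual $\vw_{F_k} - c_{\vw,k}\cdot\vone_{d_k}$ is exactly the component $\vw_{F_k}^\perp$ of $\vw_{F_k}$ that is orthogonal to $\vone_{d_k}$, since the parallel component $\vw_{F_k}^\parallel$ is by definition the projection onto the span of $\vone_{d_k}$, which equals $c_{\vw,k}\cdot\vone_{d_k}$ for exactly this value of $c_{\vw,k}$.

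Next I would rewrite the left-hand side quadratic form as a squared norm. Since $Z_k Z_k^\top$ is positive semidefinite and equals $(Z_k^\top)^\top Z_k^\top$, the left-hand side becomes $\norm{Z_k^\top \vw_{F_k}^\perp}_2^2$. The goal is then to show this equals $\norm{\Delta_k^\top \vw_{F_k}^\perp}_2^2$, which is even stronger than the claimed inequality.

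Now I would substitute the hypothesis $Z_k = \vone_{d_k}(\vmu^k)^\top + \Delta_k$, so that $Z_k^\top = \vmu^k\vone_{d_k}^\top + \Delta_k^\top$. Applying this to $\vw_{F_k}^\perp$ yields
\begin{equation*}
Z_k^\top \vw_{F_k}^\perp \;=\; \vmu^k\br{\vone_{d_k}^\top\vw_{F_k}^\perp} + \Delta_k^\top\vw_{F_k}^\perp.
\end{equation*}
By the very definition of $\vw_{F_k}^\perp$ as the component of $\vw_{F_k}$ orthogonal to $\vone_{d_k}$, the scalar $\vone_{d_k}^\top\vw_{F_k}^\perp$ vanishes, killing the first term. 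Hence $Z_k^\top \vw_{F_k}^\perp = \Delta_k^\top \vw_{F_k}^\perp$, and taking squared norms gives the desired bound with equality.

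There is no real obstacle here; the lemma is essentially a one-line identity once the correct $c_{\vw,k}$ is picked. The only conceptual step is recognizing that the freedom to subtract any multiple of $\vone_{d_k}$ from $\vw_{F_k}$ should be used precisely to cancel the rank-one ``cluster center'' part $\vone_{d_k}(\vmu^k)^\top$ of $Z_k$, since this part only ``sees'' the parallel component of $\vw_{F_k}$. Everything else is just the orthogonal decomposition and a substitution.
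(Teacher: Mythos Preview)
Your proof is correct and, in fact, more direct than the paper's. Both arguments ultimately reduce the left-hand side to $\norm{Z_k^\top\vw_{F_k}^\perp}_2^2$ and then use $Z_k^\top\vw_{F_k}^\perp = \Delta_k^\top\vw_{F_k}^\perp$ via $\vone_{d_k}^\top\vw_{F_k}^\perp = 0$. The difference lies in the choice of $c_{\vw,k}$: you pick the coordinate mean $c_{\vw,k} = \frac{1}{d_k}\vone_{d_k}^\top\vw_{F_k}$, which makes $\vw_{F_k} - c_{\vw,k}\vone_{d_k} = \vw_{F_k}^\perp$ immediately and yields the bound with \emph{equality}. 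The paper instead minimizes $f(c) = (\vw_{F_k} - c\vone_{d_k})^\top Z_kZ_k^\top(\vw_{F_k} - c\vone_{d_k})$ over $c$, obtaining the $Z_kZ_k^\top$-weighted minimizer $c_{\min} = \frac{\vw_{F_k}^\top Z_kZ_k^\top\vone_{d_k}}{\vone_{d_k}^\top Z_kZ_k^\top\vone_{d_k}}$, computes $f(c_{\min}) = \norm{Z_k^\top\vw_{F_k}^\perp}_2^2 - \frac{(\vone_{d_k}^\top Z_kZ_k^\top\vw_{F_k}^\perp)^2}{\vone_{d_k}^\top Z_kZ_k^\top\vone_{d_k}}$, and then discards the nonnegative correction term to reach the same upper bound.

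Your route is shorter because the lemma only asks for \emph{some} $c_{\vw,k}$ achieving the inequality, so there is no need to optimize; the paper's optimization buys a slightly tighter value of the quadratic form (since $f(c_{\min}) \leq f(c_{\text{mean}})$) but then relaxes it back to the same bound anyway. Either argument suffices for the downstream use in Theorem~\ref{thm:defragx}.
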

Lemma~\ref{lem:approx} will be used to show below that, if a group of features is ``well-clustered'', then it is possible to tie together weights corresponding to those features in \emph{every} linear model.
\begin{theorem}
\label{thm:defragx}
Upon executing \defragx with a feature matrix $X = [\vx^1,\ldots,\vx^n]$ and label matrix $Y = [\vy^1,\ldots,\vy^n]$, suppose we obtain a feature $K$-partition $\cF = [F_1,\ldots,F_K]$ with $\err_k$ denoting the Euclidean clustering error within the $k\nth$ cluster, then for \emph{any} loss function $\ell(\cdot)$ that is $L$-Lipschitz and for \emph{every} linear model $\vw \in \bR^d$, there must exist a model $\tvw \in \bR^K$ such that for all subsets of data points $S \subseteq [n]$,
\[
\sqrt{\sum_{i \in S}\br{\ell(\vw^\top\vx^i; \vy^i) - \ell(\tvw^\top\tvx^i; \vy^i)}^2} \leq L\cdot\sum_{k=1}^K\norm{\vw_{F_k}^\perp}_2\cdot\err_k.
\]
\end{theorem}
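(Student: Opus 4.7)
The plan is to instantiate Lemma~\ref{lem:approx} with $Z := X \in \bR^{d \times n}$, observing that the $j$th row of $X$ is exactly DEFRAG-X's representative vector $\vp^j$. Under the $K$-partition $\cF$ returned by DEFRAG-X, the cluster mean $\vmu^k := \tfrac{1}{d_k}\sum_{j \in F_k}\vp^j \in \bR^n$ furnishes the decomposition $X_k = \vone_{d_k}(\vmu^k)^\top + \Delta_k$ required by the lemma, with $\norm{\Delta_k}_F^2 = \sum_{j \in F_k}\norm{\vp^j - \vmu^k}_2^2 = \err_k^2$. The lemma then produces, for every $\vw \in \bR^d$ and every $k \in [K]$, a scalar $c_{\vw,k} \in \bR$; I would define the agglomerated model coordinate-wise by $\tvw_k := c_{\vw,k}$.

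Next I would expand the score deviation on each data point. Since $\tvx^i_k = \vone_{d_k}^\top\vx^i_{F_k}$, we have $\tvw^\top\tvx^i = \sum_k c_{\vw,k}\vone_{d_k}^\top\vx^i_{F_k}$ and $\vw^\top\vx^i = \sum_k \vw_{F_k}^\top\vx^i_{F_k}$. Writing $\vu_k := \vw_{F_k} - c_{\vw,k}\vone_{d_k}$, the deviation $\vw^\top\vx^i - \tvw^\top\tvx^i$ is exactly the $i$th entry of the vector $\sum_{k=1}^K X_k^\top\vu_k \in \bR^n$. Consequently,
\[
\sum_{i \in S}(\vw^\top\vx^i - \tvw^\top\tvx^i)^2 \leq \sum_{i=1}^n(\vw^\top\vx^i - \tvw^\top\tvx^i)^2 = \Bigl\|\textstyle\sum_{k=1}^K X_k^\top\vu_k\Bigr\|_2^2,
\]
where the first step is mere monotonicity in $S$ of a sum of non-negative terms, so it suffices to prove the bound at $S = [n]$.

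Applying $L$-Lipschitzness of $\ell$ then gives $(\ell(\vw^\top\vx^i;\vy^i) - \ell(\tvw^\top\tvx^i;\vy^i))^2 \leq L^2(\vw^\top\vx^i - \tvw^\top\tvx^i)^2$, so it remains to bound $\norm{\sum_k X_k^\top\vu_k}_2$. I would decouple the clusters via the triangle inequality, $\norm{\sum_k X_k^\top\vu_k}_2 \leq \sum_k\norm{X_k^\top\vu_k}_2$, and then invoke Lemma~\ref{lem:approx} cluster-wise: for the choice $c_{\vw,k}$ furnished by the lemma, $\norm{X_k^\top\vu_k}_2^2 = \vu_k^\top X_kX_k^\top\vu_k \leq \norm{\Delta_k^\top\vw_{F_k}^\perp}_2^2$. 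A final operator-versus-Frobenius-norm estimate yields
\[
\norm{\Delta_k^\top\vw_{F_k}^\perp}_2 \leq \norm{\Delta_k}_{\mathrm{op}}\cdot\norm{\vw_{F_k}^\perp}_2 \leq \norm{\Delta_k}_F\cdot\norm{\vw_{F_k}^\perp}_2 = \err_k\cdot\norm{\vw_{F_k}^\perp}_2,
\]
and stringing the inequalities together recovers the theorem's bound exactly.

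The only substantive step is the triangle-inequality decoupling: Lemma~\ref{lem:approx} only controls the diagonal blocks $X_kX_k^\top$, whereas the squared score-deviation naturally involves off-diagonal cross-cluster blocks of $XX^\top$ as well. The triangle inequality sidesteps having to control these cross terms directly, at the price of a worst-case (no-cancellation) slack; this is precisely what yields the additive $\sum_k$ form of the final bound rather than an $\ell_2$-aggregated one. Everything else is mechanical once one identifies $Z := X$ with DEFRAG-X's representation and reads $\err_k = \norm{\Delta_k}_F$ off the clustering objective.
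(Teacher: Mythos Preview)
Your proposal is correct and follows essentially the same route as the paper: identify $Z = X$ so that Lemma~\ref{lem:approx} applies with $\norm{\Delta_k}_F = \err_k$, set $\tvw_k := c_{\vw,k}$, reduce to $S = [n]$ by positivity, apply Lipschitzness, and then bound $\norm{\sum_k X_k^\top\vu_k}_2$ by $\sum_k \err_k\norm{\vw_{F_k}^\perp}_2$ via the lemma and $\norm{\Delta_k}_{\mathrm{op}} \leq \norm{\Delta_k}_F$. The only cosmetic difference is that the paper expands the square and bounds the cross terms by Cauchy--Schwartz, whereas you apply the triangle inequality directly to $\norm{\sum_k X_k^\top\vu_k}_2$; these two manoeuvres are equivalent and yield the identical final bound.
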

To simplify this result, let $w_0 = \max_{k \in [K]}\norm{\vw_{F_k}^\perp}_2^2 \leq \max_{k \in [K]}\norm{\vw_{F_k}}_2^2$ (since cluster sizes $d_k$ are typically small, $w_0$ is small too) and use the fact that $\err_k \geq 0$ to get
\[
\sqrt{\sum_{i \in S}\br{\ell(\vw^\top\vx^i; \vy^i) - \ell(\tvw^\top\tvx^i; \vy^i)}^2} \leq L\cdot w_0\cdot\sum_{k=1}^K\err_k.
\]
A few points are notable about the above results.
\paragraph{Uniform Model Preservation.} Theorem~\ref{thm:defragx} guarantees that if the clustering error is small (and \defrag does minimize clustering error), then for \emph{every} possible linear model $\vw \in \bR^d$  over the original features $\vx^i$, we can learn a model $\tvw \in \bR^d$ over the agglomerated features $\tvx^i$ such that both models behave similarly with respect to any Lipschitz loss function. It is notable that Theorem~\ref{thm:defragx} holds \emph{simultaneously} for all linear models $\vw$, thus making it algorithm agnostic.
\paragraph{Classifier Preservation.} Most leading algorithms (\parabel, \dismec, \pfrexml, \ppds, \sleec) construct classifiers by learning several linear models using hinge loss or exponential loss which are Lipschitz. By preserving the performance of all such individual linear models, \defrag preserves the overall performance of these algorithms too. Note that Theorem~\ref{thm:defragx} holds uniformly over \emph{all} subsets $S \subseteq [n]$ of data points, which is useful since these algorithms often learn several linear models on various subsets of the data.
\paragraph{Graceful Adaptivity.} Suppose for a model $\vw$, the weights within a cluster $F_k$ are similar i.e. $\vw_{F_k} \approx w\cdot\vone_{d_k}, w \in \bR$. Then this implies $\vw_{F_k}^\perp \approx \vzero$ and the contribution of this cluster to the total error will be very small. This indicates that if some of the original weights are anyway tied together, \defrag automatically offers extremely accurate reconstructions.

In Appendix~\ref{app:proofs-defragxy}, we show that \defragxy preserves the performance of label clustering methods such as \parabel.


\section{Experimental Results}
\label{sec:exps}

\begin{table}
	\centering
	\begin{adjustbox}{max width=\linewidth}
	\begin{tabular}{cc}
    \begin{tabular}{l r r r r r r r}
    \toprule
	Method & LMI & Bal. & Ent. & Time & P1(\%) & P3(\%) & P5(\%)\\
	& & & & (min)\\
    \midrule
    \multicolumn{8}{c}{\textbf{ EURLex-4K }}\\
    \midrule
    ITDC & 0.47 & Inf & 0.87 & 0.7 & 73.85 & 59.63 & 49.19 \\
    SCBC & 0.39 & 321 & 0.75 & 0.55 & 71.96 & 59.50 & 49.41 \\
    LSC & 0.60 & 130 & 0.93 & 5.7 & 71.44 & 58.68 & 48.71 \\
    BCLS & 0.50 & Inf & 0.91 & 2.5 & 74.14 & 60.96 & 50.53 \\
    \defragx & \textbf{0.37} & \textbf{1.11} & \textbf{0.99} & \textbf{0.03} & \textbf{78.97} & \textbf{65.68} & \textbf{54.46} \\
    \toprule
    \multicolumn{8}{c}{\textbf{ Wiki10-31K }}\\
    \midrule
    ITDC & 0.52 & Inf & 0.88 & 23 & 82.03 & 69.72 & 60.07 \\
    \defrag-G & 0.46 & \textbf{1.08} & \textbf{0.99} & 0.48$^\ast$ & 82.72 & 69.62 & 60.23 \\
    \defragx & \textbf{0.36} & \textbf{1.08} & \textbf{0.99} & \textbf{0.45} & \textbf{84.99} & \textbf{73.47} & \textbf{63.91} \\
    \bottomrule
    \end{tabular}
	\end{tabular}
	\end{adjustbox}
	\caption{A comparison of \defrag with other clustering algorithms on clustering quality (see Appendix~\ref{app:exps} for definitions of clustering metrics), as measured by loss of mutual information (LMI), balance factor, normalized entropy, clustering time, and classification performance when the \parabel algorithm was executed upon agglomerated features given by the clustering algorithms. BCLS, LSC and SCBC could not scale to Wiki10. A balance factor of Inf indicates the presence of an empty cluster. \defrag not only outperforms other clustering algorithms in terms of clustering quality and classification accuracy, but offers clustering times that can be an order of magnitude smaller. \defrag-G denotes the \defragx algorithm executed on word features learnt by the GloVe algorithm \protect\cite{PenningtonSM2014}. \defrag could not be outperformed by carefully crafted word vector representations like GloVe either.\\$^\ast$The clustering time for \defrag-G does not include time taken to extract (dense) GloVe word features from raw text.}
	\label{tab:clustering}
\end{table}

We studied the effects of using \defrag variants with several extreme classification algorithms, as well as compared \defrag with other clustering algorithms. Our implementation of \defrag is available at the URL given below.

\noindent\textbf{Code Link}: \url{https://github.com/purushottamkar/defrag/}\\
\paragraph{Datasets and Implementations.} All datasets, train-test splits, and implementations of extreme classification algorithms were sourced from the Extreme Classification Repository \cite{XCRep2019} (see Table~\ref{tab:statistics} in the appendix). Implementations of clustering algorithms were sourced from the authors whenever possible. For \scbc, \lsc and \itdc, public implementations were not available and scalable implementations were created in the Python language.
\paragraph{Hyperparameters.} If available, hyperparameter settings recommended by authors were used for all methods. If unavailable, a fine grid search was performed over a reasonable range to offer adequately tuned hyperparameters to the methods. \defrag had its only hyperparameter, the max size of a feature cluster $d_0$ (see Algorithm~\ref{algo:defrag}), fixed to $8$.

\paragraph{Comparison with other clustering methods.} Table~\ref{tab:clustering} compares \defrag with other clustering algorithms on clustering quality, execution time and classification performance (see Appendix~\ref{app:exps} for definitions of clustering metrics). Features were agglomerated according to feature clusters given by all algorithms and \parabel was executed on them. \defrag handily outperforms all other methods.

\paragraph{Dataset-wise and Method-wise performance} Table~\ref{tab:metrics} presents the outcome of using \defrag with several leading algorithms on 8 datasets. On Wiki10 and Delicious, \defragxy{}+\parabel offers the best overall performance across all methods. More generally, the table shows 21 instances, across the 8 datasets, of how \defrag performs with various algorithms. In 3 of these instances, \defrag outperforms the base method (EURLex-\ppds, Wiki10-\parabel and Delicious-\parabel), in 7 others, \defrag lags by less than 2.5\%, in 8 others, it lags by less than 5\%. Only in 3 cases is the lag $>$ 5\%. \defrag variants do seem to work best with the \parabel method.

\paragraph{Trade-offs offered by \defrag.} It is easy to see that if we create a small number of clusters $K$, by setting $d_0$ to be a large value, then the agglomerated vectors will be lower dimensional and as such, offer faster training/prediction and smaller model sizes. However this may cause a dip in prediction accuracy. Figure~\ref{fig:fig1} shows that \defrag variants offer attractive trade-offs in this respect.

\paragraph{Rare-label prediction with \refrag.} Table~\ref{tab:prop-metrics} shows that \refrag offers much better propensity-weighted metrics \cite{JainPV2016} (which down-weigh popular and emphasize rare labels) than \pfrexml which also attempts label reranking. Figure~\ref{fig:fig2} shows that \refrag achieves much better coverage (3.85) than \parabel (1.23) on Delicious and in general, predicts rare labels far more accurately. Figure~\ref{fig:fig2} also shows that \fiat offers resilience to feature erasures.


\begin{figure}[t]%
\centering
\begin{subfigure}[b]{0.22\textwidth}
	\includegraphics[width=\textwidth]{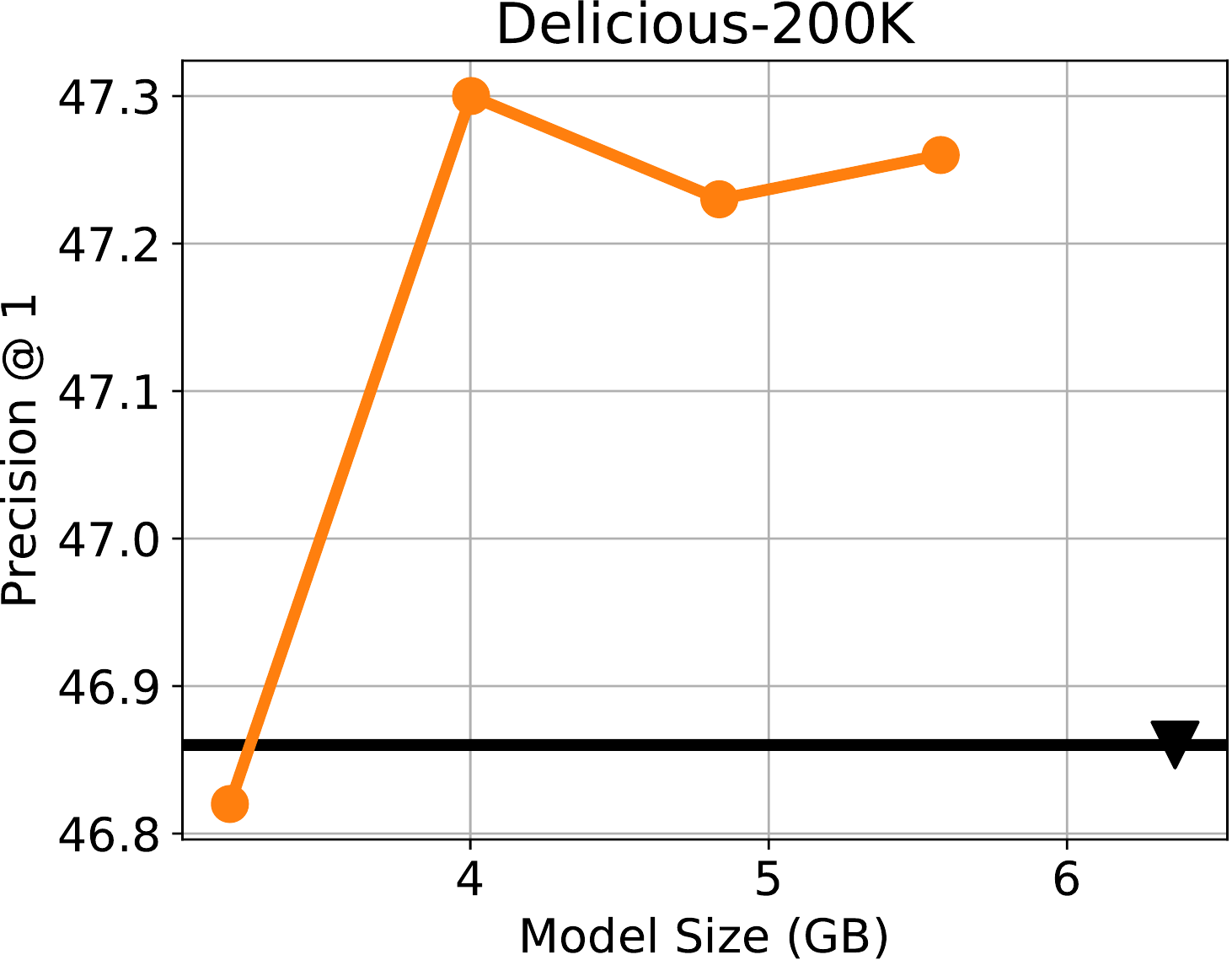}%
\end{subfigure}
\begin{subfigure}[b]{0.22\textwidth}
	\includegraphics[width=\textwidth]{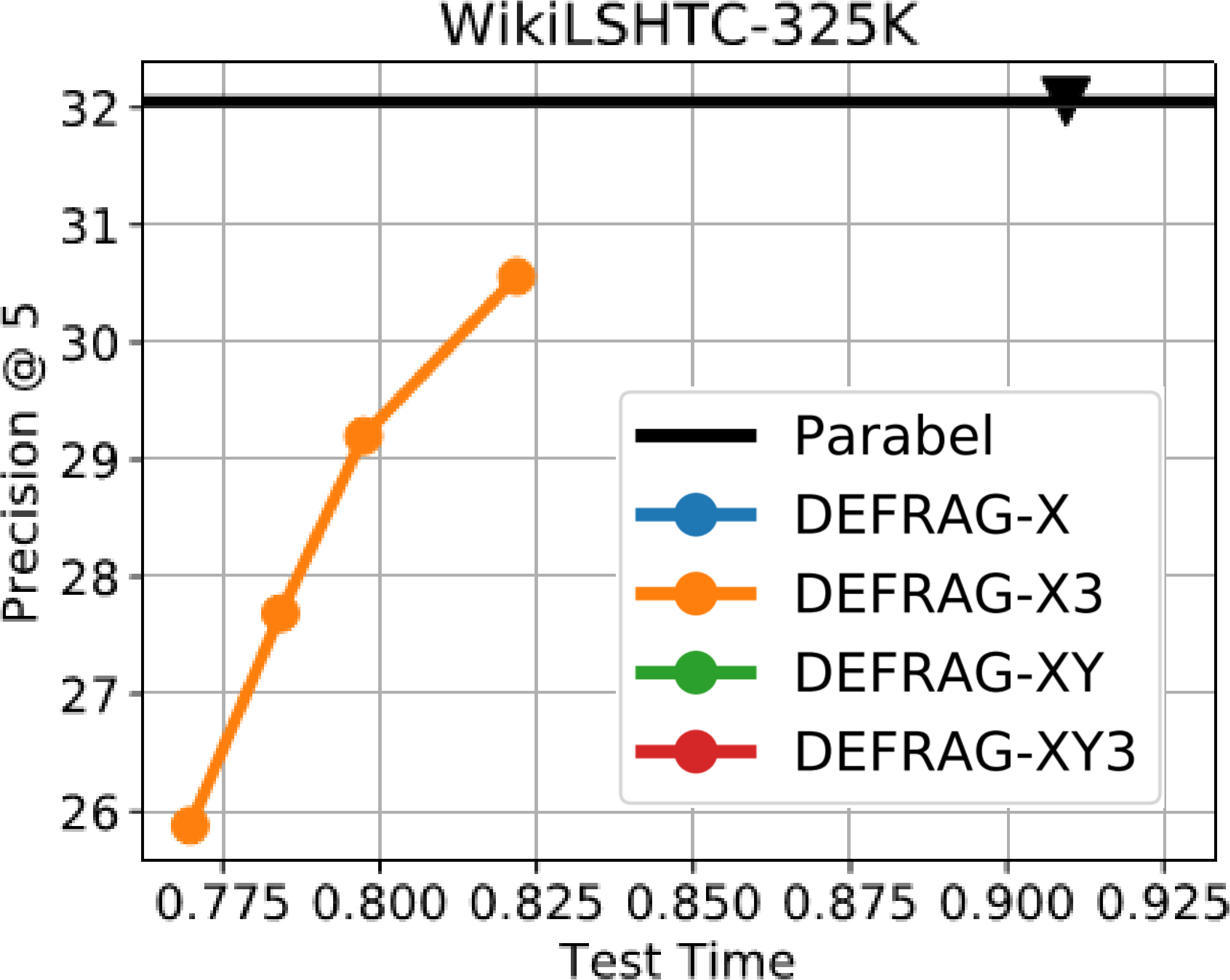}%
\end{subfigure}
\begin{subfigure}[b]{0.22\textwidth}
	\includegraphics[width=\textwidth]{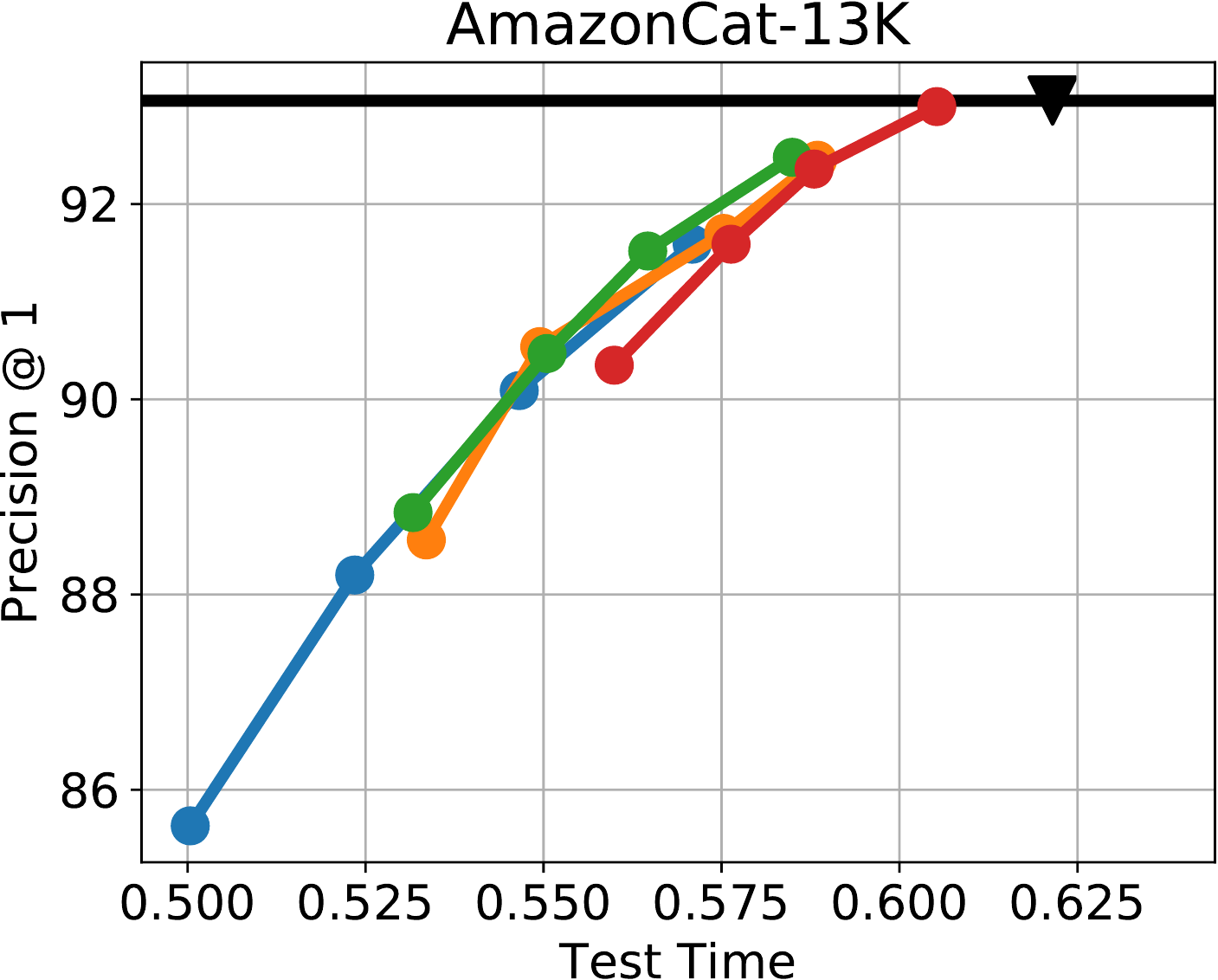}%
\end{subfigure}
\begin{subfigure}[b]{0.22\textwidth}
	\includegraphics[trim=0 0 165 0,clip,width=\textwidth]{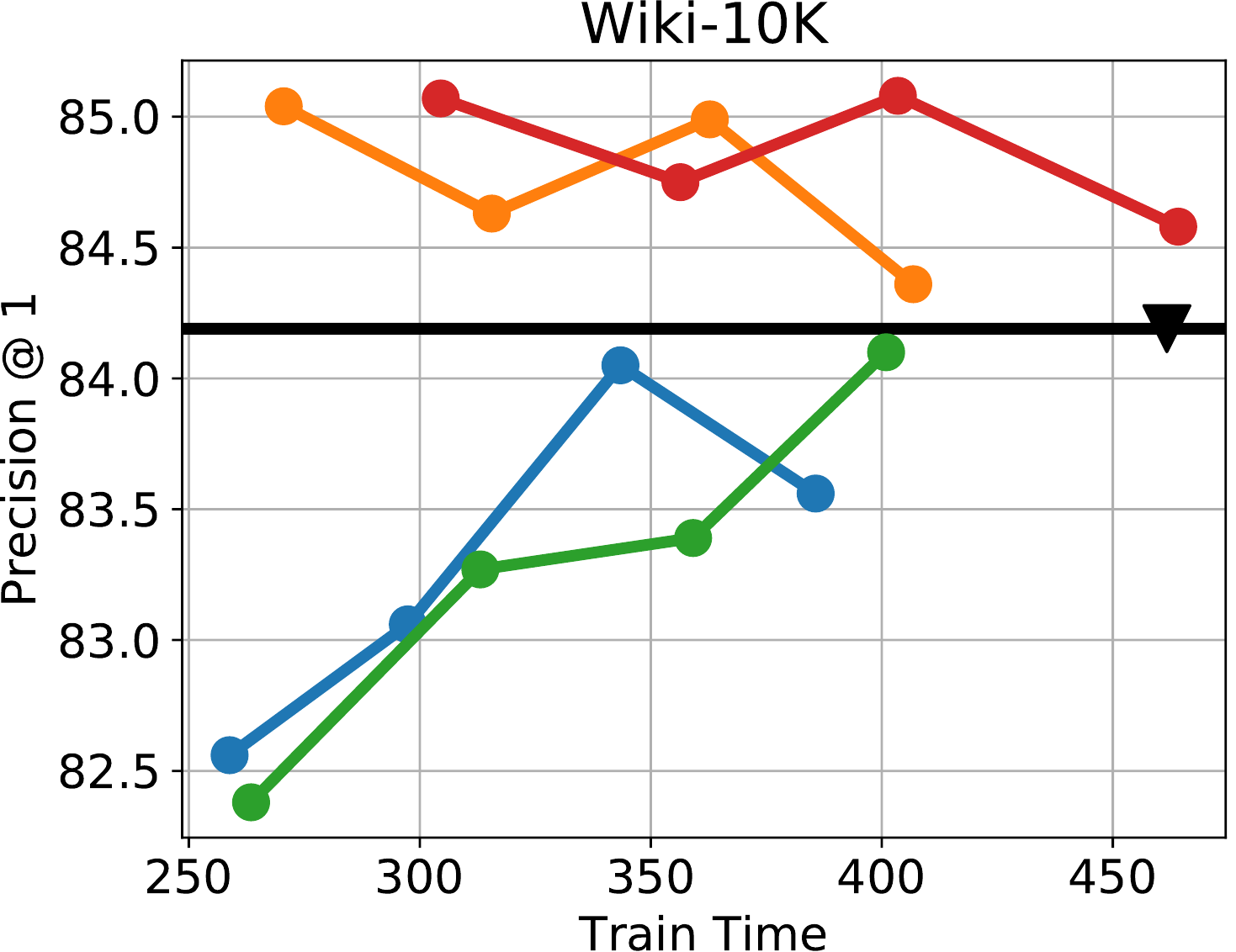}%
\end{subfigure}
\caption{Trade-offs offered by \defrag variants. The maximum size of clusters in \defrag variants $d_0$ was changed among the values $32, 16, 8 \text{ (default), and } 4$. The plots show how this affects prediction accuracy, training/test time and model sizes. The black line shows \parabel's default performance with the black triangle marking its model size, test time etc. Aggressive clustering (e.g. $d_0 = 32$) gives faster training/prediction times and smaller model sizes but also some drop in accuracy. \defragx{}3 and \defragxy{}3 refer to an ensemble of 3 independent realizations of \defrag (see Appendix~\ref{app:method-ensemble}). Appendix~\ref{app:exps-supp} presents several more trade-off plots.}%
\label{fig:fig1}%
\end{figure}

\begin{table}[t]
	\centering
	\begin{adjustbox}{max width=0.95\linewidth}
			\begin{tabular}{c r r r r r r}
				\toprule
				Method & P1 & P3 & P5 & N1 & N3 & N5 \\
				\midrule
				\multicolumn{7}{c}{\textbf{ Wiki-10K }}\\
				\midrule
				PfastReXML$^\|$      & 19.02 & 18.34 & 18.43 & 19.02 & 18.49	& 18.52	\\
				\refrag			& \textbf{20.56}& \textbf{19.51}& \textbf{19.26}&\textbf{20.56} & \textbf{19.74} & \textbf{19.54}	\\
				\midrule
				\multicolumn{7}{c}{\textbf{ Delicious-200K }}\\
				\midrule
				PfastReXML$^\|$      & 3.15 	& 3.87 	& 4.43 	& 3.15 	& 3.68 	& 4.06	\\
				\refrag			& \textbf{7.34}	& \textbf{8.05}	& \textbf{8.66} & \textbf{7.34} & \textbf{7.86} & \textbf{8.27}	\\
				\bottomrule
			\end{tabular}
	\end{adjustbox}
	\caption{\refrag with propensity scored metrics. N1,3,5 refer to propensity weighted nDGG@k. $^\|$Values from \protect\cite{XCRep2019}.}
\label{tab:prop-metrics}
\end{table}

\begin{figure*}[t]%
\centering
\begin{subfigure}[b]{0.245\textwidth}
	\includegraphics[width=\textwidth]{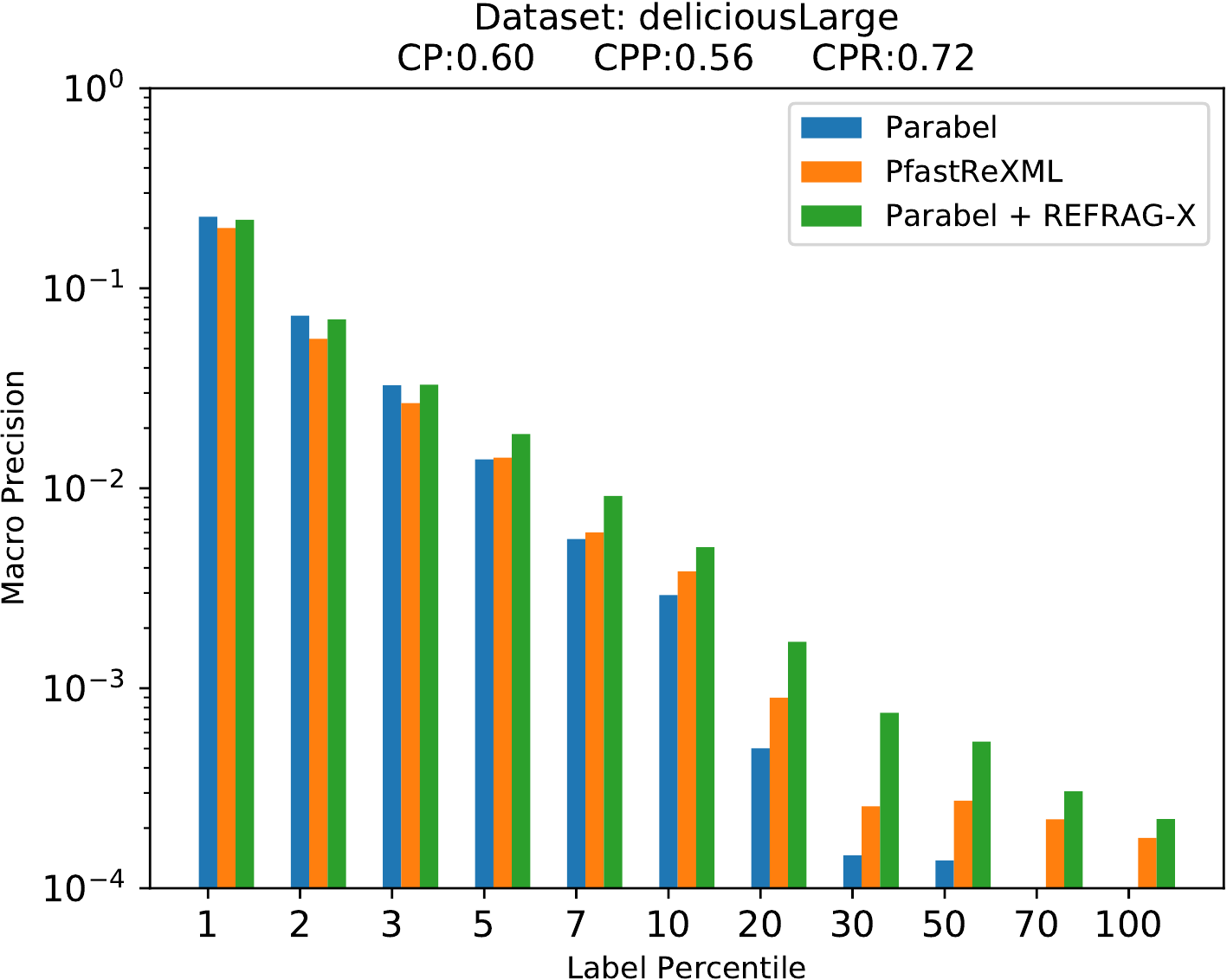}%
\end{subfigure}
\begin{subfigure}[b]{0.245\textwidth}
	\includegraphics[width=\textwidth]{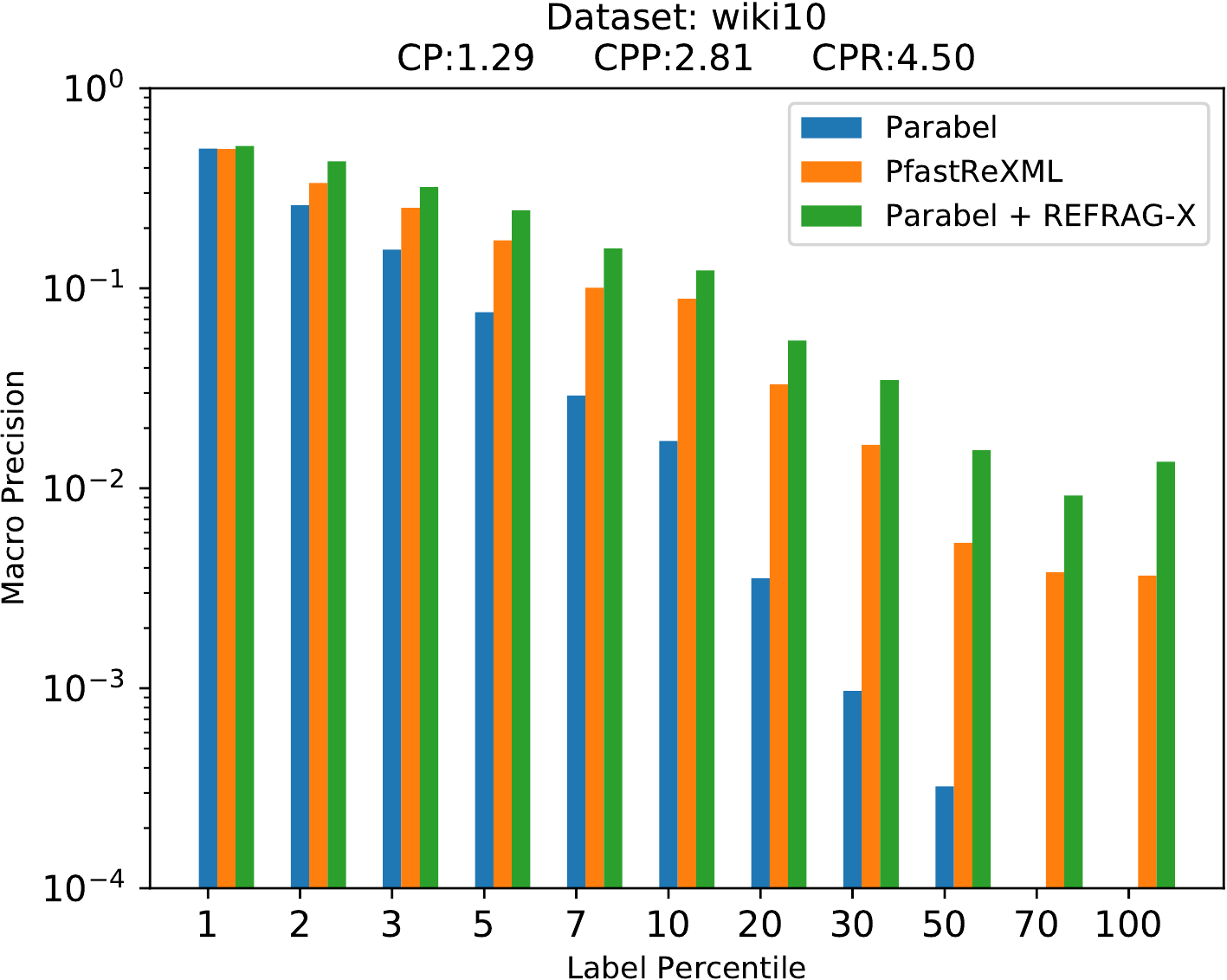}%
\end{subfigure}
\begin{subfigure}[b]{0.245\textwidth}
	\includegraphics[width=\textwidth]{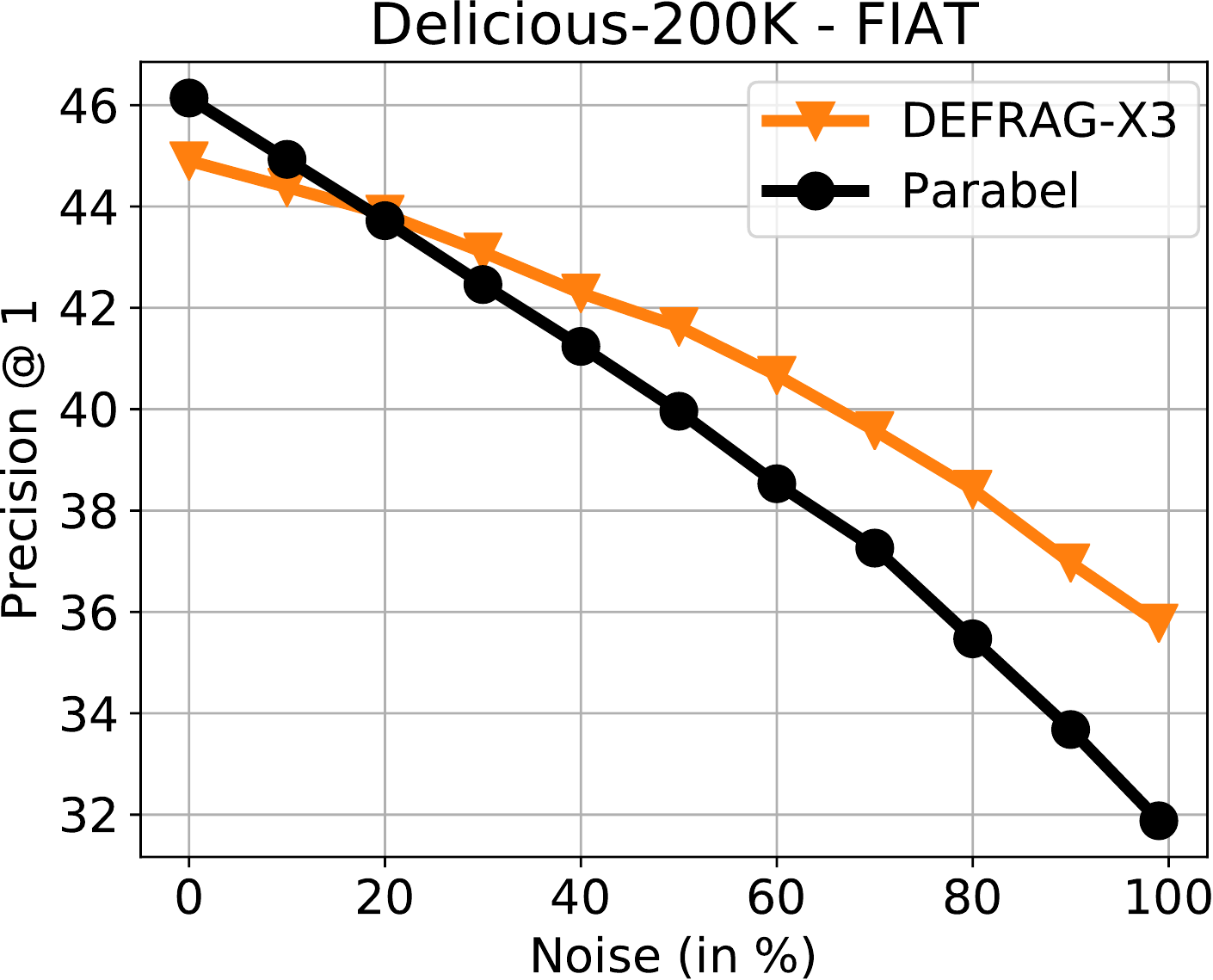}%
\end{subfigure}
\begin{subfigure}[b]{0.245\textwidth}
	\includegraphics[width=\textwidth]{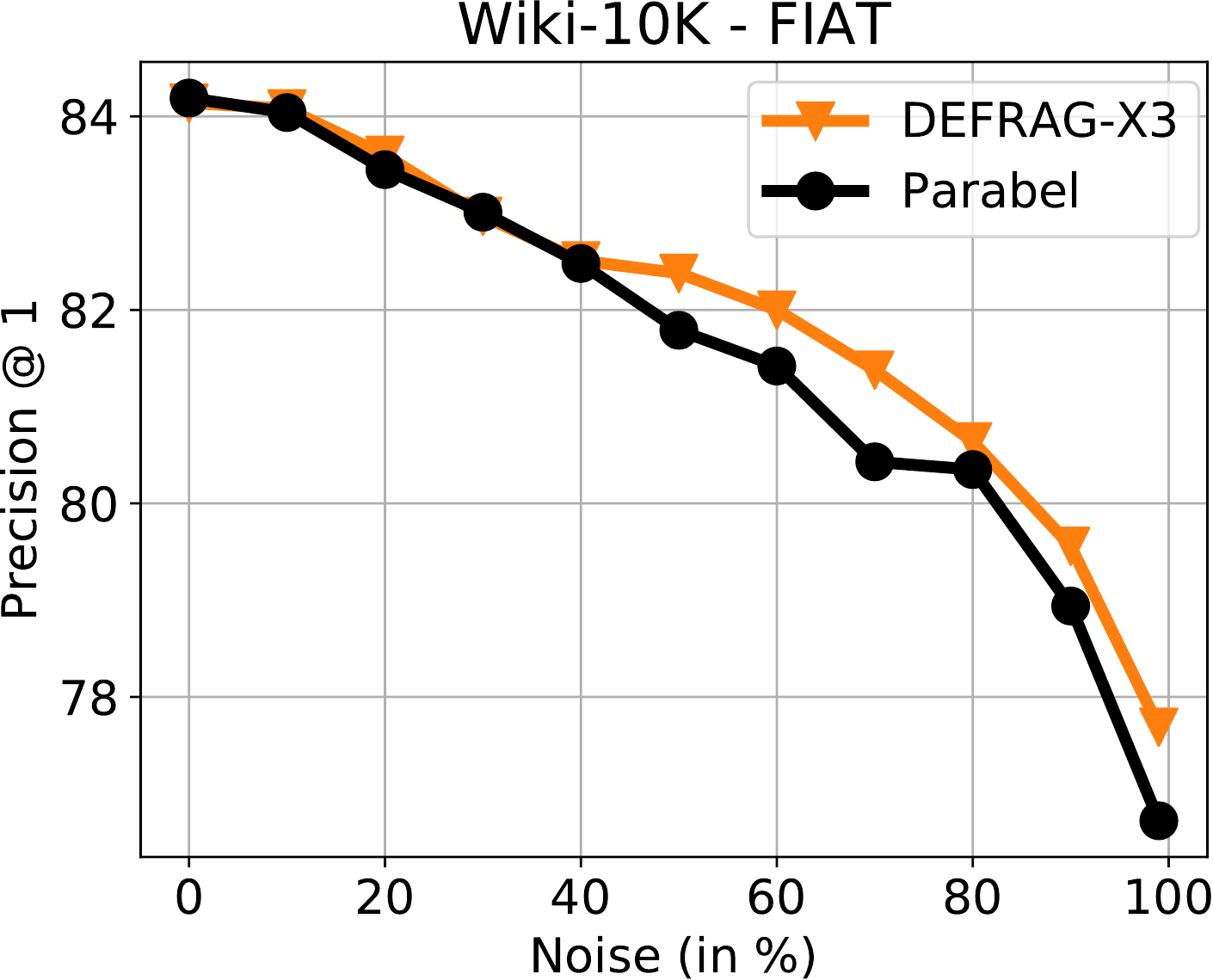}%
\end{subfigure}
\caption{\refrag offers far superior coverage of rare labels than \parabel on Wiki10 and Delicious datasets. Please see Figure~\ref{fig:app-rerank} and caption thereof in Appendix~\ref{app:exps-supp} for definitions of coverage and other details. In the last two plots, a fraction of features was randomly erased from the feature vectors of all test data points. The \fiat algorithm is more robust to such erasures than the default \parabel algorithm. The gap between \fiat and \parabel widens as erasures become more common. Please see Figure~\ref{fig:app-noise} in Appendix~\ref{app:exps-supp} for more details and plots.}%
\label{fig:fig2}%
\end{figure*}

\begin{table*}[t]
	\centering
	\begin{adjustbox}{max width=\textwidth}
  \begin{tabular}{cc}
    \begin{tabular}{c r r r r r r r}
    \toprule
        &  &  &  & Total & Train & Test & Model \\
    Method & P1(\%) & P3(\%) & P5(\%) & Time & Time & Time & Size \\
        &  &  &  & (hr) & (hr) & (ms) & (GB) \\
    
    \midrule
    \multicolumn{7}{c}{\textbf{ EURLex-4K }}\\
    \midrule
    PfastReXML      & 70.41 & 59.22 & 50.56 & 0.07 & 0.07 & 1.26 & 0.26\\
    \defragx        & 68.50 & 56.57 & 47.78 & 0.05 & 0.05 & 1.55 & 0.19\\
    \midrule
    SLEEC		    & 72.96 & 56.03 & 45.49 & 0.06 & 0.06 & 1.62 & 0.70\\
    \defragx        & 67.89 & 51.55 & 42.04 & 0.03 & 0.03 & 1.05 & 0.31\\
    \midrule
    Dismec          & 82.85 & 70.37 & 58.69 & 0.04 & 0.04 & 1.1 & 0.08\\
    \defragx        & 79.12 & 66.39 & 54.97 & 0.02 & 0.02 & 0.5 & 0.02\\
    \midrule
    PPDSparse       & 72.90 & 57.1 & 45.8 & 0.010 & 0.010 & 0.03 & 0.01\\
    \defragx        & 71.40 & \textbf{57.9} & \textbf{47.4} & 0.006 & 0.006 & 0.05 & 0.01\\
    \midrule
    Parabel         & 82.28 & 68.81 & 57.58 & 0.010 & 0.010 & 0.73 & 0.03\\
    \defragx        & 78.97 & 65.68 & 54.46 & 0.009 & 0.008 & 0.59 & 0.01\\
    \defragxy       & 79.23 & 65.77 & 54.65 & 0.009 & 0.008 & 0.59 & 0.01\\
    \midrule
    ProXML$^\ddagger$& 83.40	& 70.90	& 59.10 & -		& -		& -	   & - \\

    \midrule
    \multicolumn{7}{c}{\textbf{ Wiki10-31K }}\\
    \midrule
    PfastReXML      & 75.67 & 64.55 & 57.35 & 0.27 & 0.27 & 11.94 & 1.12\\
    \defragx        & 69.79 & 58.54 & 52.52 & 0.14 & 0.13 & 11.74 & 0.80\\
    \midrule
    SLEEC		    & 84.28 & 72.05 & 61.80 & 0.38 & 0.38 & 6.00 & 3.9\\
    \defragx        & 83.87 & 70.35 & 59.76 & 0.17 & 0.17 & 3.50 & 2.0\\
    \midrule
    Dismec          & 84.12 & 74.71 & 65.94 & 1.48 & 1.48 & 42 & 7.1\\
    \defragx        & 82.30 & 72.14 & 63.78 & 0.66 & 0.65 & 15 & 1.5\\
    \midrule
    PPDSparse       & 74.68 & 60.03 & 49.12 & 0.59 & 0.59 & 2.2 & 0.04\\
    \defragx        & 63.55 & 50.42 & 41.20 & 0.50 & 0.50 & 3.7 & 0.03\\
    \midrule
    Parabel         & 84.19 & 72.46 & 63.37 & 0.13 & 0.13 & 2.04 & 0.18\\
    \defragx        & 84.99 & 73.47 & 63.91 & 0.10 & 0.09 & 1.46 & 0.14\\
    \defragxy       & \textbf{85.08} & \textbf{73.76} & \textbf{64.06} & 0.11 & 0.09 & 1.47 & 0.14\\

	\midrule
	\multicolumn{7}{c}{\textbf{ Amazon-670K }}\\
	\midrule
	PfastReXML      & 36.90 & 34.22 & 32.10 & 5.70 & 5.70 & 6.10 & 10.98\\
	\defragx        & 32.67 & 30.27 & 28.40 & 2.70 & 2.69 & 7.21 & 9.40\\
	\midrule
	SLEEC		    & 32.48 & 28.87 & 26.31 & 2.22 & 2.22 & 1.43 & 8.0\\
	\defragx        & 31.40 & 28.04 & 25.69 & 1.63 & 1.63 & 1.62 & 4.2\\
	\midrule
	Parabel         & 44.92 & 39.77 & 35.98 & 0.24 & 0.24 & 0.81 & 1.94\\
	\defragx        & 42.71 & 37.71 & 33.93 & 0.23 & 0.21 & 0.76 & 1.68\\
	\defragxy       & 42.62 & 37.72 & 33.94 & 0.23 & 0.21 & 0.77 & 1.66\\
	\midrule
	\dismec$^\S$	& 44.70 & 39.70 & 36.10 & -	   & -	  &	-	 & -   \\
	ProXML$^\ddagger$& 43.50	& 38.70	& 35.30	& -	   & -	  &	-	 & -   \\

	\bottomrule
    \end{tabular}
    \begin{tabular}{c r r r r r r r}
    \toprule
        &  &  &  & Total & Train & Test & Model \\
    Method & P1(\%) & P3(\%) & P5(\%) & Time & Time & Time & Size \\
        &  &  &  & (hr) & (hr) & (ms) & (GB) \\

	\midrule
	\multicolumn{7}{c}{\textbf{ AmazonCat-13K }}\\
	\midrule
	PfastReXML      & 85.56 & 75.19 & 62.84 & 11.66 & 11.66 & 0.54 & 19.02\\
	\defragx        & 84.71 & 73.48 & 61.19 & 7.01 & 7.00 & 0.53 & 16.17\\
	\midrule
	Dismec$^\dagger$& 93.80 & 79.07 & 64.05 & 6.68 & 6.68 & 1.45 & 6.0\\
	\defragx        & 89.39 & 74.90 & 60.67 & 4.19 & 4.18 & 0.89 & 1.1\\
	\midrule
	Parabel         & 93.06 & 79.15 & 64.51 & 0.43 & 0.43 & 0.62 & 0.61\\
	\defragx        & 91.70 & 77.25 & 62.79 & 0.44 & 0.40 & 0.57 & 0.39\\
	\defragxy       & 92.36 & 78.20 & 63.55 & 0.43 & 0.40 & 0.58 & 0.38\\

    \midrule
    \multicolumn{7}{c}{\textbf{ Delicious-200K }}\\
    \midrule
    Parabel         & 46.86 & 40.08 & 36.70 & 5.33 & 5.33 & 2.22 & 6.36\\
    \defragx        & 47.23 & 40.53 & 37.19 & 3.23 & 3.16 & 1.05 & 4.83\\
    \defragxy       & \textbf{47.61} & \textbf{40.90} & \textbf{37.66} & 3.34 & 3.12 & 1.06 & 4.76\\
    \midrule
    PfastReXML$^\|$	& 41.72	& 37.83	& 35.58 & -		& -		& -	   & - \\
    \dismec$^\S$	& 45.50	& 38.70	& 35.50 & -		& -		& -	   & - \\

    \midrule
    \multicolumn{7}{c}{\textbf{ WikiLSHTC-325K }}\\
    \midrule
    PfastReXML      & 58.47 & 37.70 & 27.57 & 11.23 & 11.23 & 2.66 & 14.20\\
    \defragx        & 50.86 & 32.08 & 23.40 & 6.03 & 6.03 & 2.14 & 12.82\\
    \midrule
    Parabel         & 65.04 & 43.24 & 32.05 & 0.58 & 0.58 & 0.91 & 3.09\\
    \defragx        & 59.49 & 39.25 & 29.20 & 0.56 & 0.50 & 0.79 & 2.50\\
    \defragxy       & 61.38 & 40.42 & 29.99 & 0.54 & 0.50 & 0.78 & 2.44\\
    \midrule
    PPDSparse$^\|$	& 64.08 & 41.26 & 30.12 & -	& -	& -	& - \\
    \dismec$^\S$	& 64.40 & 42.50 & 31.50 & -	   & -	  &	-	 & -   \\
    ProXML$^\ddagger$& 63.60	& 41.50	& 30.80	& -	   & -	  &	-	 & -   \\

	\midrule
    \multicolumn{7}{c}{\textbf{Wikipedia-500K$^\ast$}}\\
    \midrule
    Parabel         & 68.70 & 49.57 & 38.64 & 5.13 & 5.13 & 3.11 & 5.68\\
    \defragx        & 65.15 & 44.96 & 34.85 & 3.27 & 3.14 & 1.62 & 5.25\\
    \defragxy       & 64.73 & 44.79 & 34.76 & 3.31 & 3.20 & 1.62 & 5.22\\
 	\midrule
	\dismec$^\S$	& 70.20 & 50.60 & 39.70 & -	   & -	  &	-	 & -   \\
	ProXML$^\ddagger$& 69.00	& 49.10	& 38.80	& -	   & -	  &	-	 & -   \\

	\midrule
	\multicolumn{7}{c}{\textbf{Amazon-3M$^\ast$}}\\
	\midrule
	Parabel         & 47.42 & 44.66 & 42.55 & 3.14 & 3.14 & 0.73 & 31.43\\
	\defragx        & 45.68 & 42.85 & 40.76 & 2.93 & 2.83 & 0.66 & 25.34\\
	\defragxy       & 45.11 & 42.36 & 40.30 & 2.87 & 2.80 & 0.63 & 25.22\\
	\midrule
	PfastReXML$^\|$	& 43.83	& 41.81	& 40.09 & -		& -		& -	   & - \\
    \bottomrule
    
    \end{tabular}
  \end{tabular}
	\end{adjustbox}
	\caption{\defrag's performance when used with various extreme classification algorithms. ``Total time'' for \defrag includes clustering=train time. On EURLex, Wiki10 and Delicious, \defrag actually achieves better classification accuracy than the base classifier itself (bold items). \defrag allows expensive 1-vs-all methods like \dismec and \ppds to be executed in a scalable manner with training time reductions of upto 40\% on AmazonCat and model size reductions of upto 20\% on WikiLSHTC. Precision values on certain datasets are being reported for sake of easy comparison. These were not obtained in our experiments and are being sourced from original publications: $^\S$\protect\cite{BabbarS2017}, $^\ddagger$\protect\cite{BabbarS2019}, $^\|$\protect\cite{XCRep2019}.\\
	$^\ast$ For all but Wiki-500K and Amazon-3M, \defrag was executed in an ensemble of 3 independent realizations (see Appendix~\ref{app:method-ensemble}).\\
	$^\dagger$ Except for \dismec on AmazonCat (which required 12 cores to execute scalably), all times are reported on a single core.}
	\label{tab:metrics}
\end{table*}

\section*{Acknowledgments}
The authors thank the reviewers for comments on improving the presentation of the paper. The authors are also thankful to the lab-team members at the CSE department, IIT Kanpur esp. M. Bagga, S. Malhotra, N. Yadav, B. K. Mishra for their support in running the experiments. P. K. thanks Microsoft Research India and Tower Research for research grants.

\clearpage

\balance
\bibliographystyle{named}
\bibliography{refs}

\appendix

\onecolumn

\allowdisplaybreaks

\begin{figure*}[t]
\begin{minipage}[c]{0.49\textwidth}
\begin{algorithm}[H]
	\caption{Balanced Spherical k-means}
	\label{algo:kmeans}
	{\small
	\begin{algorithmic}[1]
		\REQUIRE Feature set $S \subseteq [d]$, representative vectors for each feature $\vz^i \in \bR^p, i \in S$
		\ENSURE A balanced split of the feature set $S$
		\STATE Initialize centroids $\vc^+,\vc^-$ to two randomly selected representative vectors%
		\WHILE{not converged}
			\STATE Calculate scores $s_i = (\vc^+ - \vc^-)^\top\vz^i$ for each $i \in S$
			\STATE $S_1 \leftarrow$ the $\ceil{\abs{S}/2}$ features with highest scores
			\STATE $S_2 \leftarrow$ the $\floor{\abs{S}/2}$ features with lowest scores
			\STATE Recompute centroids $\vc^\pm = \frac1{\abs{S_\pm}}\sum_{i \in S_\pm}\vz^i$
		\ENDWHILE
		\STATE \textbf{return} $\bc{S_+,S_-}$
	\end{algorithmic}
	}
\end{algorithm}
\end{minipage}%
\hfill
\begin{minipage}[c]{0.49\textwidth}
\begin{algorithm}[H]
	\caption{Balanced nDCG Split}
	\label{algo:ndgc}
	{\small
	\begin{algorithmic}[1]
		\REQUIRE Feature set $S \subseteq [d]$, representative vectors for each feature $\vz^i \in \bR_+^p, i \in S$
		\ENSURE A balanced split of the feature set $S$
		\STATE Select two vectors $\vz^+, \vz^-$ randomly from representative vectors%
		\STATE Initialize centroids as $\vr^\pm \leftarrow \rank(\vz^\pm)$
		\WHILE{not converged}
			\STATE Calculate $s_i = \ndcg(\vr^+,\vz^i) - \ndcg(\vr^-,\vz^i)$ for $i \in S$
			\STATE $S_1 \leftarrow$ the $\ceil{\abs{S}/2}$ features with highest scores
			\STATE $S_2 \leftarrow$ the $\floor{\abs{S}/2}$ features with lowest scores
			\STATE Recompute centroids $\vr^\pm = \rank\br{\sum_{i \in S_\pm}I(\vz^i)\cdot\vz^i}$
		\ENDWHILE
		\STATE \textbf{return} $\bc{S_+,S_-}$
	\end{algorithmic}
	}
\end{algorithm}
\end{minipage}%
\end{figure*}

\section{Algorithmic Details from \S\ref{sec:method}}
\label{app:method}

In this section we present details of implementations of algorithms that were omitted from the main text due to lack of space. We start off with implementation details of the \defragx and \defragxy methods.

\subsection{\defrag Implementation Details}
As mentioned in the main text, \defrag chooses to represent each feature $j \in [d]$ as a vector, either as an $n$-dimensional vector $\vp^j = [\vx^1_j,\ldots,\vx^n_j]^\top \in \bR^n$ of the values that feature takes on in the $n$ data points (used by \defragx), or as an $L$-dimensional vector $\vq^j = \sum_{i=1}^n\vx^i_j\vy^i \in \bR^L$, essentially as a weighted aggregate of the label vectors of data points where the feature $j$ is non-zero.

Irrespective of the representation used, \defrag next performs hierarchical clustering on the representative vectors as described in Algorithm~\ref{algo:defrag}. Starting with the root node which contains all features, nodes are split evenly till the number of features falls below a set threshold at which point the node is made a leaf. For our experiments in Figure~\ref{fig:fig1}, we varied this threshold among the values $4,8,16,32$, thus obtaining respectively $d/4,d/8,d/16,d/32$ clusters, but for all other experiments in Figure~\ref{fig:fig2} and Tables~\ref{tab:clustering} and \ref{tab:metrics}, we fixed this threshold to $8$, thus obtaining $d/8$ clusters.

The clustering was done using one of two methods - balanced spherical k-means, or balanced nDCG splitting. The details of these implementations are given below. Given the extreme sparsity of both the data and label vectors, the feature representations we obtain via \defragx and \defragxy, i.e. $\vp^j, \vq^j$ are themselves very sparse and so case Euclidean notions of proximity such as dot products and norms may not be appropriate for clustering. Thus, we also develop a version which we call \defragn which minimizes an nDCG ranking loss at each node \cite{PrabhuV2014}. We found that balancing did not affect the performance of \defragn too much. This may be due to the fact that the minimizing the nDCG loss naturally produces rather balanced clusters, something that has been independently observed by \cite{JainPV2016,PrabhuV2014}.

\paragraph{Balanced Spherical k-means.} Algorithm~\ref{algo:kmeans} presents the node splitting routine using the balanced k-means algorithm. The algorithm essentially follows the traditional Lloyd's algortihm except at the cluster assignment stage when, instead of just assigning each point to the nearest cluster, the algorithm performs a fair split.

This algorithm can be derived as a special case (for $k$ = 2) of the refinement step in the constrained clustering routine proposed in \cite{BanerjeeG2006}. It is notable that \cite{PrabhuKHAV2018} derive essentially the same algorithm for splitting labels into balanced cluster, but they derive their approach starting from a different graph flow-based approach to constrained clustering.

\paragraph{nDCG Splitting.} Given that the vector representations of the features we use are sparse vectors, we employed a ranking-based splitting technique as well. The technique was adapted from the the work of \cite{PrabhuV2014} and is described here (their work applies the technique to cluster binary vectors whereas we apply it to cluster feature representative vectors which need not be binary). Note that the technique requires that the vector representations contain only non-negative values. This is indeed true in all our experiments since the features in our datasets are created out of bag-of-words representations which are indeed non-negative.

For any vector $\vv \in \bR_+^p$ with positive coordinates i.e. $\vv_i \geq 0, i \in [p]$, we let $\rank(\vv) \in S_p$ denote the permutation ranking the $p$ coordinates of $\vv$ in decreasing order i.e. if $\vr := \rank(\vv)$ then $\vv_{\vr_i} \geq \vv_{\vr_j}$ if $i > j$. For any positive vector $\vv \in \bR_+^p$ and any permutation $\vr \in S_p$ is any permutation (not necessarily the one that ranks the coordinates of $\vv$), then we define the \emph{Discounted Cumulative Gain} (DCG) score of the permuatation $\vr$ with respect to the vector $\vv$ as
\[
\dcg(\vr,\vv) := \sum_{j=1}^p\frac{\vv_{\vr_j}}{\log(1+j)}
\]
We also define the maximum such score any ranking can achieve as the following
\[
I(\vv) := \br{\max_{\vr \in S_p}\dcg(\vr,\vv)}^{-1} = \br{\dcg(\rank(\vv),\vv)}^{-1}
\]
Given the above the \emph{normalized} DCG score of any permutation $\vr \in S_p$ with respect to the vector $\vv$ as
\[
\ndcg(\vr,\vv) := I(\vv)\cdot\dcg(\vr,\vv)
\]
Now, given a set of vectors $\vv^1,\ldots,\vv^m \in \bR^p$, a single ``centroid'' ranking that fits all of them the best can be found as $\max_{\vr \in S_p} \sum_{i=1}^m\ndcg(\vr,\vv)$ where
\[
\sum_{i=1}^m\ndcg(\vr,\vv^i) = \sum_{i=1}^mI(\vv^i)\sum_{j=1}^p\frac{\vv^i_{\vr_j}}{\log(1+j)} = \sum_{j=1}^p\frac1{\log(1+j)}\sum_{i=1}^mI(\vv^i)\cdot\vv^i_{\vr_j}
\]
This implies that the best ranking is given by $\arg\max_{\vr \in S_p} \sum_{i=1}^m\ndcg(\vr,\vv) = \rank\br{\sum_{i=1}^mI(\vv^i)\cdot\vv^i_{\vr_j}}$. Algorithm~\ref{algo:ndgc} presents the \defragn clustering technique that  uses the above rule to recompute cluster centroids.

\subsection{Accelerated Clustering}
Given the large size of the datasets used in our experiments, it was important to ensure that the feature clustering time of \defrag did not exceed the savings in training time it offered for various methods, so as to ensure that the total training time of \defrag (clustering + agglomeration + training) still remained smaller than that of the various extreme classification algorithms.

To do so we notice that, given the heavy tailed phenomenon exhibited by most large-scale datasets, clustering time can be reduced significantly by subsampling data points and labels. More specifically, we notice that in the execution of \defragx, performance is not affected even if we represent each feature using only its values in $\tilde n$ most \emph{voluminous data points} where the ``volume'' of a data point $i \in [n]$ is calculated as $\norm{\vx^i}_1$. Similarly, we noticed that the execution of \defragxy is not affected, i.e. clusters are not affected, even if we take into account only the $\tilde L$ most popular labels. Such an effect (of performance not being affected by taking only ``head'' objects) has been observed before \cite{WeiL2018} as well.

Although the above steps do not greatly affect the clustering performance, they do drastically reduce the clustering time of the \defrag variants. Thus, all our \defragx experiments are executed taking only the $\tilde n = 0.25 n$ most voluminous documents. Thus, each feature is represented only as an $\tilde n$-dimensional vector. Similarly, all our \defragxy experiments are executed taking only the $\tilde n = 0.25 n$ most voluminous documents and the $\tilde L = 0.05 L$ most popular labels. Thus, each feature is represented only as an $\tilde L$-dimensional vector.

\subsection{Ensemble Training}
\label{app:method-ensemble}
Several extreme classification methods, such as \parabel, \sleec, \pfrexml construct ensemble classifiers by executing the algorithm independently a few times to obtain different classifiers and then using consensus voting techniques to aggregate the predictions of these different classifiers. Unless otherwise mentioned, we always executed \defrag variants afresh for each member of the ensemble as well.

For example, the \parabel algorithm trains and ensemle of 3 tree-based classifiers, each of which is independently capable of making classifications. We ran \defrag independently 3 times as well, once for each execution of \parabel. Since \defrag uses random initializations in its clustering routines, we found that the cluster partitions were not identical across the three executions. We did find this step to boost accuracy, presumably since it allowed different sets of features to be clustered together in different executions.

\subsection{Cluster Averaging}
We note that although \defrag simply sums up the feature values within a feature cluster, an alternative technique could be to average the feature values within the cluster. Although not a significant step in general, averaging does affect the performance of classifiers like \dismec or \ppds which practice \emph{model trimming} i.e. setting model coordinates which have a value below a certain threshold, to zero in order to save model space. For such classifiers, simple agglomeration may produce features with inflated feature values which result in small model values which in turn get trimmed to zero. Cluster averaging may help in these settings.

For instance, on the EURLex dataset with \dismec, \defrag with cluster averaging can yield more than 1.3\% boost in P1 accuracies. However, we note that this effect is not uniform and that on some datasets, averaging can actually hurt performance. For instance, on the Wiki10 dataset with \dismec, \defrag with cluster averaging actually causes minor dips of upto 0.4\% in P1.

\subsection{\refrag Implementation Details}
As mentioned in the main text, for a given test point $\vx^i \in \bR^d$, the pseudo co-occurrence model $C^{\cF}$ was used to obtain a score $a_l = e^{-\frac\gamma2\cdot\norm{\vx^t - \vxi^l}_2^2}$ indicating the affinity of the test point $\vx^t$ to the label $l$. The base algorithm, say \parabel or \ppds, was used to obtain a separate score, say $b_l$ indicating what did that algorithm think regarding the suitability of label $l$ for the data point $\vx^t$. These two scores were combined in the following manner.
\[
c_l = \alpha\log b_l + (1-\alpha)\log a_l,
\]
and the labels with the highest score, as assigned by $c_l$ were assigned to the data point $\vx^t$. As Figure~\ref{fig:fig2} indicates, this does not appreciably reduce the precision on popular labels as they keep getting predict as before. However, this does greatly increase the algorithm's ability to predict rare labels. We set $\alpha = 0.8$ for all experiments.

\section{Proofs from \S\ref{sec:analysis}}
\label{app:proofs}

We now provide complete proofs for the results mentioned in the main paper starting with the base lemma and the analysis for \defragx followed by the analysis for \defragxy.

\begin{replemma}{lem:approx}
\label{replem:approx}
Given any matrix $Z \in \bR^{d \times p}$ and any $K$-partition $\cF = \bc{F_1,\ldots,F_K}$ of $[d]$, suppose there exist vectors $\vmu^1, \ldots, \vmu^K \in \bR^p$ such that $Z_k = \vone_{d_k}(\vmu^k)^\top + \Delta_k$ where $\vone_{d_k} := (1,\ldots,1)^\top \in \bR^{d_k}$, then for \emph{every} $\vw \in \bR^d$ and every $k \in [K]$, there must exist a real value $c_{\vw,k} \in \bR$ such that
\begin{align*}
(\vw_{F_k} - c_{\vw,k}\cdot\vone_{d_k})^\top Z_kZ_k^\top(\vw_{F_k} - c_{\vw,k}\cdot\vone_{d_k}) \leq \norm{\Delta_k^\top\vw_{F_k}^\perp}_2^2.
\end{align*}%
\end{replemma}
\begin{proof}
Consider a fixed $k$ and for sake of notational simplicity, let us abbreviate $\vu := \vw_{F_k}, V := Z_k, c := c_{\vw,k}$ and $\vone := \vone_{d_k}$. We will show that if $V = \vone(\vmu^k)^\top + \Delta_k$ as promised, then there must exist a $c \in \bR$ such that $(\vu - c\cdot\vone)^\top VV^\top(\vu - c\cdot\vone) \leq \norm{\Delta_k^\top\vu^\perp}_2^2$. To establish this result, first notice that the objective function $f(c) = (\vu - c\cdot\vone)^\top VV^\top(\vu - c\cdot\vone)$ is minimized at a value of $c_{\min} = \frac{\vu^\top VV^\top\vone}{\vone^\top VV^\top\vone}$. At this value, we have
\[
f(c_{\min}) = (\vu^\top VV^\top\vu) - \frac{(\vu^\top VV^\top\vone)^2}{(\vone^\top VV^\top\vone)} = \frac{(\vu^\top VV^\top\vu)\cdot(\vone^\top VV^\top\vone) - (\vu^\top VV^\top\vone)^2}{(\vone^\top VV^\top\vone)}
\]
Concentrating just on the numerator, upon using the orthogonal decomposition $\vu = \vu^\parallel + \vu^\perp$ where $\vone^\top\vu^\perp = 0$ and letting $\vu^\parallel := p \cdot \vone$ for some $p \in \bR$, we get
\begin{align*}
\vu^\top VV^\top\vu &= p^2\cdot\vone^\top VV^\top\vone + 2p\cdot\vone^\top VV^\top\vu^\perp + (\vu^\perp)^\top VV^\top\vu^\perp\\
\vu^\top VV^\top\vone &= p\cdot\vone^\top VV^\top\vone + \vone^\top VV^\top\vu^\perp,
\end{align*}
which, upon inserting in the numerator expression, give us
\begin{align*}
&p^2\cdot(\vone^\top VV^\top\vone)^2 + 2p\cdot(\vone^\top VV^\top\vu^\perp)(\vone^\top VV^\top\vone) + ((\vu^\perp)^\top VV^\top\vu^\perp)(\vone^\top VV^\top\vone)\\
&-p^2\cdot(\vone^\top VV^\top\vone)^2 - 2p\cdot(\vone^\top VV^\top\vu^\perp)(\vone^\top VV^\top\vone) - (\vone^\top VV^\top\vu^\perp)^2\\
={}&((\vu^\perp)^\top VV^\top\vu^\perp)(\vone^\top VV^\top\vone) - (\vone^\top VV^\top\vu^\perp)^2.
\end{align*}
Thus, we have
\[
f(c_{\min}) = \frac{((\vu^\perp)^\top VV^\top\vu^\perp)(\vone^\top VV^\top\vone) - (\vone^\top VV^\top\vu^\perp)^2}{(\vone^\top VV^\top\vone)} = (\vu^\perp)^\top VV^\top\vu^\perp - \frac{(\vone^\top VV^\top\vu^\perp)^2}{(\vone^\top VV^\top\vone)}
\]
Note that the second term in the above expression is always non-negative, albeit not one that is easy to lower-bound. Thus, we simply bound the function value as
\[
f(c_{\min}) \leq (\vu^\perp)^\top VV^\top\vu^\perp = \norm{V^\top\vu^\perp}_2^2
\]
Now, the preconditions of the lemma guarantee us that $V = \vone(\vmu^k)^\top + \Delta_k$ and thus, we have
\[
V^\top\vu^\perp = \vmu^k\vone^\top\vu^\perp + \Delta_k^\top\vu^\perp = \Delta_k^\top\vu^\perp,
\]
where in the last step we have used the fact that $\vone^\top\vu^\perp = 0$ by construction. This finishes the proof.
\end{proof}

\begin{reptheorem}{thm:defragx}
\label{repthm:defragx}
Upon executing \defragx with a feature matrix $X = [\vx^1,\ldots,\vx^n]$ and label matrix $Y = [\vy^1,\ldots,\vy^n]$, suppose we obtain a feature $K$-partition $\cF = [F_1,\ldots,F_K]$ with $\err_k$ denoting the Euclidean clustering error within the $k\nth$ cluster, then for \emph{any} loss function $\ell(\cdot)$ that is $L$-Lipschitz, for \emph{every} linear model $\vw \in \bR^d$, there must exist a model $\tvw \in \bR^K$ such that for all subsets of data points $S \subseteq [n]$,
\[
\sqrt{\sum_{i \in S}\br{\ell(\vw^\top\vx^i; \vy^i) - \ell(\tvw^\top\tvx^i; \vy^i)}^2} \leq L\cdot\sum_{k=1}^K\norm{\vw_{F_k}^\perp}_2\cdot\err_k.
\]
\end{reptheorem}
\begin{proof}
We will describe how to obtain the identity of $\tvw$ in a short while. For now, notice that since all terms in the summation on the left hand side are non-negative, we have for any $S \subseteq [n]$
\[
\sum_{i \in S}\br{\ell(\vw^\top\vx^i; \vy^i) - \ell(\tvw^\top\tvx^i; \vy^i)}^2 \leq \sum_{i=1}^n\br{\ell(\vw^\top\vx^i; \vy^i) - \ell(\tvw^\top\tvx^i; \vy^i)}^2
\]
Using the Lipschitz property of the loss function and using the partition structure gives us
\[
\sum_{i=1}^n\br{\ell(\vw^\top\vx^i; \vy^i) - \ell(\tvw^\top\tvx^i; \vy^i)}^2 \leq L^2\cdot\sum_{i=1}^n\br{\vw^\top\vx^i - \tvw^\top\tvx^i}^2 = L^2\cdot\sum_{i=1}^n\br{\sum_{k=1}^K(\vw_{F_k} - \tvw_k\vone_{d_k})^\top\vx^i_{F_k}}^2
\]
Expanding the right hand side (and ignoring the $L^2$ term for now) gives us
\begin{align*}
&\sum_{i=1}^n\br{\sum_{k=1}^K\br{(\vw_{F_k} - \tvw_k\vone_{d_k})^\top\vx^i_{F_k}}^2 + \sum_{k \neq l}\br{(\vw_{F_k} - \tvw_k\vone_{d_k})^\top\vx^i_{F_k}}\br{(\vw_{F_l} - \tvw_l\vone_{d_l})^\top\vx^i_{F_l}}}\\
&= {\sum_{k=1}^K\sum_{i=1}^n\br{(\vw_{F_k} - \tvw_k\vone_{d_k})^\top\vx^i_{F_k}}^2 + \sum_{k \neq l}\sum_{i=1}^n\br{(\vw_{F_k} - \tvw_k\vone_{d_k})^\top\vx^i_{F_k}}\br{(\vw_{F_l} - \tvw_l\vone_{d_l})^\top\vx^i_{F_l}}}\\
&= {\sum_{k=1}^K(\vw_{F_k} - \tvw_k\vone_{d_k})^\top\bs{\sum_{i=1}^n\vx^i_{F_k}(\vx^i_{F_k})^\top}(\vw_{F_k} - \tvw_k\vone_{d_k}) + \sum_{k \neq l}(\vw_{F_k} - \tvw_k\vone_{d_k})^\top\bs{\sum_{i=1}^n\vx^i_{F_k}(\vx^i_{F_l})^\top}(\vw_{F_l} - \tvw_l\vone_{d_l})}
\end{align*}
Now, \defragx represents each feature $j \in [d]$ as an $n$ dimensional vector, $\vp^j = [\vx^1_j,\ldots,\vx^n_j]^\top \in \bR^n$ and then performs clustering on these vectors to obtain a $K$-partition of the feature set $[d]$, say $\cF = \bc{F_1,\ldots,F_k}$. Say the centroids of these $K$ clusters are $\vmu^1,\ldots,\vmu^K \in \bR^n$. Consider one of these clusters, say the $k\nth$ cluster $F_k$ with, say $d_k$ features in that cluster. If we denote $P_k = [\vp^j]_{j \in F_k}^\top \in \bR^{d_k \times n}$, then the following observations are immediate
\begin{enumerate}
	\item For any $\vv \in \bR^{d_k}$, we have $\vv^\top P_k = \vv^\top\sum_{i=1}^n\vx^i_{F_k}$
	\item The Euclidean clustering error within the $k\nth$ cluster is given by
	\[
		\err_k^2 := \sum_{j \in F_k} \norm{\vp^j - \vmu^k}_2^2 = \norm{P - \vone_{d_k}(\vmu^k)^\top}_F^2
	\]
	\item If we denote $P_k = \vone_{d_k}(\vmu^k)^\top + \Delta_k$, then we must have $\norm{\Delta_k}_F = \norm{P_k - \vone_{d_k}(\vmu^k)^\top}_F = \err_k$
	\item Lemma~\ref{lem:approx}, when applied with $p = n$ show us that with the above notation, we have, for some real value $c_{\vw,k}$
\[
\norm{P_k^\top(\vw_{F_k} - c_{\vw,k}\cdot\vone_{d_k})} \leq \norm{\Delta_k^\top\vw_{F_k}^\perp}_2 \leq \norm{\Delta_k^\top}_2\norm{\vw_{F_k}^\perp}_2 = \norm{\Delta_k}_2\norm{\vw_{F_k}^\perp}_2 \leq \norm{\Delta_k}_F\norm{\vw_{F_k}^\perp}_2 = \err_k\cdot\norm{\vw_{F_k}^\perp}_2,
\]
where $\norm{\Delta_k}_2$ denotes the spectral norm (i.e. the largest singular value) of the matrix $\Delta_k$ and $\norm{\Delta_k}_F$ denotes the Frobenius norm of the same matrix.

Note that although the inequality $\norm{\Delta_k}_2 \leq \norm{\Delta_k}_F$ may seem loose at first sight, notice that since $\rank(\Delta_k) \leq d_k$, we also have $\norm{\Delta_k}_F \leq \sqrt{d_k}\cdot\norm{\Delta_k}_2$ and since $d_k$ is typically a small number, this shows that this inequality is not too loose.
\end{enumerate}
The above observations allow us to construct $\tvw$. All we need to do is, for every cluster $k \in [K]$, consult Theorem~\ref{thm:defragx} to obtain a constant $c_{\vw,k}$ that offers the guarantees of the theorem. We then simply concatenate these constants to construct $\tvw = [c_{\vw,k}]_{k \in [K]} \in \bR^K$. Using the first point in the list of observations, we can now also rewrite the last expression in our ongoing calculations as
\[
{\sum_{k=1}^K(\vw_{F_k} - \tvw_k\vone_{d_k})^\top P_kP_k^\top(\vw_{F_k} - \tvw_k\vone_{d_k}) + \sum_{k \neq l}(\vw_{F_k} - \tvw_k\vone_{d_k})^\top P_kP_l^\top (\vw_{F_l} - \tvw_l\vone_{d_l})}
\]
Applying the Cauchy-Schwartz inequality and the rest of the observations allows us to upper-bound the above expression as
\begin{align*}
&\sum_{k=1}^K\norm{P_k^\top(\vw_{F_k} - \tvw_k\vone_{d_k})}_2^2 + \sum_{k \neq l}\norm{P_k^\top(\vw_{F_k} - \tvw_k\vone_{d_k})}_2\norm{P_l^\top (\vw_{F_l} - \tvw_l\vone_{d_l})}_2\\
&\leq \sum_{k=1}^K\err^2_k\cdot\norm{\vw_{F_k}^\perp}^2_2 + \sum_{k \neq l}\err_k\cdot\norm{\vw_{F_k}^\perp}_2\cdot\err_l\cdot\norm{\vw_{F_l}^\perp}_2 = \br{\sum_{k=1}^K\err_k\cdot\norm{\vw_{F_k}^\perp}_2}^2
\end{align*}
This finishes the proof upon putting back the $L^2$ factor we had omitted earlier and taking square roots on both sides.
\end{proof}

\subsection{Performance Guarantees for \defragxy}
\label{app:proofs-defragxy}
We now prove a similar result for \defragxy, specifically that \defragxy accurately preserves the performance of label clustering methods such as \parabel. Since \parabel performs ``spherical k-means'' which relies on scores of the form $(\vc^+ - \vc^-)^\top\vz^l$ to decide cluster assignments, the result below assures that these scores remain preserved on the agglomerated data as well. We stress that the above result can be readily adapted to usual k-means which relies on scores of the form $\norm{\vc^+ - \vz^l}_2^2 - \norm{\vc^- - \vz^l}_2^2$.

\begin{theorem}
\label{thm:defragxy}
Upon executing \defragxy with feature matrix $X$ and label matrix $Y$, suppose we obtain a feature $K$-partition $\cF$ with Euclidean clustering errors $\bc{\err_k}_{k \in [K]}$. Suppose $\vz^l = \sum_i\vy^i_l\vx^i$ and $\tvz^l = \sum_i\vy^i_l\tvx^i$ are the original and agglomerated label features for $l \in [L]$. Then for every $2$-means clustering of the original label features, with cluster centroids $\vc^+$ and $\vc^-$, there must exist centroids $\tvc^+$ and $\tvc^-$ that offer similar clustering error over the agglomerated features. Specifically, we have, for all subsets of labels $T \subseteq [L]$,
\[
\sqrt{\sum_{l \in [T]}\br{(\vc^+ - \vc^-)^\top\vz^l - (\tvc^+ - \tvc^-)^\top\tvz^l}^2} \leq \sum_{k=1}^K\err_k\cdot\norm{\vc^+_{F_k} - \vc^-_{F_k}}_2.
\]
\end{theorem}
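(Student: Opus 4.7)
The plan is to mirror the proof of Theorem~\ref{thm:defragx} closely, exploiting the fact that \defragxy uses \emph{label-weighted} feature representations. Setting $\vd := \vc^+ - \vc^-$, I will build a vector $\tilde\vd \in \bR^K$ coordinate-wise and then read off $\tvc^+, \tvc^-$ from any decomposition of it. Using the $K$-partition structure of $\cF$, the quantity to control can be rewritten as
\begin{align*}
\vd^\top \vz^l - \tilde\vd^\top \tvz^l = \sum_{k=1}^K (\vd_{F_k} - \tilde\vd_k \vone_{d_k})^\top \vz^l_{F_k}.
\end{align*}
Since every summand on the LHS of the inequality in the theorem is non-negative, I can first drop the restriction to $T \subseteq [L]$ and instead bound the full sum over $l \in [L]$.

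The crucial observation is that the representative matrix produced by \defragxy for cluster $F_k$ is precisely the matrix $Q_k \in \bR^{d_k \times L}$ with entries $(Q_k)_{j,l} = \vq^j_l = \vz^l_j$. Thus $\sum_{l=1}^L \br{(\vd_{F_k} - \tilde\vd_k\vone_{d_k})^\top \vz^l_{F_k}}^2 = \norm{Q_k^\top(\vd_{F_k} - \tilde\vd_k\vone_{d_k})}_2^2$, and the \defragxy Euclidean clustering error satisfies $\norm{Q_k - \vone_{d_k}(\vmu^k)^\top}_F = \err_k$ for the cluster centroid $\vmu^k$. Invoking Lemma~\ref{lem:approx} with $Z = Q_k$, $p = L$, $\vw = \vd$, and using $\norm{\Delta_k}_2 \leq \norm{\Delta_k}_F = \err_k$, produces a scalar $c_{\vd,k}$ with $\norm{Q_k^\top(\vd_{F_k} - c_{\vd,k}\vone_{d_k})}_2 \leq \err_k \cdot \norm{\vd_{F_k}^\perp}_2$. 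Setting $\tilde\vd_k := c_{\vd,k}$ delivers a per-cluster squared bound summed across labels.

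To aggregate across clusters I would expand $\sum_l \br{\sum_k \alpha_{k,l}}^2$, where $\alpha_{k,l} := (\vd_{F_k} - \tilde\vd_k\vone_{d_k})^\top \vz^l_{F_k}$, and bound each cross term $\sum_l \alpha_{k,l}\alpha_{k',l}$ via Cauchy-Schwarz in $l$, precisely as in the proof of Theorem~\ref{thm:defragx}. This yields $\sum_l \br{\vd^\top \vz^l - \tilde\vd^\top \tvz^l}^2 \leq \br{\sum_k \err_k \norm{\vd_{F_k}^\perp}_2}^2$, after which the theorem follows upon taking square roots and using the trivial bound $\norm{\vd_{F_k}^\perp}_2 \leq \norm{\vd_{F_k}}_2 = \norm{\vc^+_{F_k} - \vc^-_{F_k}}_2$. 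The only genuinely new step compared to Theorem~\ref{thm:defragx} is recognizing that the \defragxy representation makes $Q_k$ act as exactly the right ``data matrix'' to connect per-label inner products to the clustering error; once this identification is in place, the remaining manipulations are the same bookkeeping already carried out in the \defragx analysis.
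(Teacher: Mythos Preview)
Your proposal is correct and follows essentially the same approach as the paper: set $\vdelta = \vc^+ - \vc^-$, identify the \defragxy cluster matrix $Q_k$ with columns $\vz^l_{F_k}$, apply Lemma~\ref{lem:approx} with $p = L$ to obtain per-cluster scalars, and then handle the cross terms via Cauchy--Schwarz exactly as in Theorem~\ref{thm:defragx}. In fact you make explicit the final step $\norm{\vd_{F_k}^\perp}_2 \leq \norm{\vd_{F_k}}_2 = \norm{\vc^+_{F_k} - \vc^-_{F_k}}_2$, which the paper's proof leaves implicit when passing from the bound in terms of $\norm{\vdelta_{F_k}^\perp}_2$ to the theorem statement.
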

\begin{proof}
We will establish the identity of the modified centroids $\tvc^+,\tvc^-$ in a short while. For now, notice that as before, by positivity of all terms in the summation, we have for any $T \subseteq [L]$
\[
\sum_{l \in [T]}\br{(\vc^+ - \vc^-)^\top\vz^l - (\tvc^+ - \tvc^-)^\top\tvz^l}^2 \leq \sum_{l=1}^L\br{(\vc^+ - \vc^-)^\top\vz^l - (\tvc^+ - \tvc^-)^\top\tvz^l}^2
\]
For sake of notational simplicity, let us denote $\vdelta=  \vc^+ - \vc^-$ and $\tvdelta=  \tvc^+ - \tvc^-$. This gives us
\[
\sum_{l=1}^L\br{\vdelta^\top\vz^l - \tvdelta^\top\tvz^l}^2 = \sum_{l=1}^L\br{\sum_{k=1}^K(\vdelta_{F_k} - \tvdelta_k\vone_{d_k})^\top\vz^l_{F_k}}^2
\]
Expanding the right hand side and expanding similarly as before gives us
\[
\sum_{k=1}^K(\vdelta_{F_k} - \tvdelta_k\vone_{d_k})^\top\bs{\sum_{l=1}^L\vz^l_{F_k}(\vz^l_{F_k})^\top}(\vdelta_{F_k} - \tvdelta_k\vone_{d_k}) + \sum_{k \neq j}(\vdelta_{F_k} - \tvdelta_k\vone_{d_k})^\top\bs{\sum_{l=1}^L\vz^l_{F_k}(\vz^l_{F_j})^\top}(\vdelta_{F_j} - \tvdelta_j\vone_{d_j})
\]
Now, \defragxy represents each feature $j \in [d]$ as an $L$ dimensional vector, $\vq^j = \sum_{i=1}^n\vx^i_j\vy^i \in \bR^L$ and then performs clustering on these vectors to obtain a $K$-partition of the feature set $[d]$, say $\cF = \bc{F_1,\ldots,F_k}$. Say the centroids of these $K$ clusters are $\vnu^1,\ldots,\vnu^K \in \bR^n$. Consider one of these clusters, say the $k\nth$ cluster $F_k$ with, say $d_k$ features in that cluster. If we denote $Q_k = [\vq^j]_{j \in F_k}^\top \in \bR^{d_k \times L}$, then the following observations are immediate
\begin{enumerate}
	\item For any $\vv \in \bR^{d_k}$, we have $\vv^\top Q_k = \vv^\top\sum_{l=1}^L\vz^l_{F_k}$
	\item The Euclidean clustering error within the $k\nth$ cluster is given by
	\[
		\err_k^2 := \sum_{j \in F_k} \norm{\vq^j - \vnu^k}_2^2 = \norm{Q - \vone_{d_k}(\vnu^k)^\top}_F^2
	\]
	\item If we denote $Q_k = \vone_{d_k}(\vnu^k)^\top + \Delta_k$, then we must have $\norm{\Delta_k}_F = \norm{Q_k - \vone_{d_k}(\vnu^k)^\top}_F = \err_k$
	\item Lemma~\ref{lem:approx}, when applied with $p = L$ show us that with the above notation, we have for some real value $d_{\vdelta,k}$
\[
\norm{Q_k^\top(\vdelta_{F_k} - d_{\vdelta,k}\cdot\vone_{d_k})} \leq \norm{\Delta_k^\top\vdelta_{F_k}^\perp}_2 \leq \norm{\Delta_k^\top}_2\norm{\vdelta_{F_k}^\perp}_2 = \norm{\Delta_k}_2\norm{\vdelta_{F_k}^\perp}_2 \leq \norm{\Delta_k}_F\norm{\vdelta_{F_k}^\perp}_2 = \err_k\cdot\norm{\vdelta_{F_k}^\perp}_2,
\]
where $\norm{\Delta_k}_2$ denotes the spectral norm (i.e. the largest singular value) of the matrix $\Delta_k$ and $\norm{\Delta_k}_F$ denotes the Frobenius norm of the same matrix.

Again, note that the inequality $\norm{\Delta_k}_2 \leq \norm{\Delta_k}_F$ is not extremely loose since, $\rank(\Delta_k) \leq d_k$ gives us $\norm{\Delta_k}_F \leq \sqrt{d_k}\cdot\norm{\Delta_k}_2$ and since $d_k$ is typically a small number, this shows that this inequality is not too loose.

\end{enumerate}
The above observations allow us to construct $\tvdelta$. All we need to do is, for every cluster $k \in [K]$, consult Theorem~\ref{thm:defragx} to obtain a constant $d_{\vdelta,k}$ that offers the guarantees of the theorem. We then simply concatenate these constants to construct $\tvdelta = [d_{\vdelta,k}]_{k \in [K]} \in \bR^K$. Once we have $\tvdelta$, we may construct $\tvc^+$ and $\tvc^-$ as any two vectors such that $\tvc^+ - \tvc^- = \tvdelta$. Moreover, using the first point in the list of observations, we can now also rewrite the last expression in our ongoing calculations as
\[
{\sum_{k=1}^K(\vdelta_{F_k} - \tvdelta_k\vone_{d_k})^\top Q_kQ_k^\top(\vdelta_{F_k} - \tvdelta_k\vone_{d_k}) + \sum_{k \neq j}(\vdelta_{F_k} - \tvdelta_k\vone_{d_k})^\top Q_kQ_j^\top (\vdelta_{F_j} - \tvdelta_j\vone_{d_j})}
\]
Applying the Cauchy-Schwartz inequality and the rest of the observations allows us to upper-bound the above expression as
\begin{align*}
&\sum_{k=1}^K\norm{Q_k^\top(\vdelta_{F_k} - \tvdelta_k\vone_{d_k})}_2^2 + \sum_{k \neq j}\norm{Q_k^\top(\vdelta_{F_k} - \tvdelta_k\vone_{d_k})}_2\norm{Q_j^\top (\vdelta_{F_j} - \tvdelta_j\vone_{d_j})}_2\\
&\leq \sum_{k=1}^K\err^2_k\cdot\norm{\vdelta_{F_k}^\perp}^2_2 + \sum_{k \neq j}\err_k\cdot\norm{\vdelta_{F_k}^\perp}_2\cdot\err_j\cdot\norm{\vdelta_{F_j}^\perp}_2 = \br{\sum_{k=1}^K\err_k\cdot\norm{\vdelta_{F_k}^\perp}_2}^2
\end{align*}
This finishes the proof upon taking square roots on both sides.
\end{proof}

\section{Experimental Details from \S\ref{sec:exps}}
\label{app:exps}
We provide additional details about experimental settings, as well as additional experimental results in this appendix.

\begin{table}[t]
\centering
\newcommand{\statfont}[1]{\normalsize{#1}}
\renewcommand\multirowsetup{\centering}
\begin{adjustbox}{max width=\linewidth}
\begin{tabular}{lrrrrrr}
\toprule
\multicolumn{1}{c}{\multirow{2}{*}{Data set}} & \multicolumn{1}{c}{Train} & \multicolumn{1}{c}{Features} & \multicolumn{1}{c}{Labels} & \multicolumn{1}{c}{Test} & \multicolumn{1}{c}{} & \multicolumn{1}{c}{}  \\
        &  \multicolumn{1}{c}{$n$}    &  \multicolumn{1}{c}{$d$}      & \multicolumn{1}{c}{$L$}     & \multicolumn{1}{c}{}    &     \multicolumn{1}{c}{$\hat d$}     & \multicolumn{1}{c}{$\hat L$}             \\
\midrule
EURLex-4K & 15539 & 5000 & 3993 & 3809 & 236.8 & 5.31\\
AmazonCat-13K & 1186239 & 203882 & 13330 & 306782 & 71.2 & 5.04\\
Wiki10-31K & 14146 & 101938 & 30938 & 6616 & 673.4 & 18.64\\
Delicious-200K & 196606 & 782585 & 205443 & 100095 & 301.2 & 75.54\\
WikiLSHTC-325K & 1778351 & 1617899 & 325056 & 587084 & 42.1 & 3.19\\
Wikipedia-500K & 1813391 & 2381304 & 501070 & 783743 & 385.3 & 4.77\\
Amazon-670K & 490449 & 135909 & 670091 & 153025 & 75.7 & 5.45\\
Amazon-3M & 1717899 & 337067 & 2812281 & 742507 & 49.7 & 36.17\\
\bottomrule
\end{tabular}
\end{adjustbox}
\caption{Dataset Statistics}
\label{tab:statistics}
\end{table}

\subsection{Clustering Metrics}
\label{app:clustering}
We used several clustering metrics to evaluate the various clustering algorithms compared in Table~\ref{tab:clustering}. Those metrics are defined formally below. Note that in that experiment, to be fair, all algorithms were asked to output the same number of clusters $d/8$ where $d$ is the original dimensionality of the data features.

The notion of mutual information loss (LMI) is defined in \cite{DhillonMK2003} for multi-class classification problems and adapted here for our setting of multilabel classification problems. LMI measures the loss in predictive capability due to clustering of features. The LMI score is a positive real value between $0$ and $1$. Thus, a smaller value of the LMI metric is better. It is notable that in Table~\ref{tab:clustering}, \defrag achieved the lowest LMI score of 0.37 followed by \scbc which achieved an LMI score of 0.39. All other algorithms acheived an LMI score of around 0.50 or more.

\begin{definition}[Mutual Information Loss (LMI)]
Given a $K$-clustering $\cF$ of $[d]$ features, let $X = [\vx^1,\ldots,\vx^n] \in \bR^{d\times n}$ denote the original feature matrix, $\tilde X = [\tvx^1,\ldots,\tvx^n] \in \bR^{K\times n}$ denote the matrix of agglomerated features and $Y = [\vy^1,\ldots,\vy^n] \in \bc{0,1}^{L \times n}$ denote the label matrix. Then the normalized loss in mutual information is defined as
\[
\text{LMI}(\cF) = \frac{I(Y;X) - I(Y;\tilde X)}{I(Y;X)},
\]
where $I(Y;X)$ is the mutual information between the labels and the original features and $I(Y;\tilde X)$ is the mutual information between the agglomerated features. These terms are defined below for the multi-label learning setting.
\end{definition}

\newcommand{\vPi}{\vec{\Pi}}

\begin{definition}[Mutual Information (MI)]
Given a feature matrix $Z = [\vz^1,\ldots,\vz^n] \in \bR^{p \times n}$ with each data point having $p$ features and $Y = [\vy^1,\ldots,\vy^n] \in \bc{0,1}^{L \times n}$ denote the label matrix. Then the mutual information between the labels and the features is defined as
\[
I(Y;Z) = \trace(\vec{\Pi}^\top(\ln(\vec{\Pi})- \ln(\vpi_p)\vone_L^\top - \vone_p\ln(\vpi_L)^\top)) 
\]
where $\vPi = \frac{ZY^\top}{\vone_p^\top ZY^\top\vone_L} \in \Delta_{(p \times L) - 1}$ is the joint feature-label probability matrix, $\ln(\vPi) \in \bR^{p \times L}$ denotes the matrix obtained by taking entry wise logarithm of the $\vPi$ matrix, $\vpi_p = \frac{Y\vone_n}{\norm{Y\vone_n}_1} \in \Delta_{p-1}$ is the feature probability vector, $\vpi_L = \frac{Z\vone_n}{\norm{Z\vone_n}_1} \in \Delta_{L-1}$ is the label probability vector, $\ln(\vpi_p)$ and $\ln(\vpi_L)$ are obtained by taking element-wise logarithm on the vectors $\vpi_p$ and $\vpi_L$ respectively, and $\vone_k = (1,\ldots,1)^\top \in \bR^k$ for any $k > 0$. The above definition assumes that the feature matrix $Z$ contains no negative values. However, this is true for bag-of-words representations used in extreme classification problems.
\end{definition}

The notion of balance factor is a rather unforgiving metric measuring how balanced are the clusters output by a clustering technique. It is frequently seen that imbalanced feature clusters can lead to reduction in classification performance. The balance factor is positive real number greater than or equal to one. A smaller balance factor indicates a more balanced clustering with a balance factor of $1$ denoting perfectly balanced clusters. It is notable that in Table~\ref{tab:clustering}, \defrag achieved the lowest balance factor of 1.11 and 1.08.
\begin{definition}[Balance Factor]
Given a $K$-clustering $\cF = [F_1,\ldots,F_K]$ of $[d]$ features where we define $d_k := \abs{F_k}$, its balance factor is defined as
\[
\text{Balance}(\cF) = \frac{\max_k d_k}{\min_k d_k}
\]
If we have $\min_k d_k = 0$, then the balance factor is defined to be infinity.
\end{definition}

The normalized entropy is a more gentle metric of balance. It takes a value between $0$ and $1$ and a larger value indicates a more balanced clustering and a smaller value indicates the presence of a few very concentrated clusters. It is notable that in Table~\ref{tab:clustering}, \defrag achieved the highest entropy values of 0.99.
\begin{definition}[Normalized Entropy]
Given a $K$-clustering $\cF = [F_1,\ldots,F_K]$ of $[d]$ features where we define $d_k := \abs{F_k}$, its normalized entropy is defined as
\[
\text{NormEnt}(\cF) = -\frac1{\ln d}\sum_{k=1}^K\frac{d_k}d\ln\frac{d_k}d,
\]
where we define $0 \ln 0 = 0$ for sake of avoiding singular points.
\end{definition}

\subsection{Additional Experimental Results}
\label{app:exps-supp}

In this section we report additional results of the reductions \defrag variants offer on training and prediction time and model size along with the effect on prediction accuracy. We also report here results on using the nDCG-based splitting method \defragn, as well as additional results on the performance of the \fiat algorithm on settings with missing features.

\begin{figure}[t]
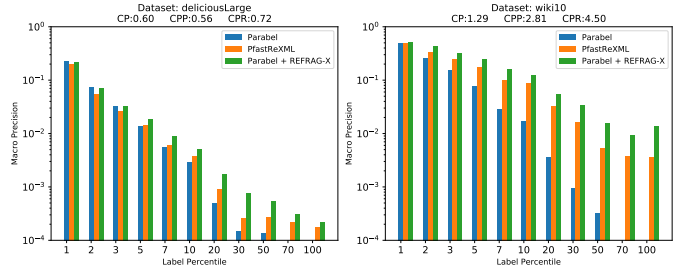
%
\centering
\begin{subfigure}[b]{0.245\textwidth}
	\includegraphics[width=\textwidth]{rerank/deliciousLarge_test_RERANK_WC_BK_XT_DP25_NC3_PRBL_DP100_8.png-crop.pdf}%
\end{subfigure}
\begin{subfigure}[b]{0.245\textwidth}
	\includegraphics[width=\textwidth]{rerank/wiki10_test_RERANK_WC_BK_XT_DP25_NC3_PRBL_DP100_8.png-crop.pdf}%
\end{subfigure}
\caption{\refrag can offer far superior performance on rare labels as indicated on the Delicious (left) and Wiki10 (right) datasets. The x-axis records the percentile range of labels. For example, the first bar considers labels in the top 0-1 percentile, the second bar considers labels in the 1-2 percentile. Subsequent bars consider labels in the 2-3, 3-5, 5-7, 7-10, 10-20, 20-30, 30-50, 50-70 and 70-100 percentiles respectively. The y-axis records the macro precision of labels within a percentile range. This is calculated by taking an average of label-wise precision scores. Note that this method gives equal weightage to each label, irrespective of its popularity.\\
At the top of each figure, P indicates the overall precision@1 of the methods, R indicates the overall recall@1 of the methods and C indicates the overall coverage@1 of the methods. Coverage@1 is calculated as the fraction of labels that are correctly predicted as the top-ranked label in at least one test document. Note that \defrag more than triples the coverage of the \parabel method on the Wiki10 dataset.\\
Note that on both datasets, \refrag continues to offer meaningful predictions even as we consider very rare labels. The default \parabel algorithm stops offering predictions after the 7-10 percentile on Delicious and the 20-30 percentile on Wiki10. However, \refrag continues to offer predictions uptil the 20-30 percentile on Delicious and 70-100 percentile on Wiki10.}%
\label{fig:app-rerank}%
\end{figure}

\begin{figure}[t]%
\centering
\begin{subfigure}[b]{0.245\textwidth}
	\includegraphics[width=\textwidth]{noise/NOISE_BK_XT_DP25_NC3_PRBL_DP100_wiki10_8_overwritten_P1.png-crop.pdf}%
\end{subfigure}
\begin{subfigure}[b]{0.245\textwidth}
	\includegraphics[width=\textwidth]{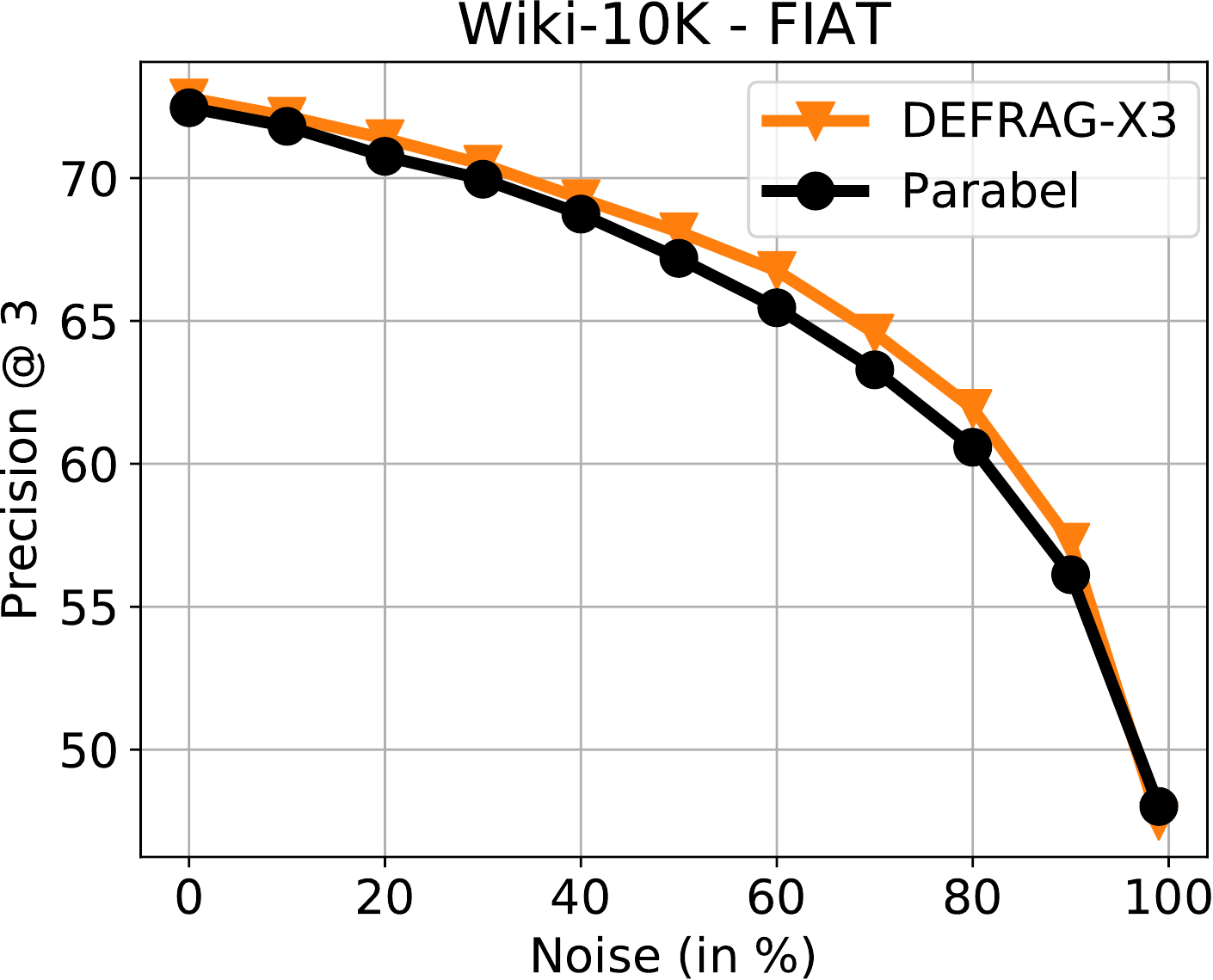}%
\end{subfigure}
\begin{subfigure}[b]{0.245\textwidth}
	\includegraphics[width=\textwidth]{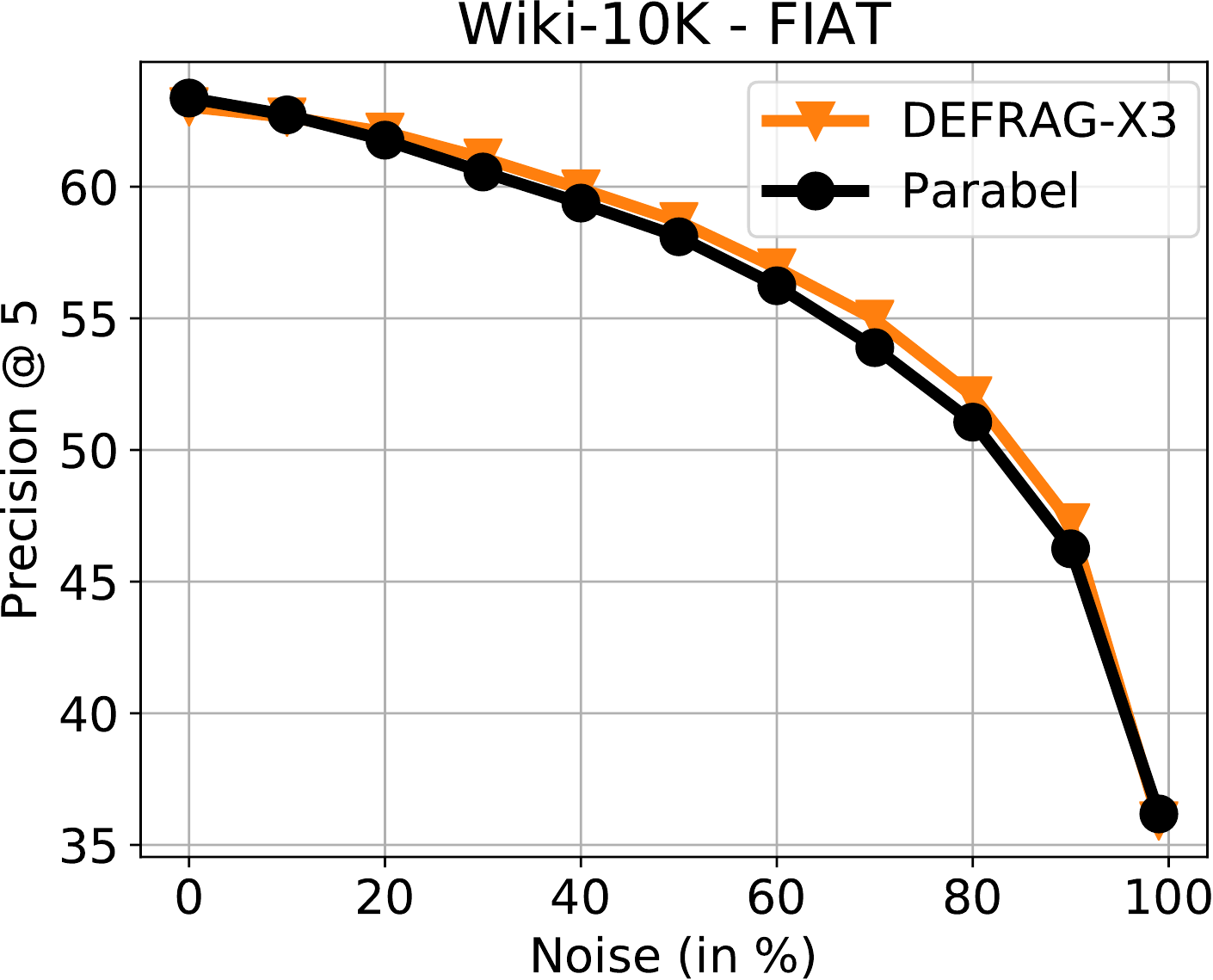}%
\end{subfigure}\\
\begin{subfigure}[b]{0.245\textwidth}
	\includegraphics[width=\textwidth]{noise/NOISE_BK_XT_DP25_NC3_PRBL_DP100_deliciousLarge_8_overwritten_P1.png-crop.pdf}%
\end{subfigure}
\begin{subfigure}[b]{0.245\textwidth}
	\includegraphics[width=\textwidth]{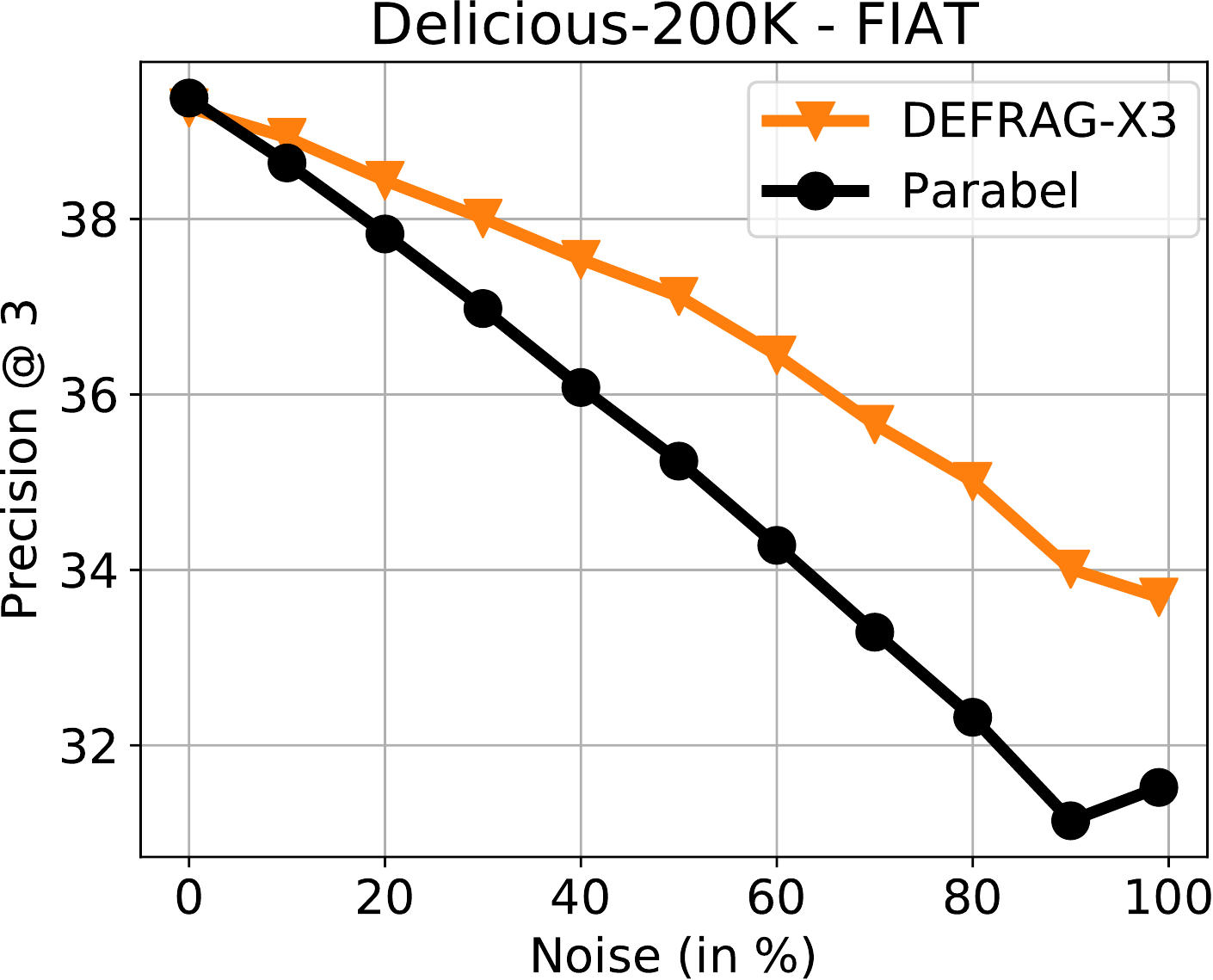}%
\end{subfigure}
\begin{subfigure}[b]{0.245\textwidth}
	\includegraphics[width=\textwidth]{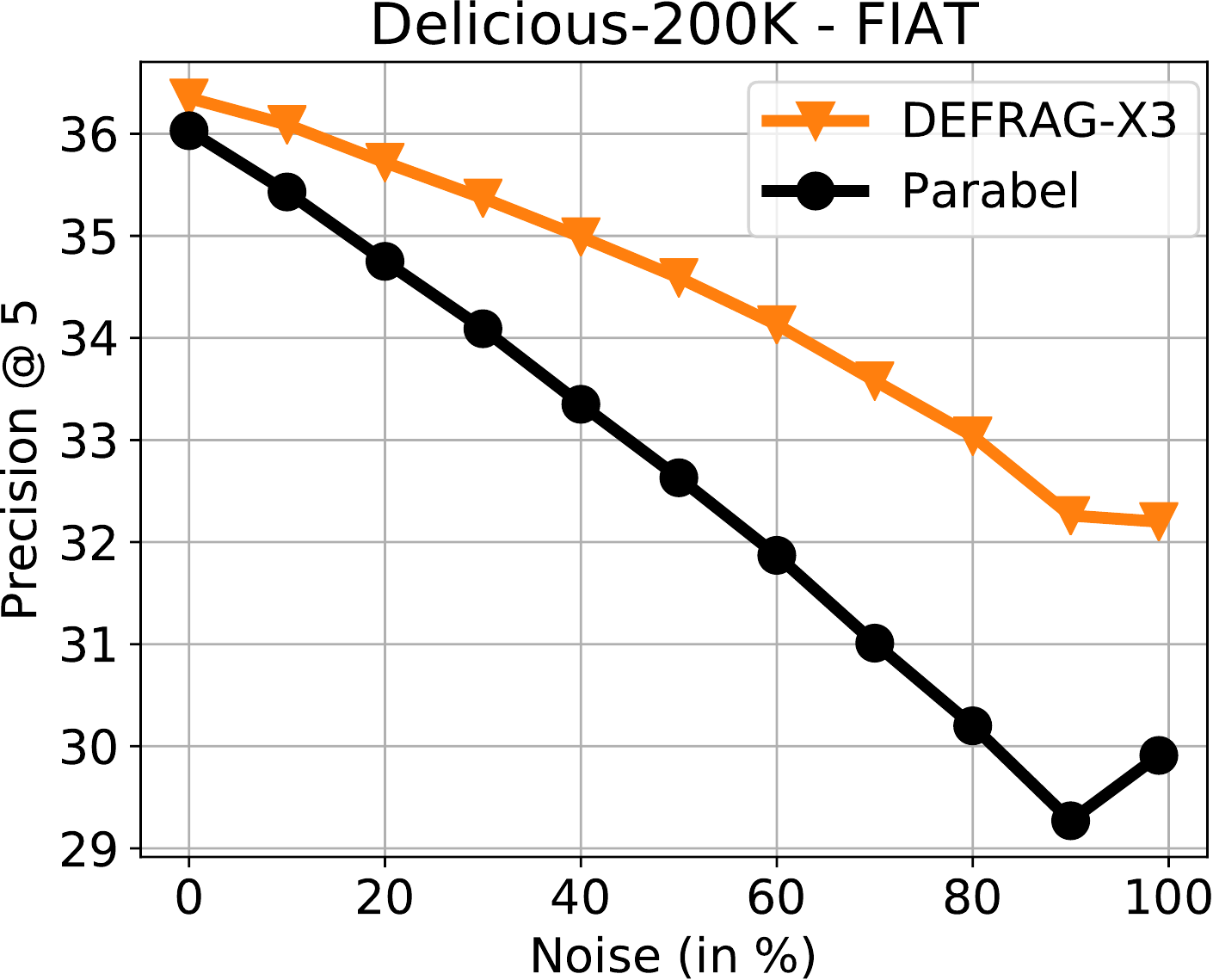}%
\end{subfigure}
\caption{The figure shows the performance of \fiat on two different datasets Wiki10 (first row) and Delicious-200K (second row) with respect to precision at 1, 3 and 5. A certain fraction (as indicated in the x-axis) of features that were present in each of the test data points was randomly removed. Thus, if a test data point had $\hat d$ of the $d$ features with non-zero values in its feature representation, then at 50\% noise level, $\hat d/2$ of these features were removed and their value set to zero. \fiat is able to offer better resilience to removal of features from data points, especially when noise levels go higher. At low noise levels, \fiat is competitive with \parabel but as noise levels grow, the performance gap between the two methods expands.}%
\label{fig:app-noise}%
\end{figure}

\begin{figure}[t]%
\centering
\begin{subfigure}[b]{0.22\textwidth}
	\includegraphics[width=\textwidth]{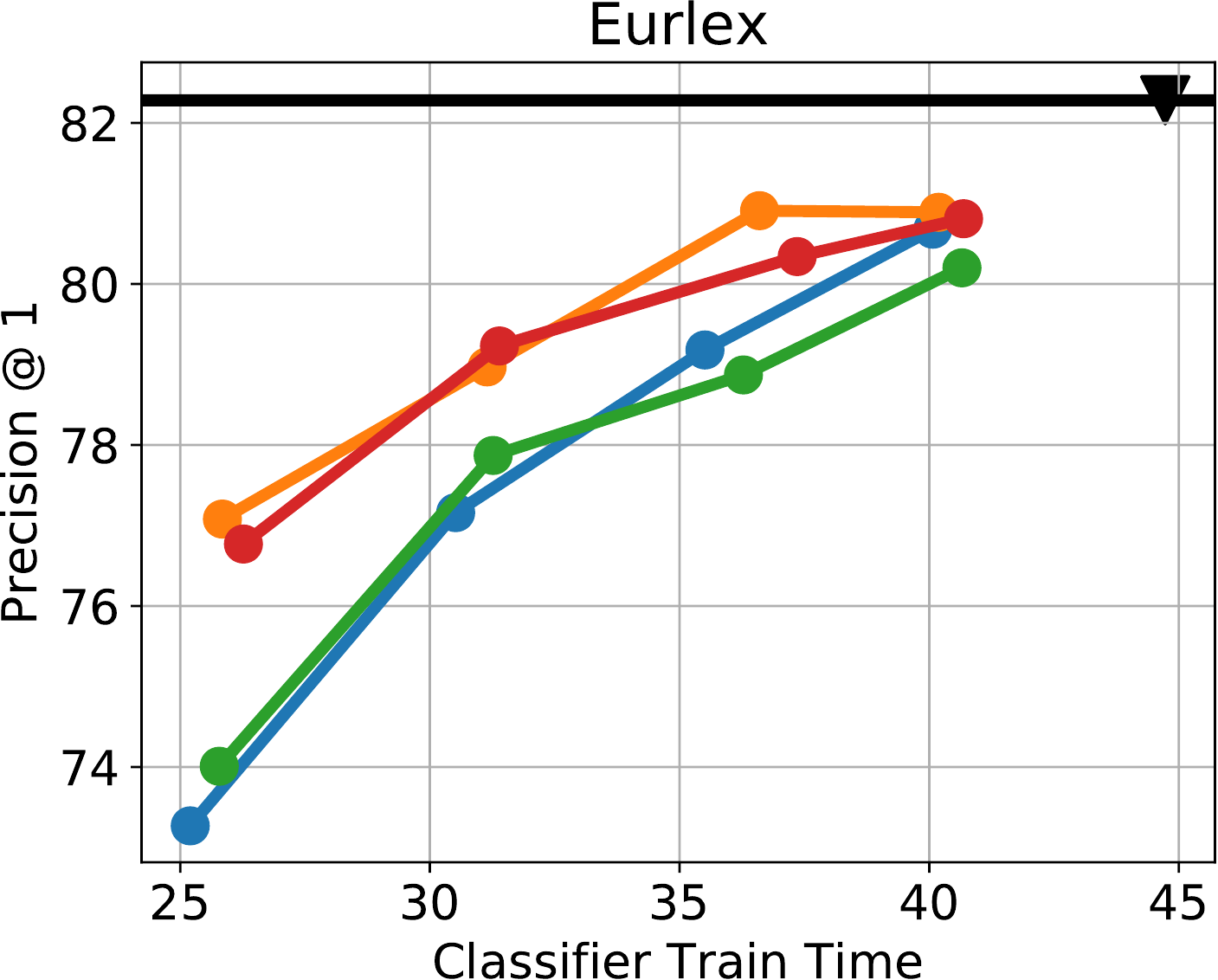}%
\end{subfigure}
\begin{subfigure}[b]{0.22\textwidth}
	\includegraphics[width=\textwidth]{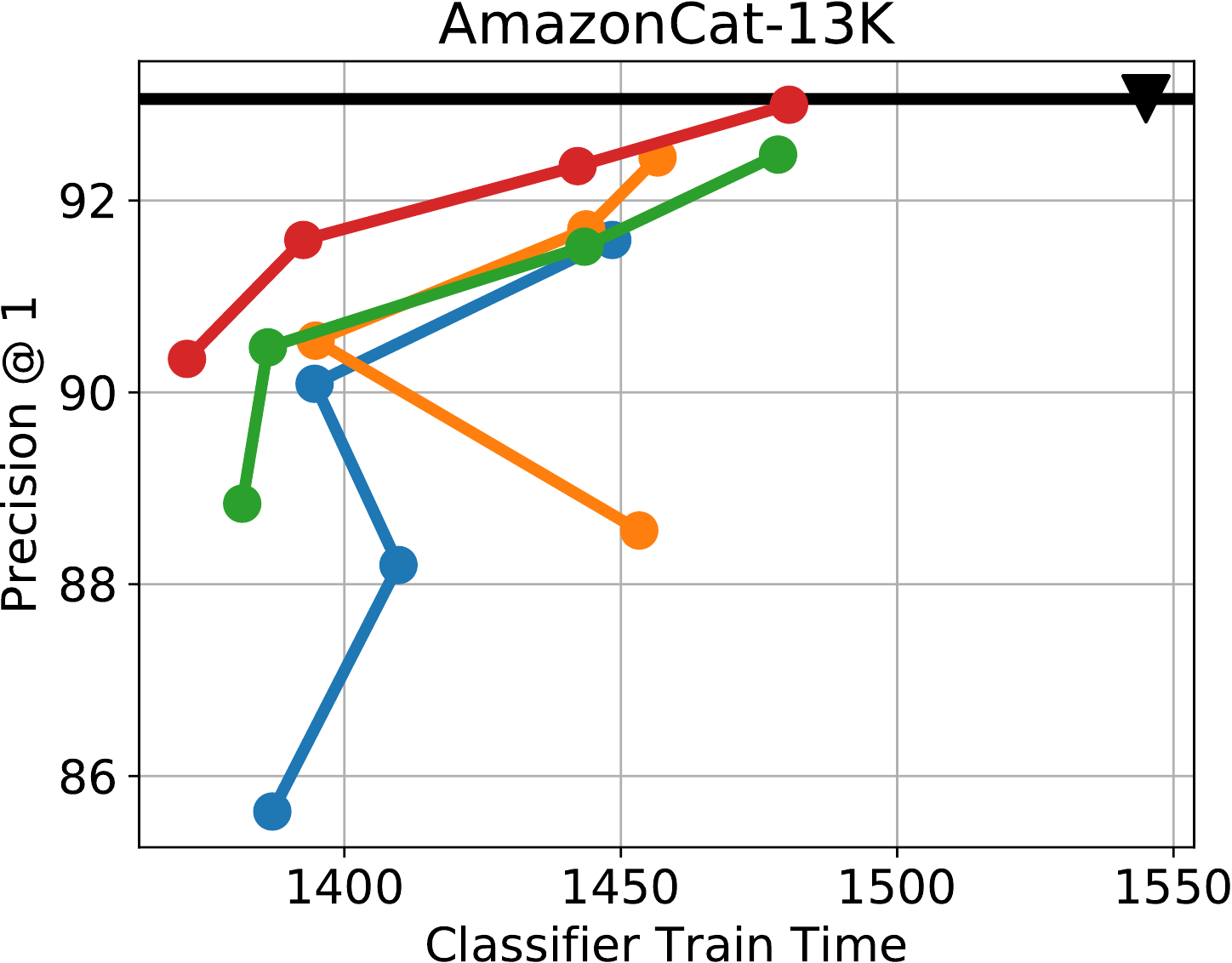}%
\end{subfigure}
\begin{subfigure}[b]{0.22\textwidth}
	\includegraphics[width=\textwidth]{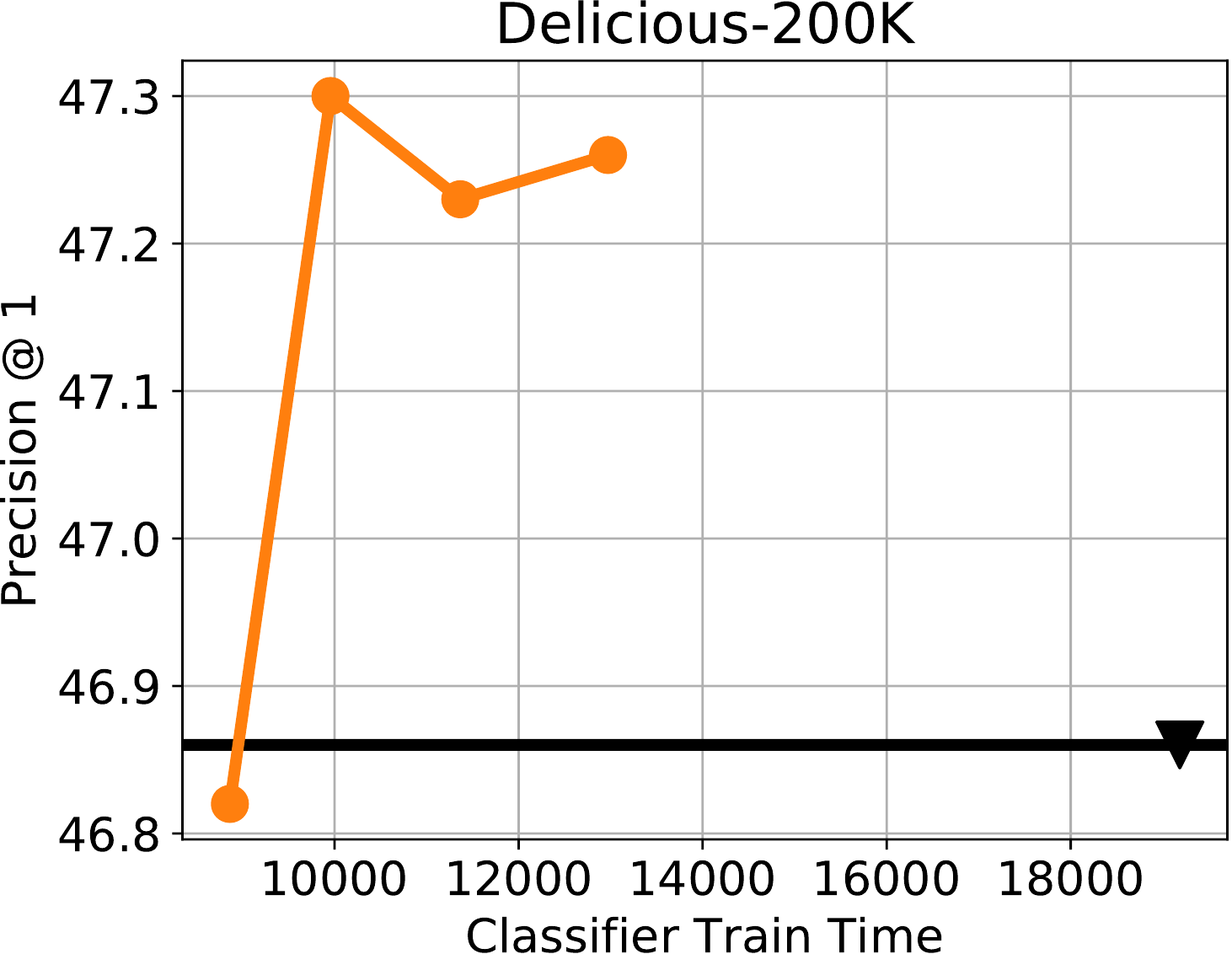}%
\end{subfigure}
\begin{subfigure}[b]{0.32\textwidth}
	\includegraphics[width=\textwidth]{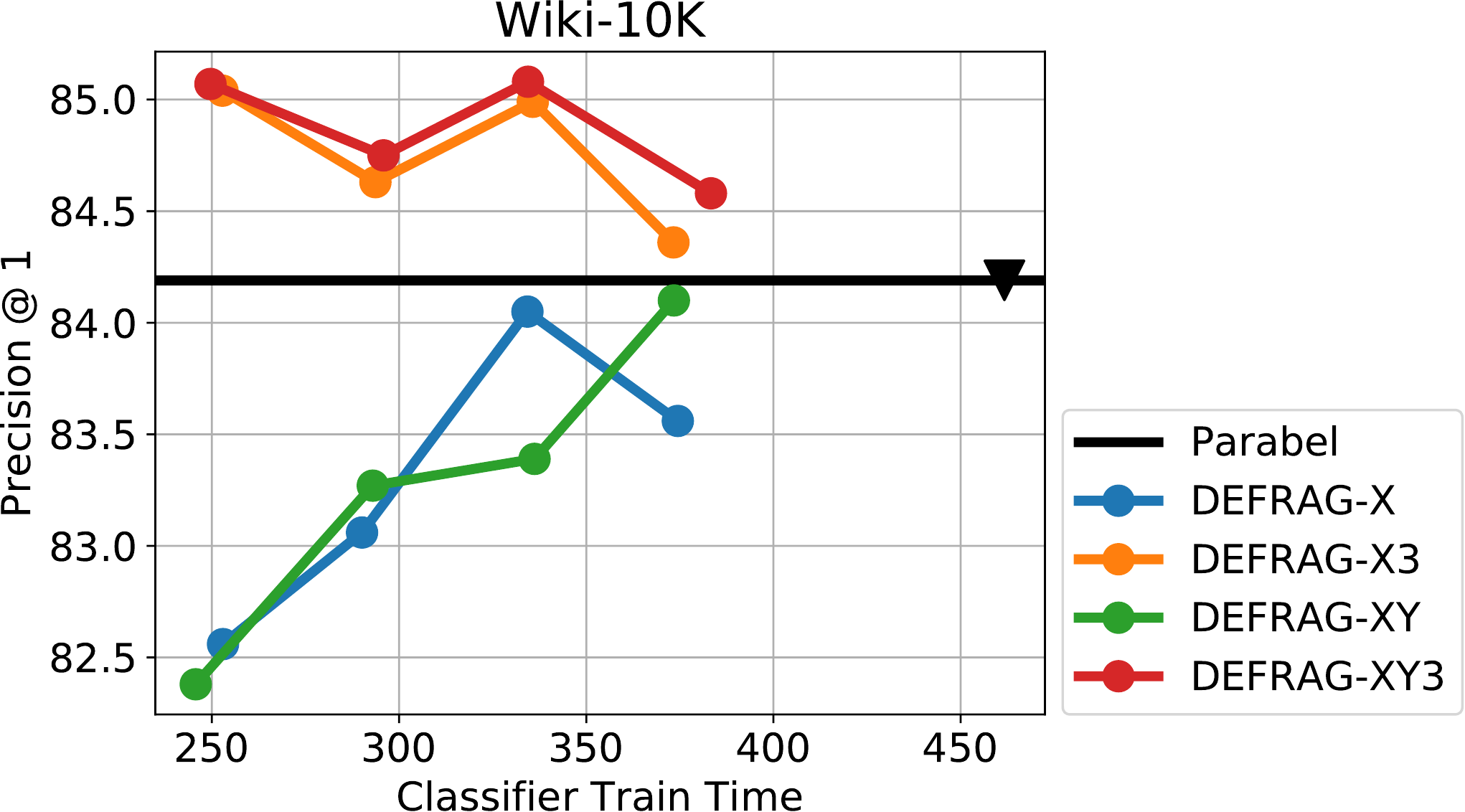}%
\end{subfigure}
\begin{subfigure}[b]{0.245\textwidth}
	\includegraphics[width=\textwidth]{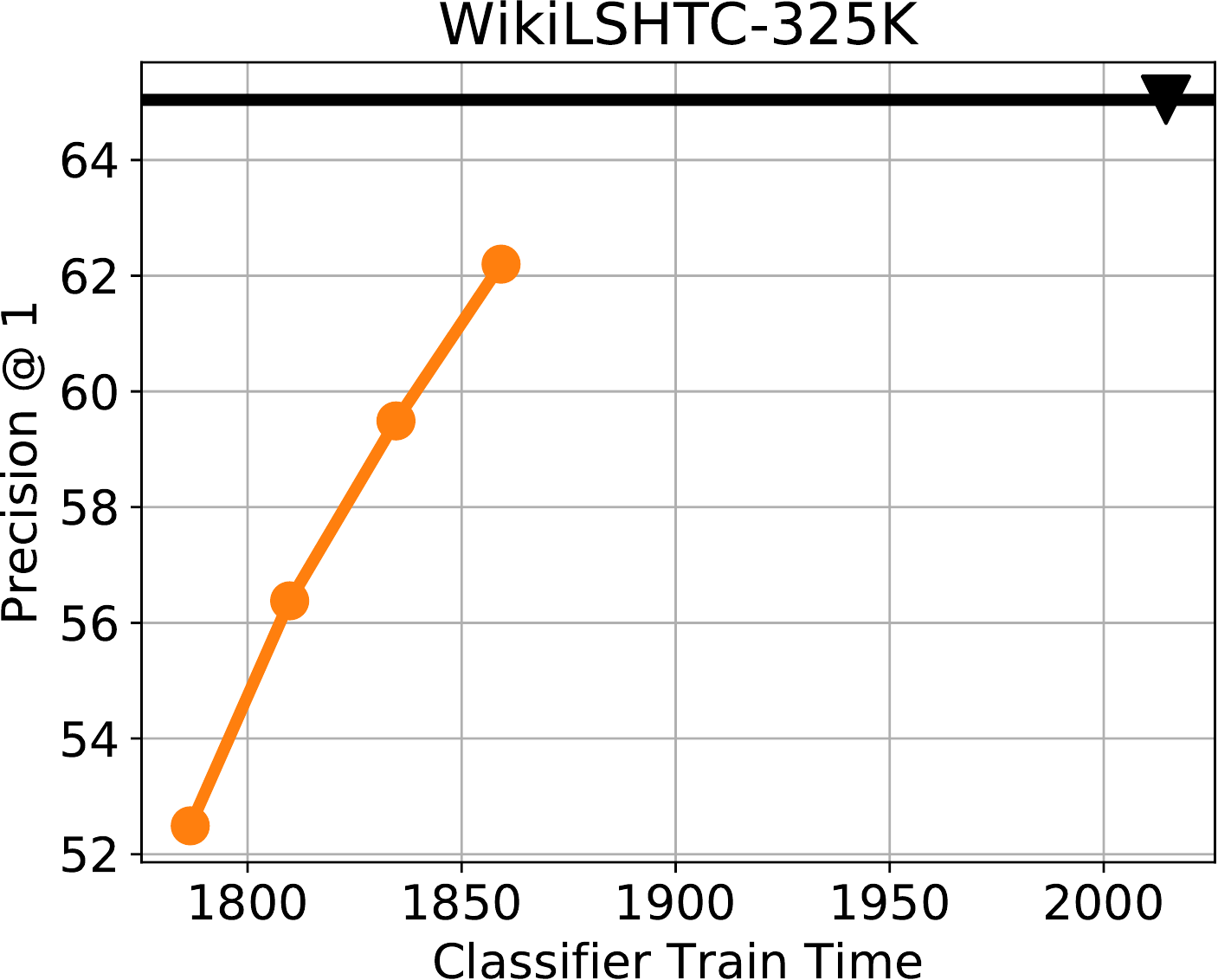}%
\end{subfigure}
\begin{subfigure}[b]{0.245\textwidth}
	\includegraphics[width=\textwidth]{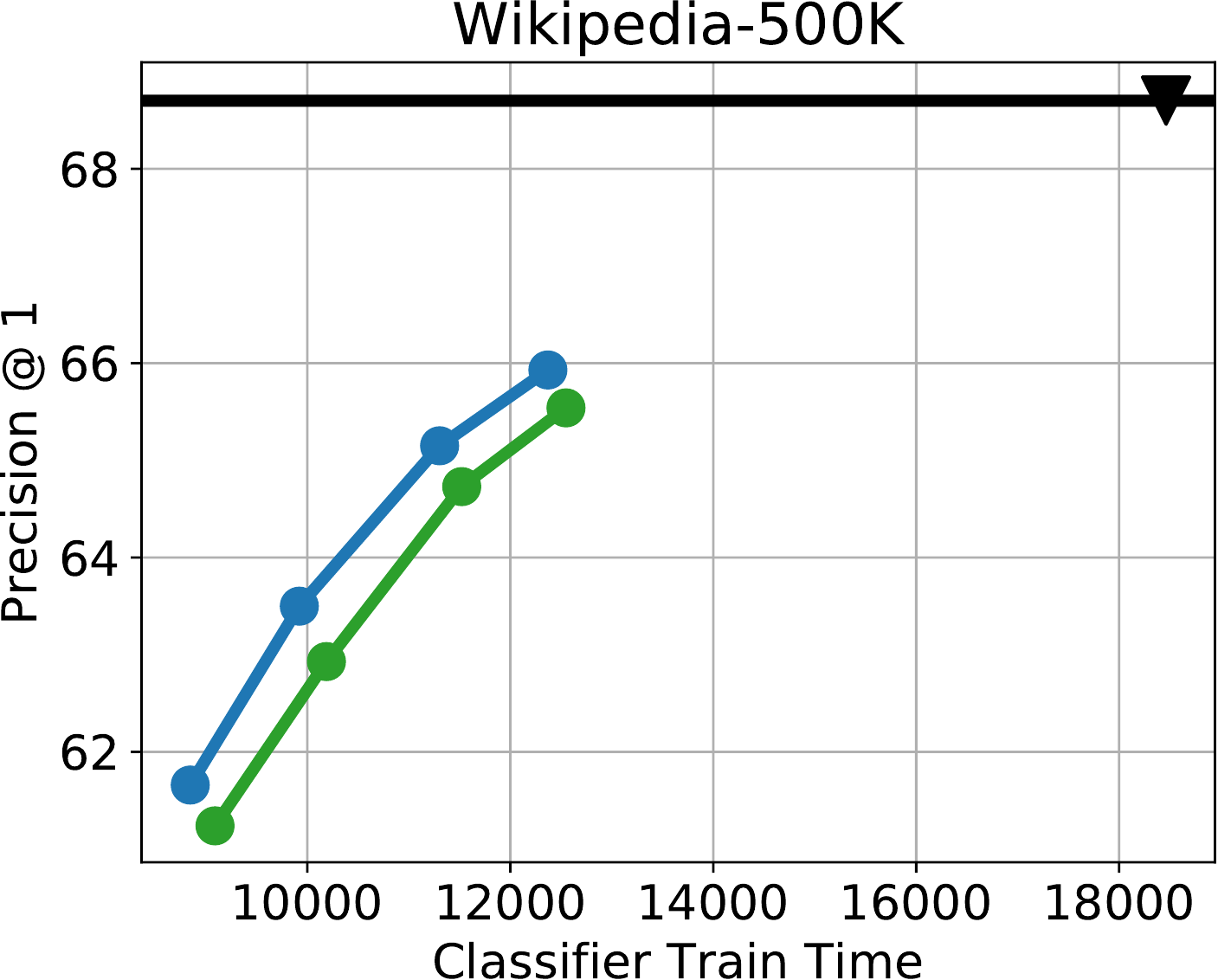}%
\end{subfigure}
\begin{subfigure}[b]{0.245\textwidth}
	\includegraphics[width=\textwidth]{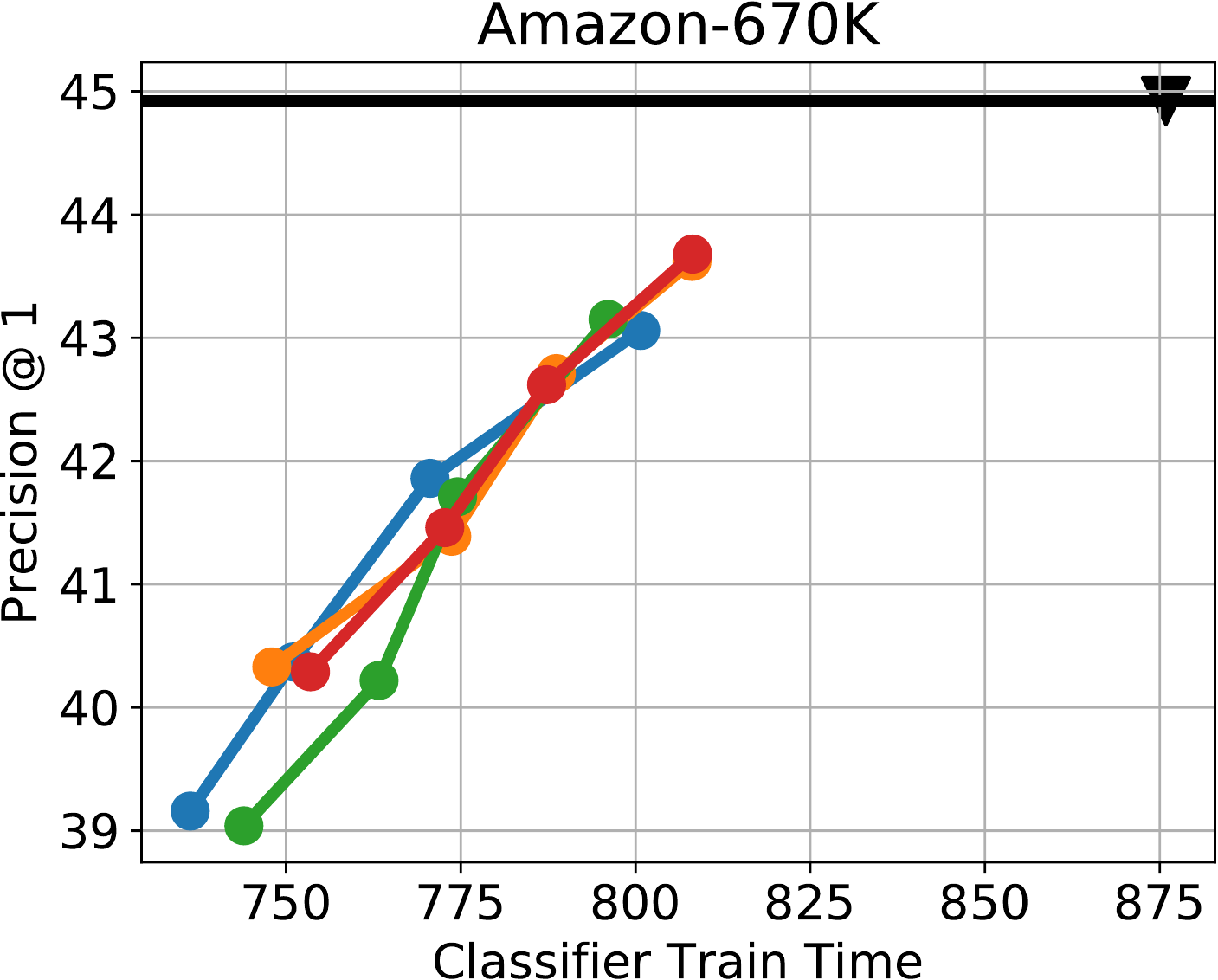}%
\end{subfigure}
\begin{subfigure}[b]{0.245\textwidth}
	\includegraphics[width=\textwidth]{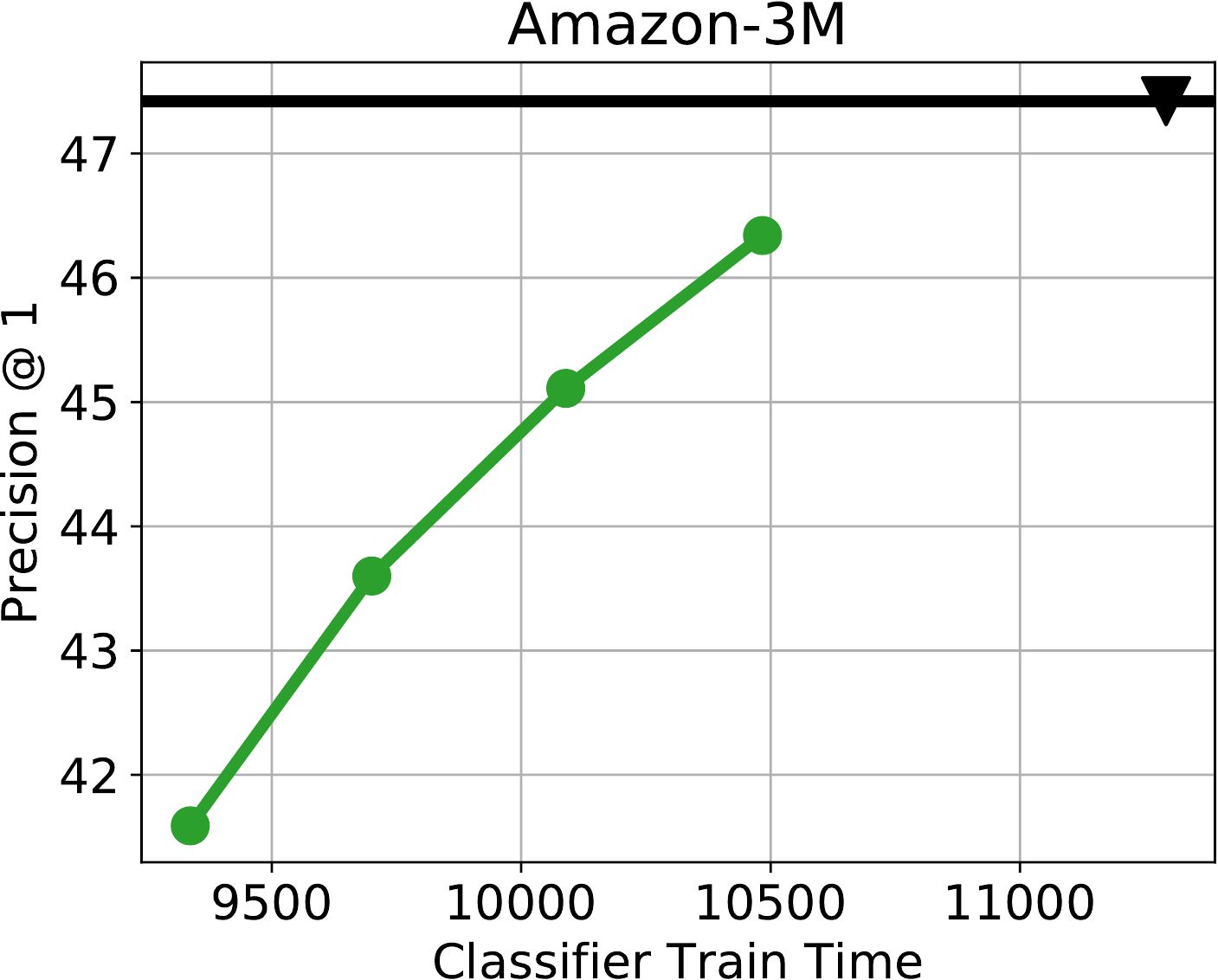}%
\end{subfigure}
\caption{Effect of \defrag variants on reducing classifier training time. \defrag variants (\defragx, \defragxy, ensemble, no-ensemble) were executed with varying number of clusters by setting the minimum leaf size parameter (see Algorithm~\ref{algo:defrag}). The different markers in the figures correspond to $d/32, d/16, d/8, d/4$ balanced clusters. The black line shows \parabel's default performance with the black triangle marking its classifier training time. Aggressive clustering leads to much faster training times and smaller model sizes but also cause a drop in performance. \defragx{}3 and \defragxy{}3 refer to an ensemble of 3 independent realizations of \defrag. \defrag variants are able to bring about significant reductions in the training time of classifiers, for instance Delicious (more than 35\% reduction), Wikipedia-500K (more than 33\% reduction), Wiki10 (more than 20\% reduction).}%
\label{fig:app-compare-class-time}%
\end{figure}

\begin{figure}[t]%
\centering
\begin{subfigure}[b]{0.22\textwidth}
	\includegraphics[width=\textwidth]{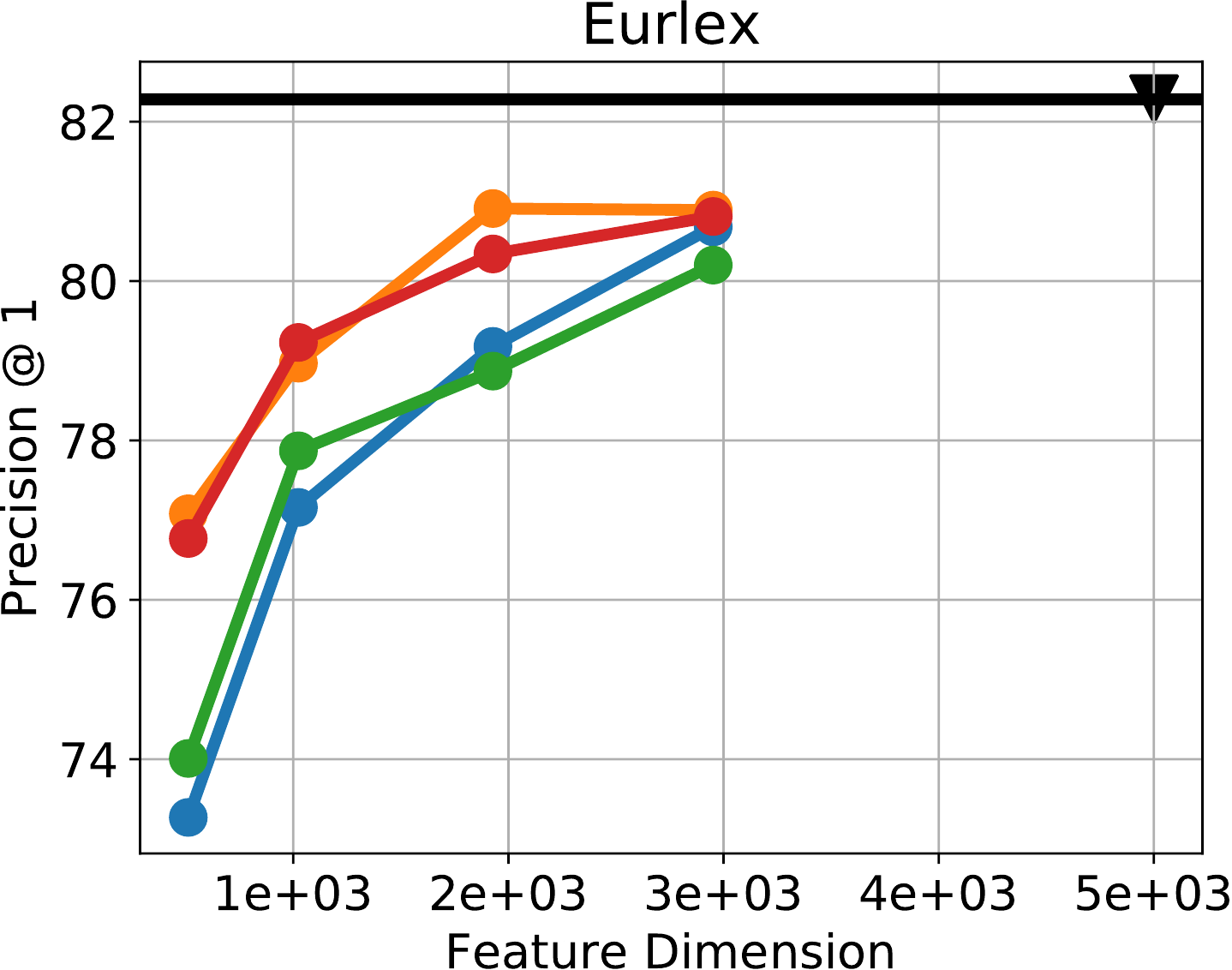}%
\end{subfigure}
\begin{subfigure}[b]{0.22\textwidth}
	\includegraphics[width=\textwidth]{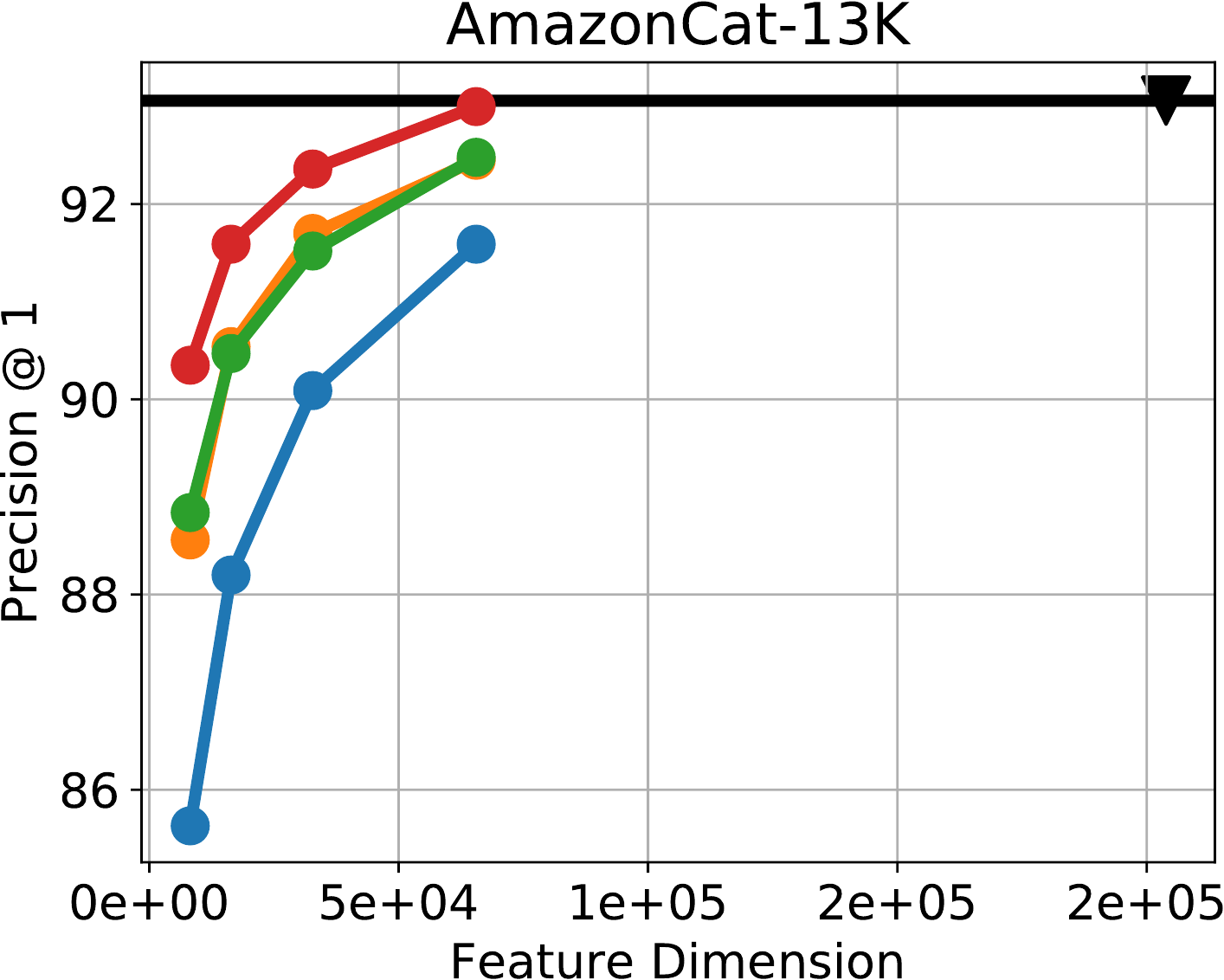}%
\end{subfigure}
\begin{subfigure}[b]{0.22\textwidth}
	\includegraphics[width=\textwidth]{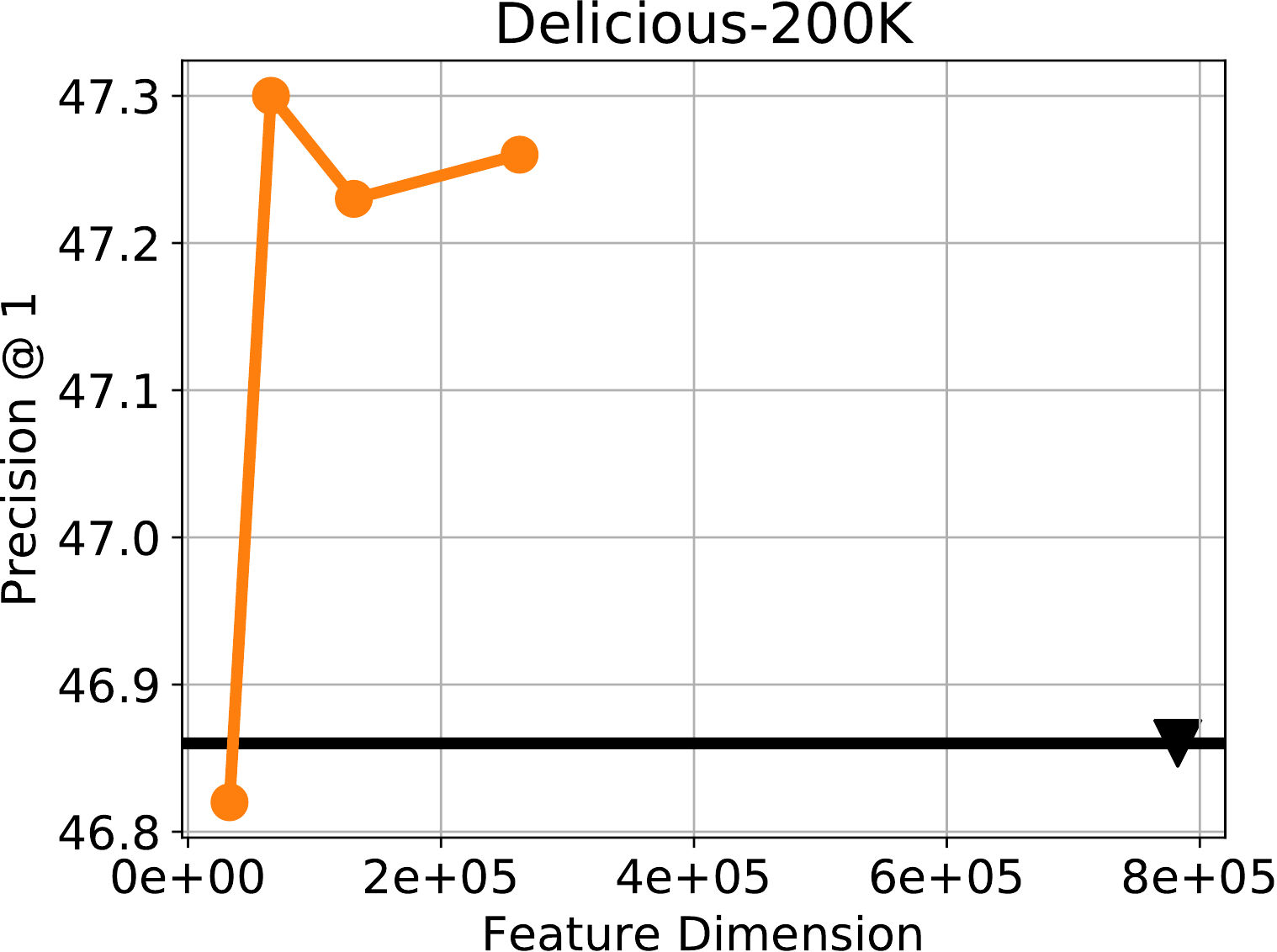}%
\end{subfigure}
\begin{subfigure}[b]{0.32\textwidth}
	\includegraphics[width=\textwidth]{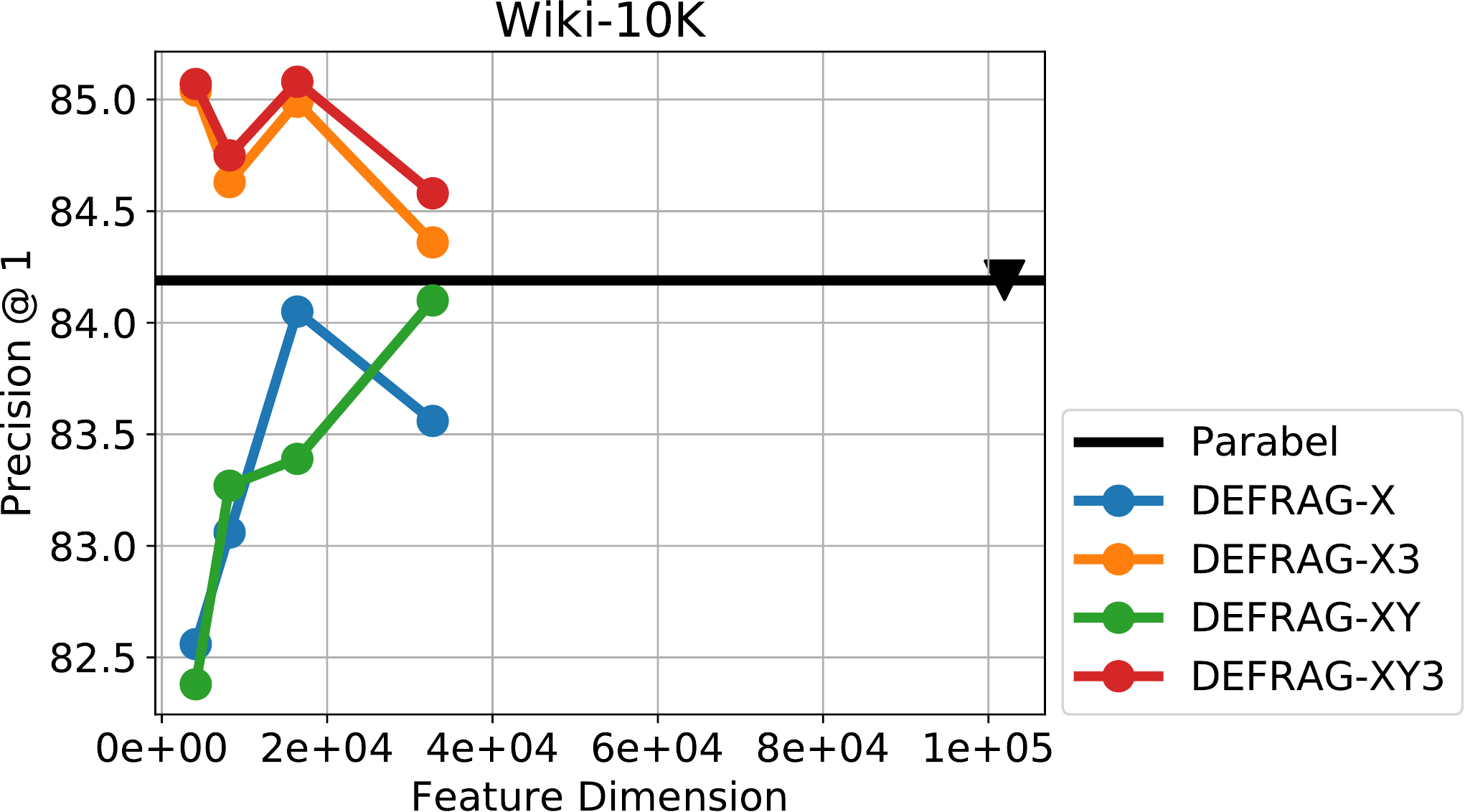}%
\end{subfigure}
\begin{subfigure}[b]{0.245\textwidth}
	\includegraphics[width=\textwidth]{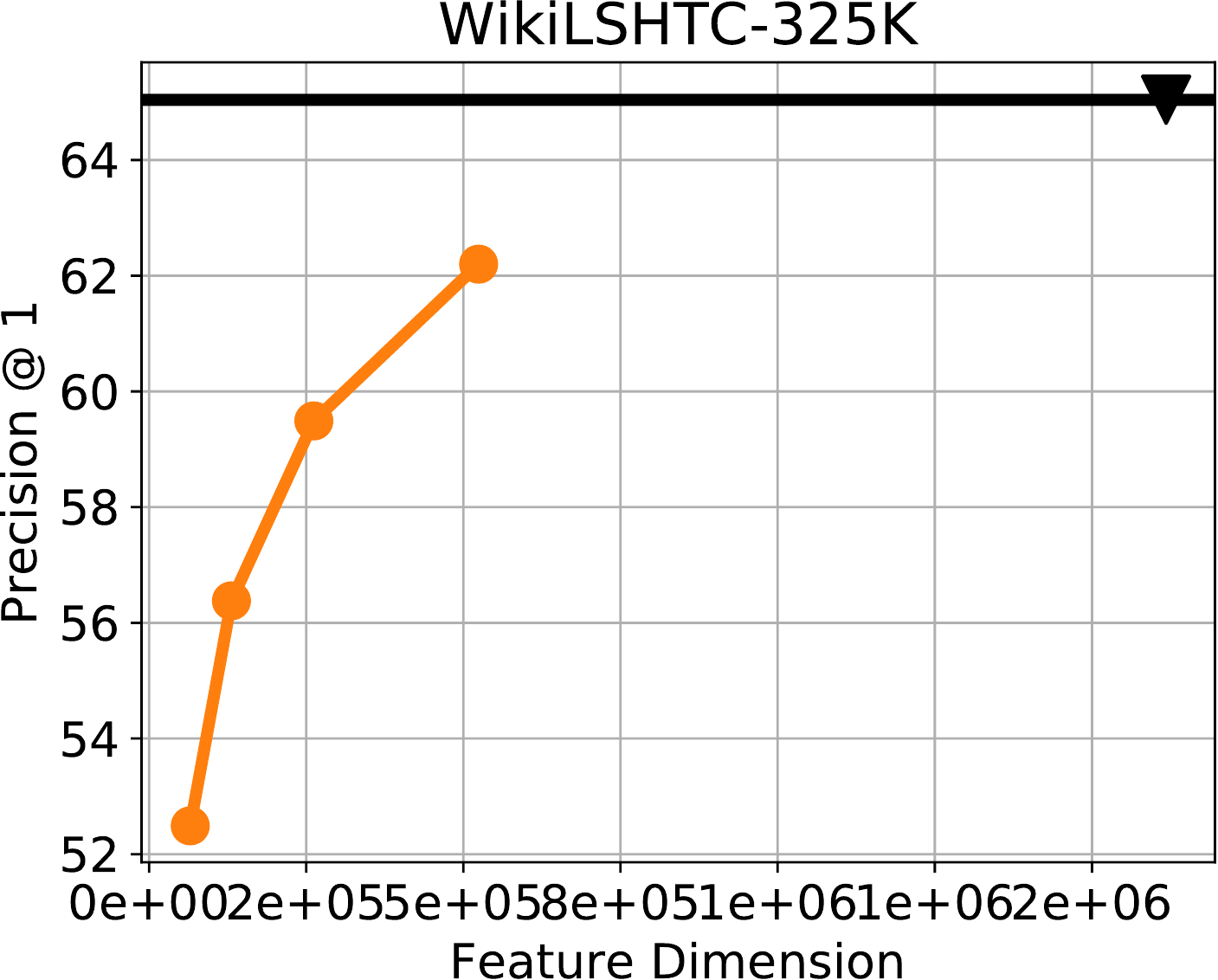}%
\end{subfigure}
\begin{subfigure}[b]{0.245\textwidth}
	\includegraphics[width=\textwidth]{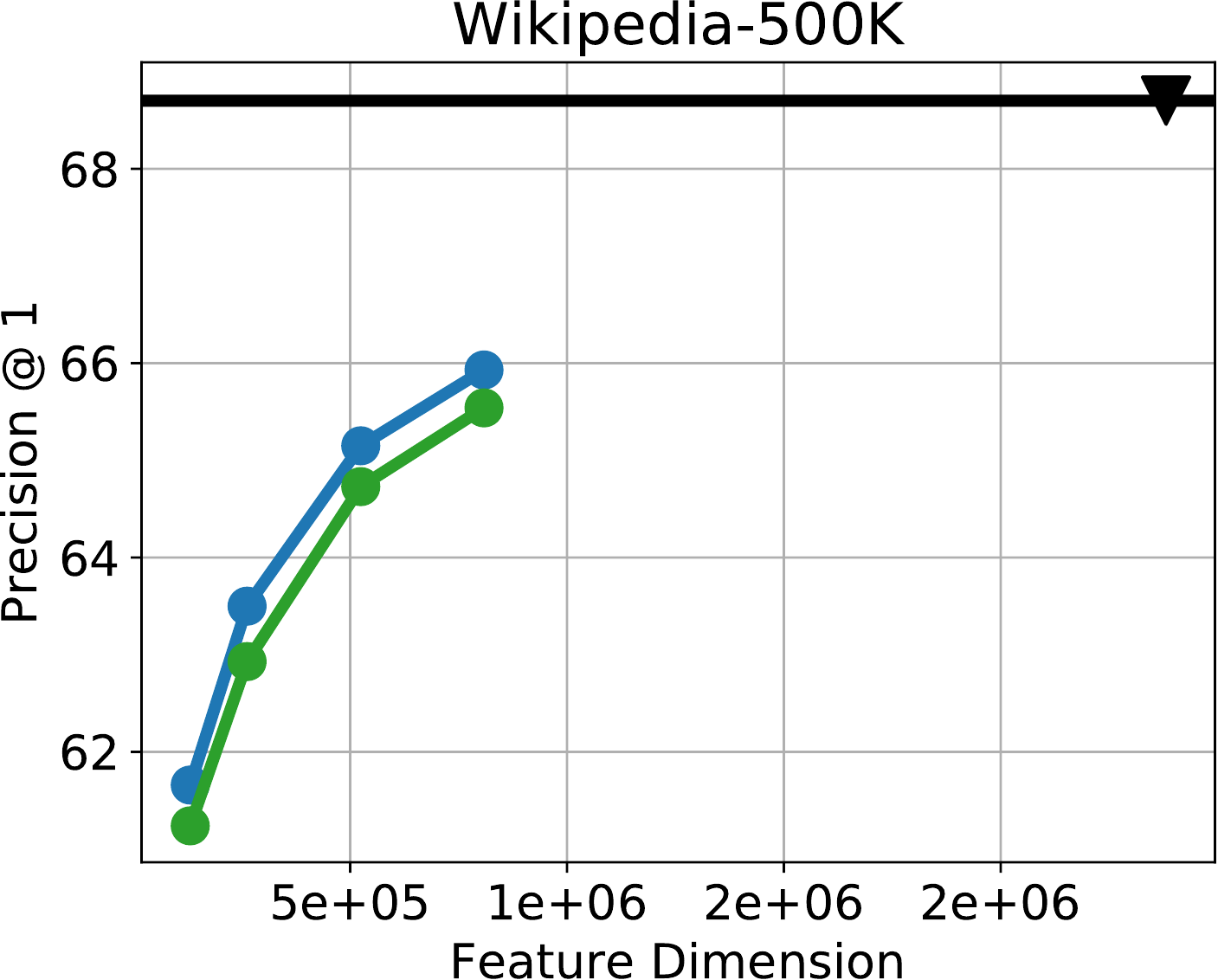}%
\end{subfigure}
\begin{subfigure}[b]{0.245\textwidth}
	\includegraphics[width=\textwidth]{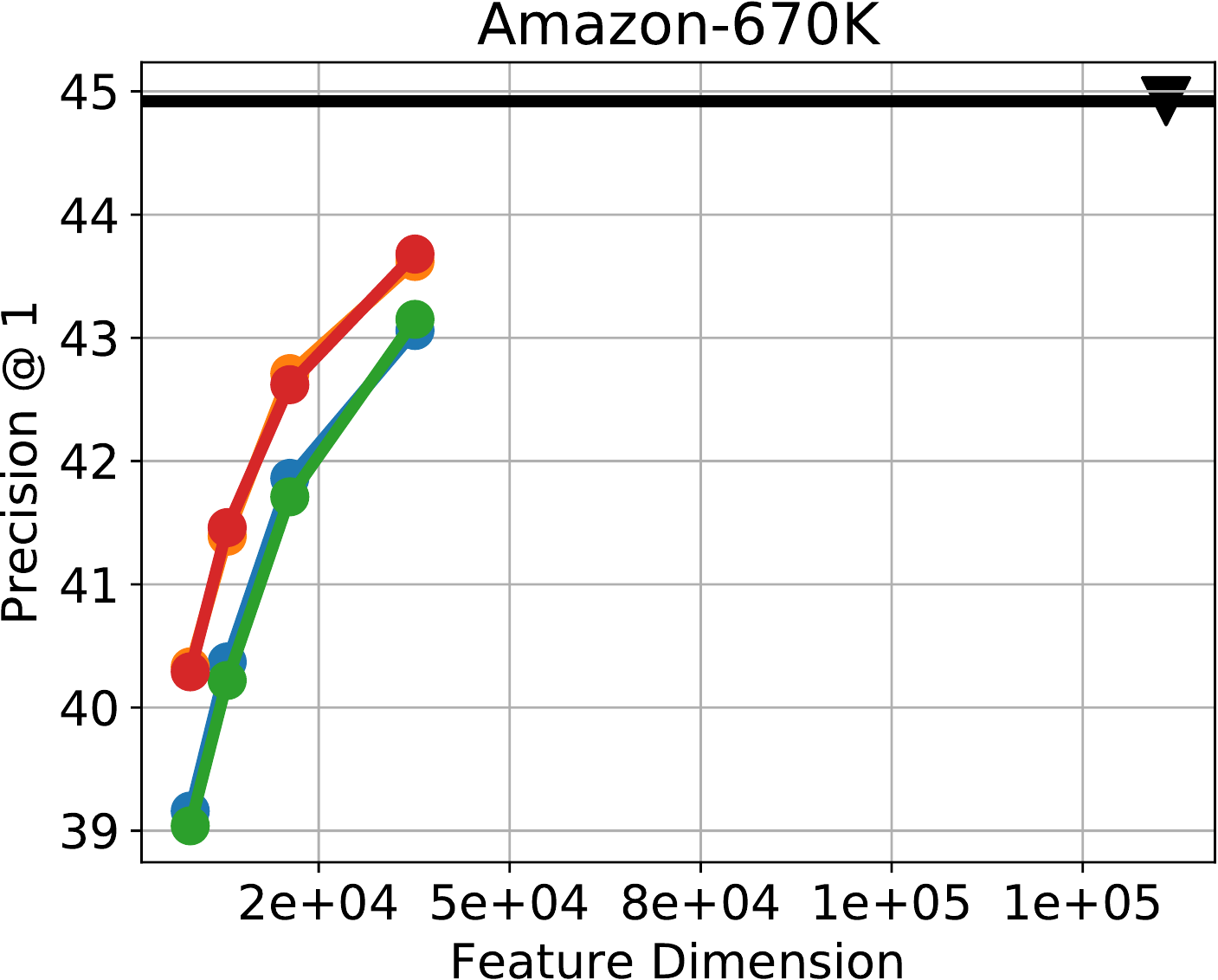}%
\end{subfigure}
\begin{subfigure}[b]{0.245\textwidth}
	\includegraphics[width=\textwidth]{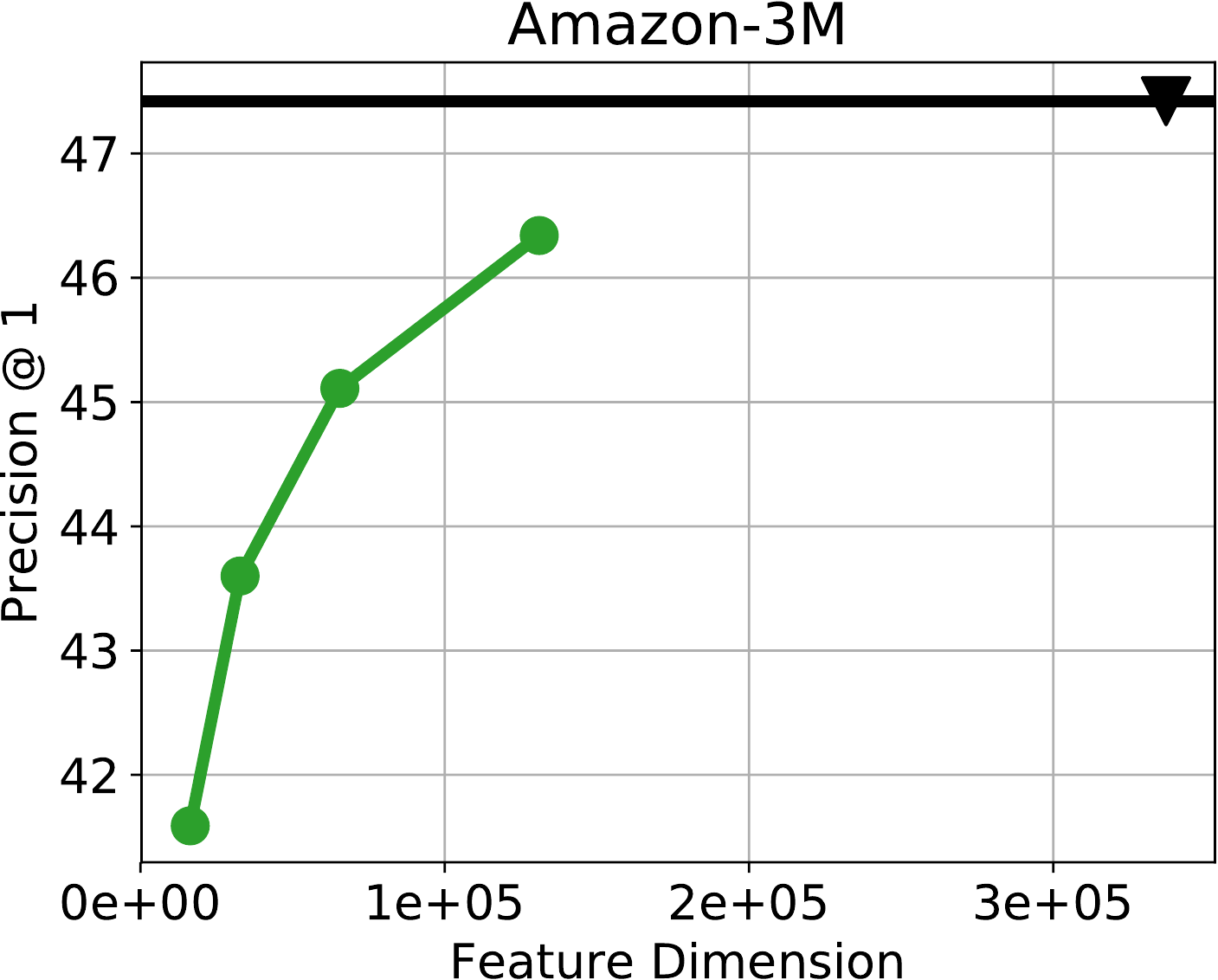}%
\end{subfigure}
\caption{Effect of \defrag variants on reducing feature dimensionality. In all cases, \defrag variants are able to offer drastic reductions in feature dimensionalities of the datasets without significant reduction in prediction accuracies, as measured by precision@1, for example WikiLSHTC (4x reduction), AmazonCat (4x reduction with the same prediction accuracy), Delicious200K (16x reduction with even higher prediction accuracy), Amazon-3M (8x reduction).}%
\label{fig:app-compare-feature-dim}%
\end{figure}

\begin{figure}[t]%
\centering
\begin{subfigure}[b]{0.22\textwidth}
	\includegraphics[width=\textwidth]{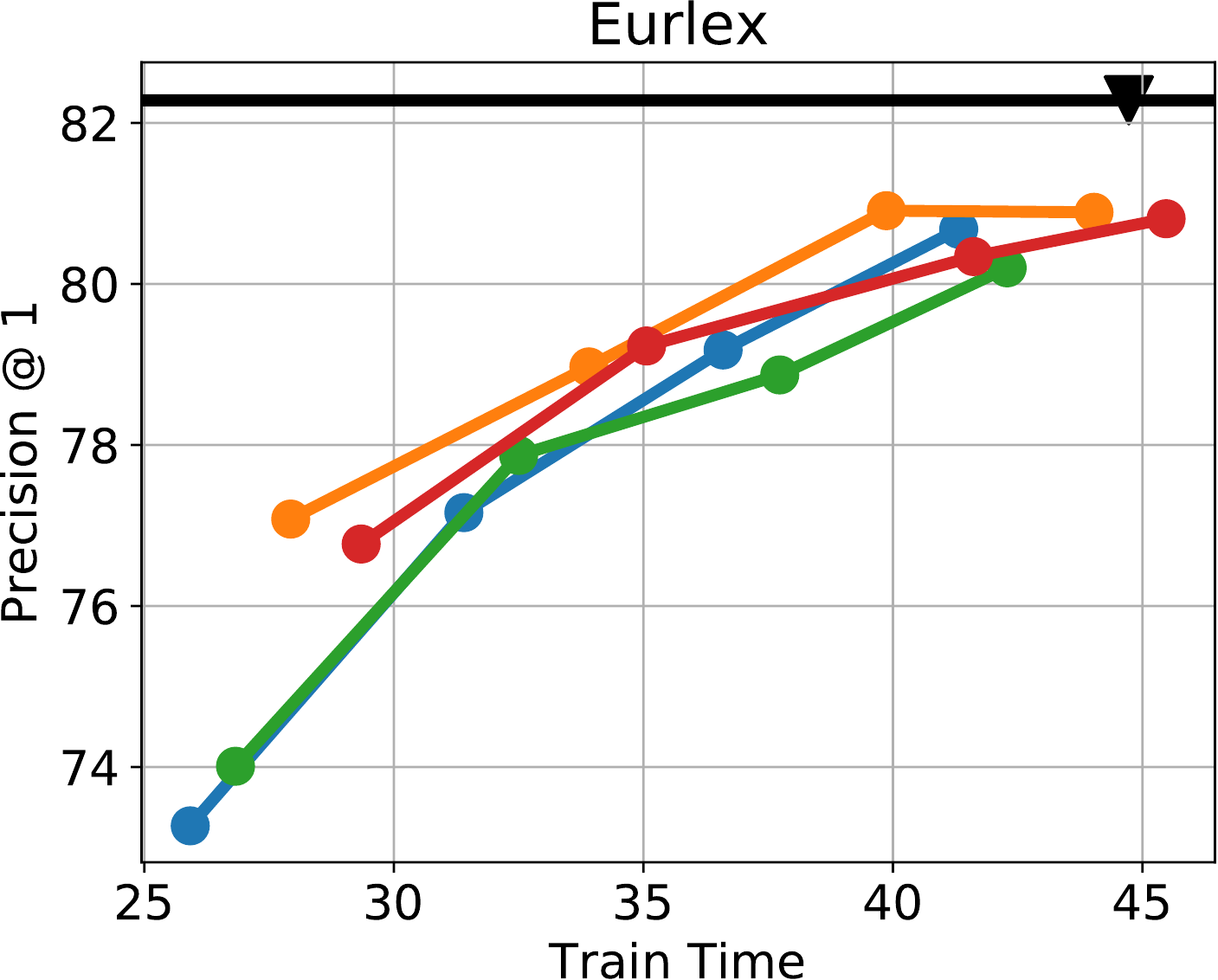}%
\end{subfigure}
\begin{subfigure}[b]{0.22\textwidth}
	\includegraphics[width=\textwidth]{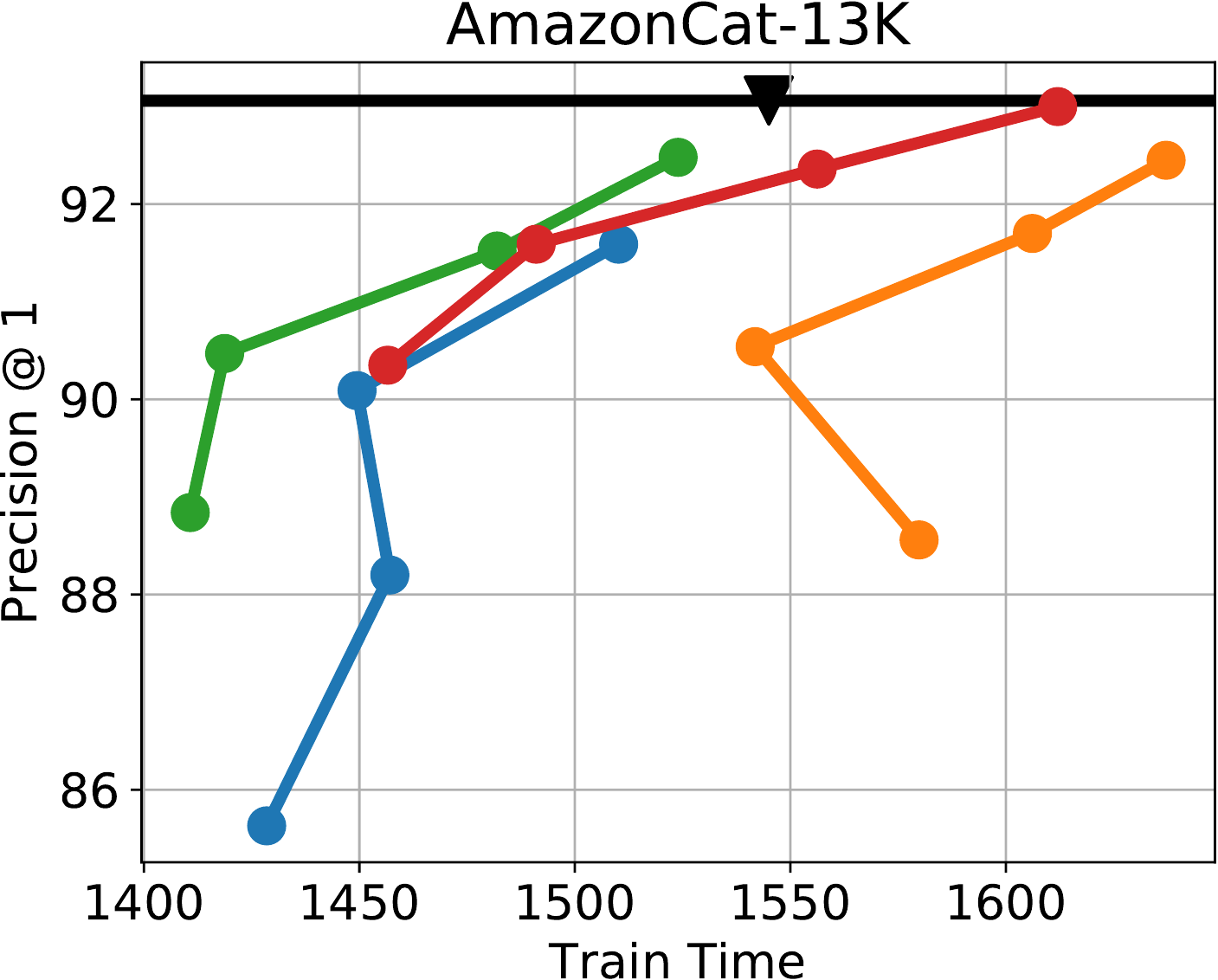}%
\end{subfigure}
\begin{subfigure}[b]{0.22\textwidth}
	\includegraphics[width=\textwidth]{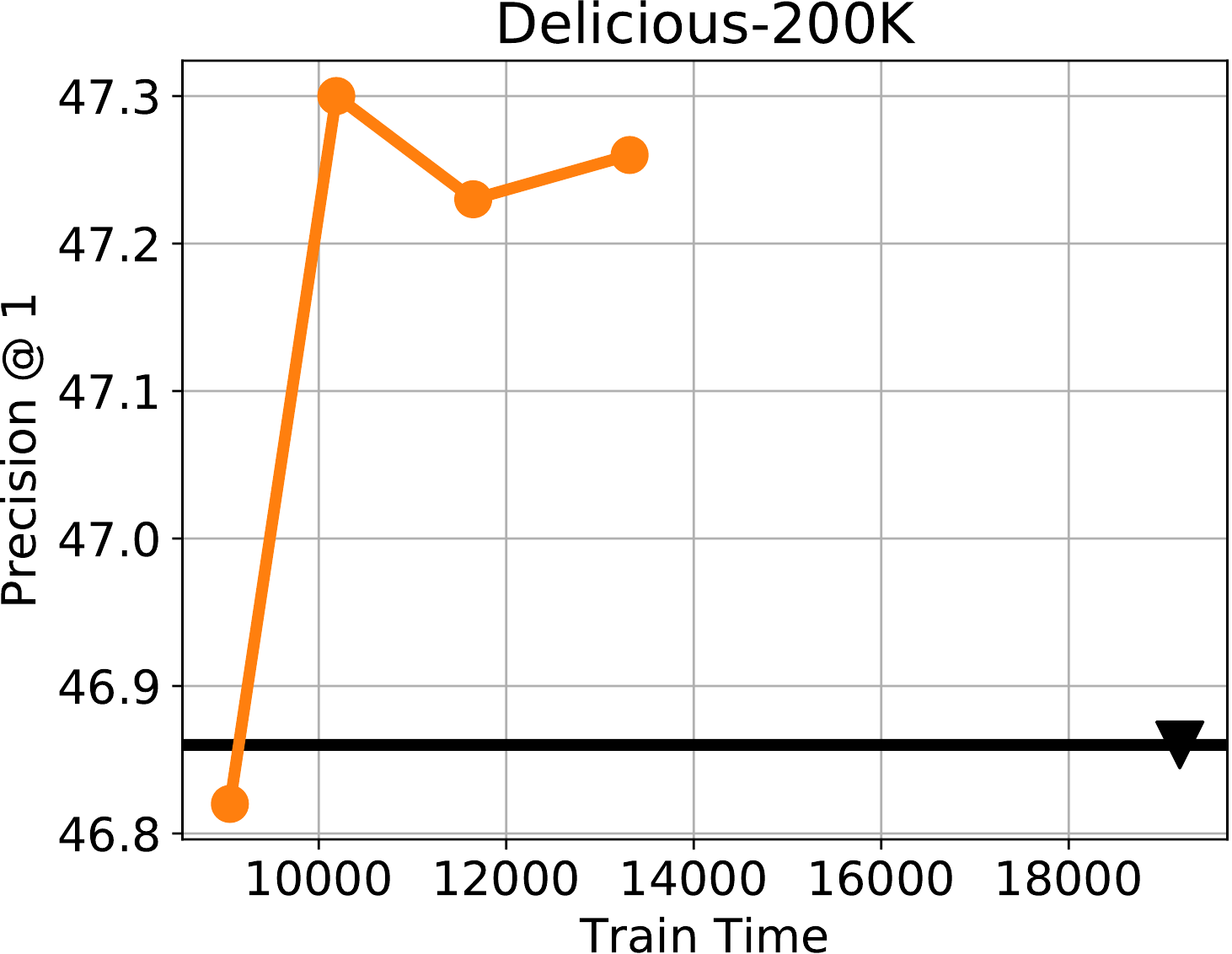}%
\end{subfigure}
\begin{subfigure}[b]{0.32\textwidth}
	\includegraphics[width=\textwidth]{compare/precision_at_1_total_train_time_wiki10.png-crop.pdf}%
\end{subfigure}
\begin{subfigure}[b]{0.245\textwidth}
	\includegraphics[width=\textwidth]{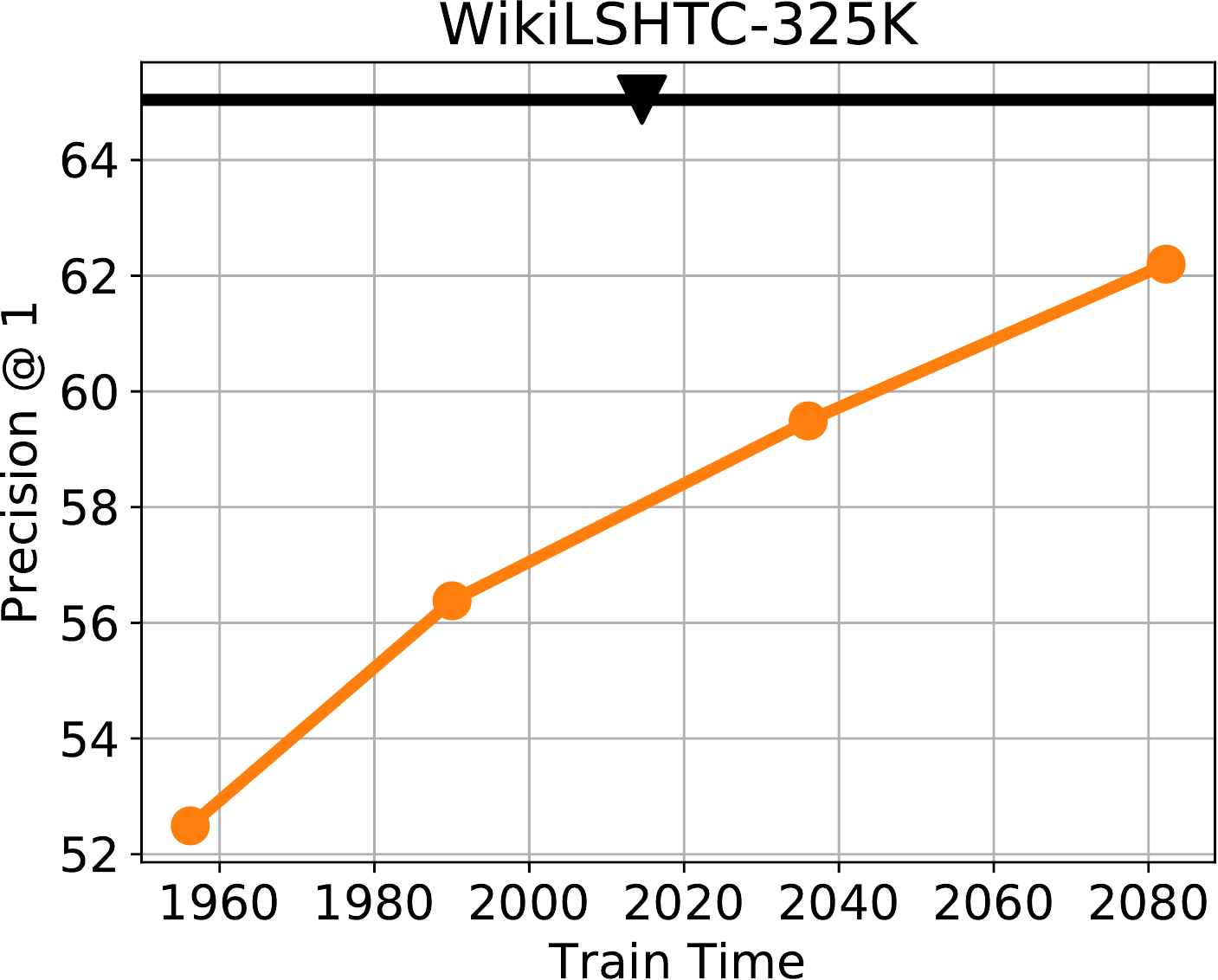}%
\end{subfigure}
\begin{subfigure}[b]{0.245\textwidth}
	\includegraphics[width=\textwidth]{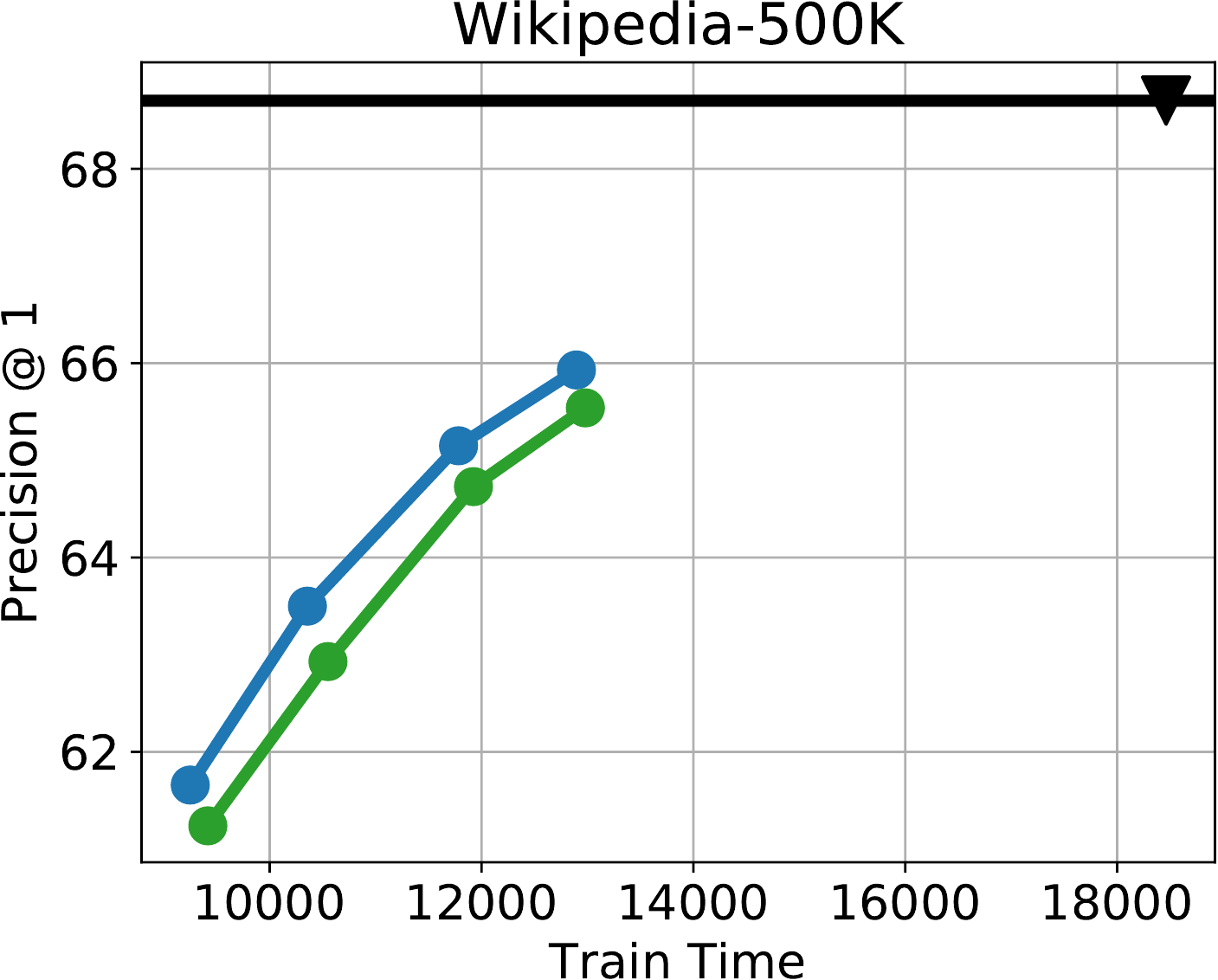}%
\end{subfigure}
\begin{subfigure}[b]{0.245\textwidth}
	\includegraphics[width=\textwidth]{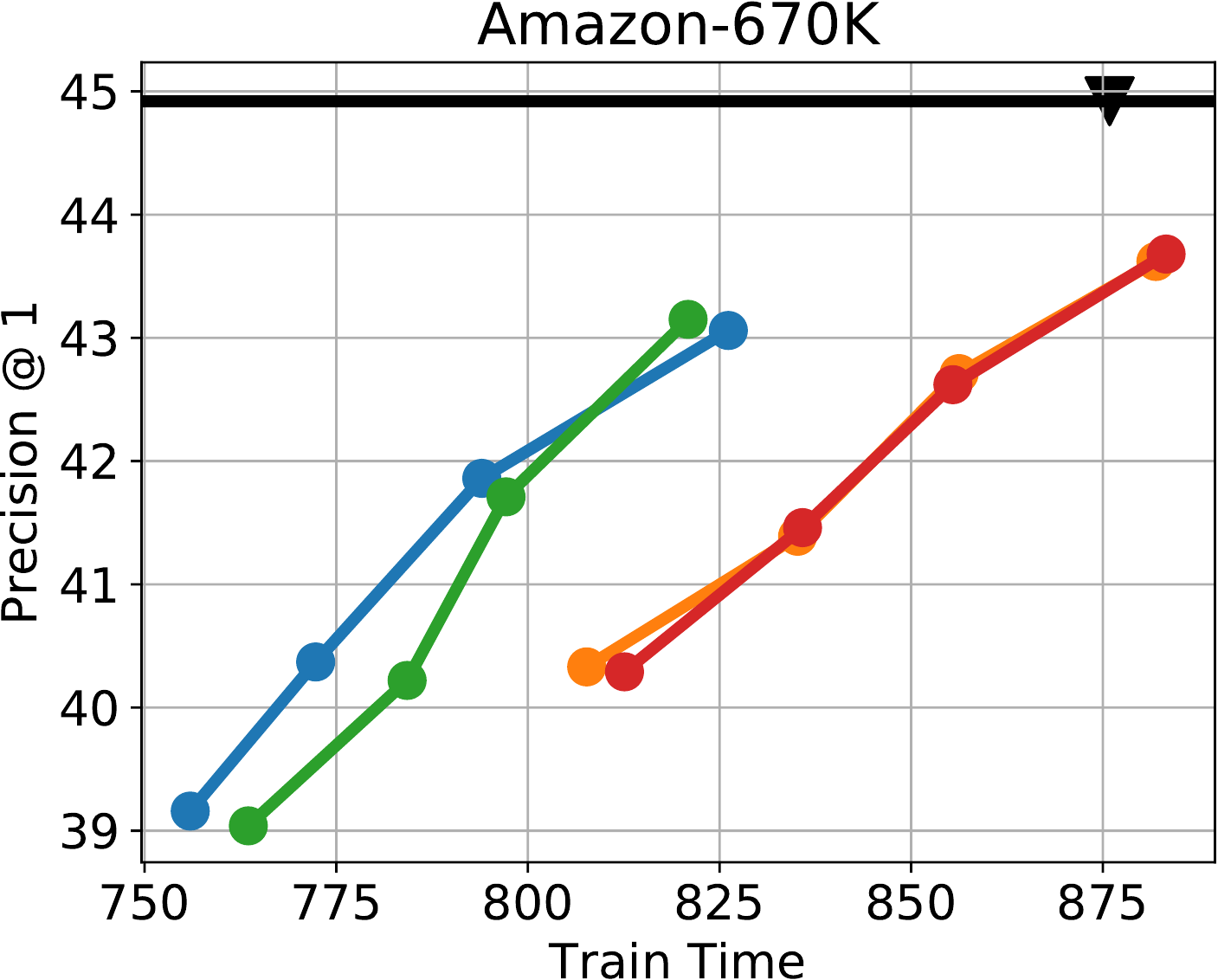}%
\end{subfigure}
\begin{subfigure}[b]{0.245\textwidth}
	\includegraphics[width=\textwidth]{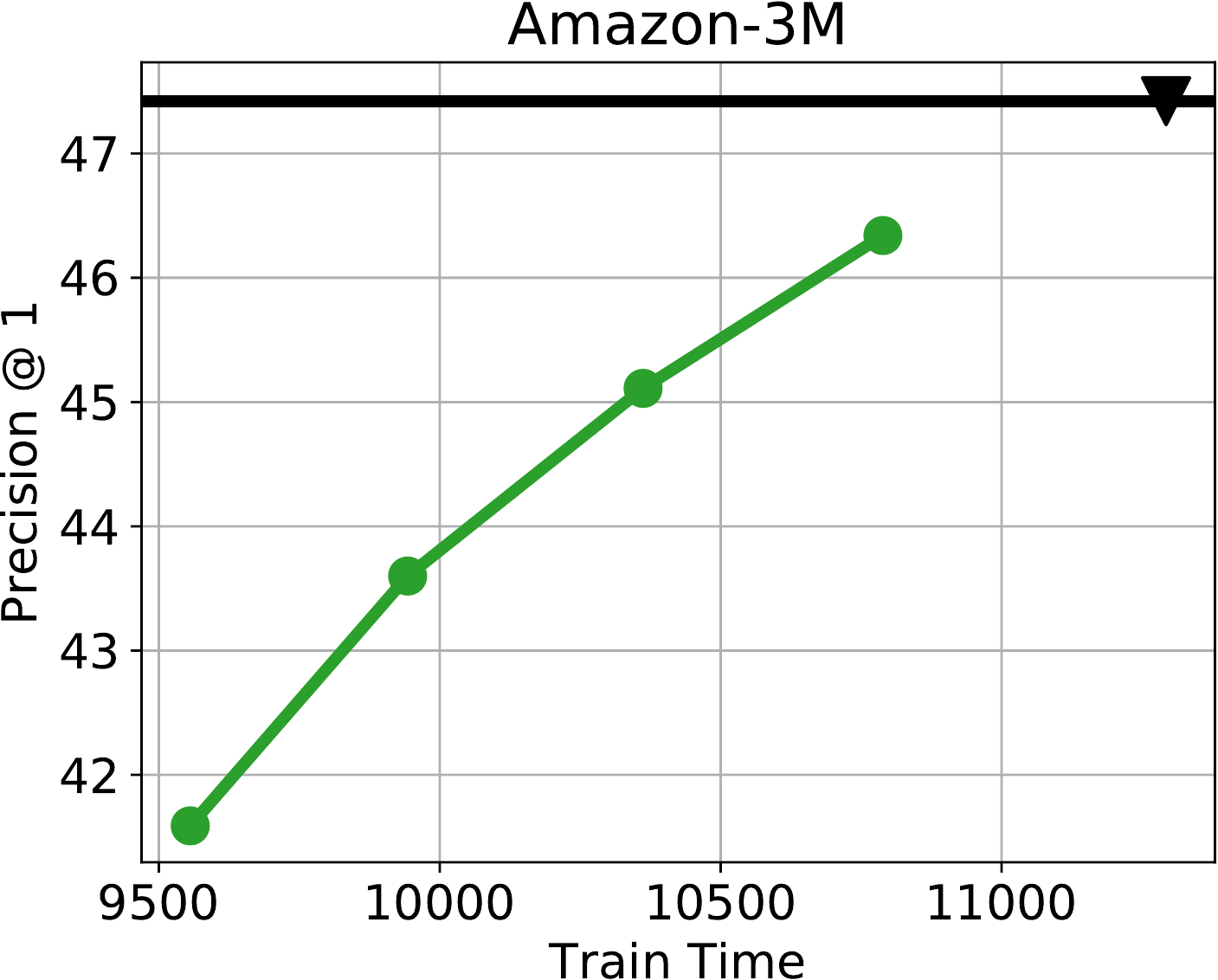}%
\end{subfigure}
\caption{Effect of \defrag variants on reducing total training time. The total training time includes the classifier training time, as well as \defrag's clustering time. Note that the total times are very close to the classifier training time since the hierarchical clustering techniques adopted by \defrag operate very efficiently and put very little overhead on the total training time.}%
\label{fig:app-compare-total-time}%
\end{figure}

\begin{figure}[t]%
\centering
\begin{subfigure}[b]{0.22\textwidth}
	\includegraphics[width=\textwidth]{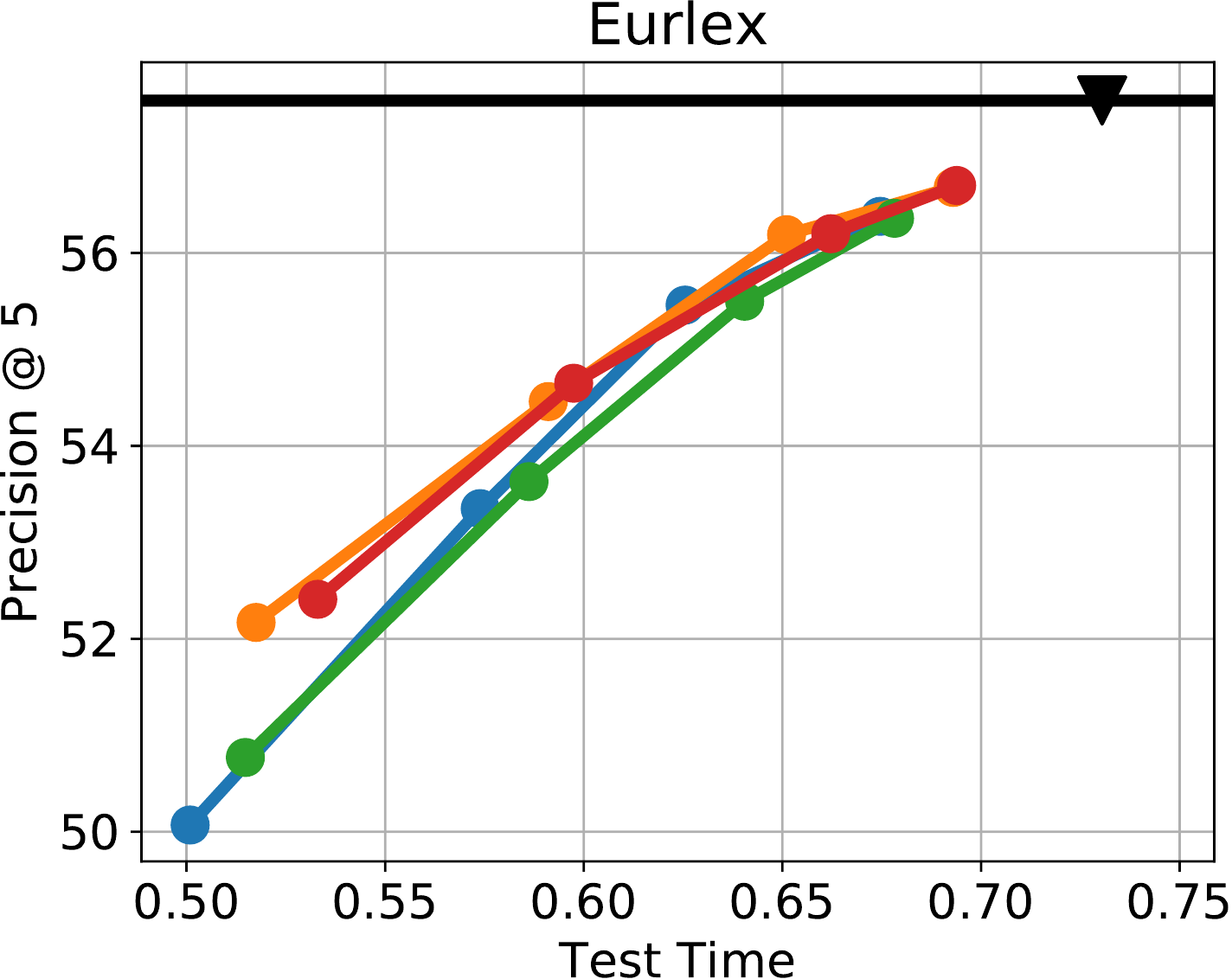}%
\end{subfigure}
\begin{subfigure}[b]{0.22\textwidth}
	\includegraphics[width=\textwidth]{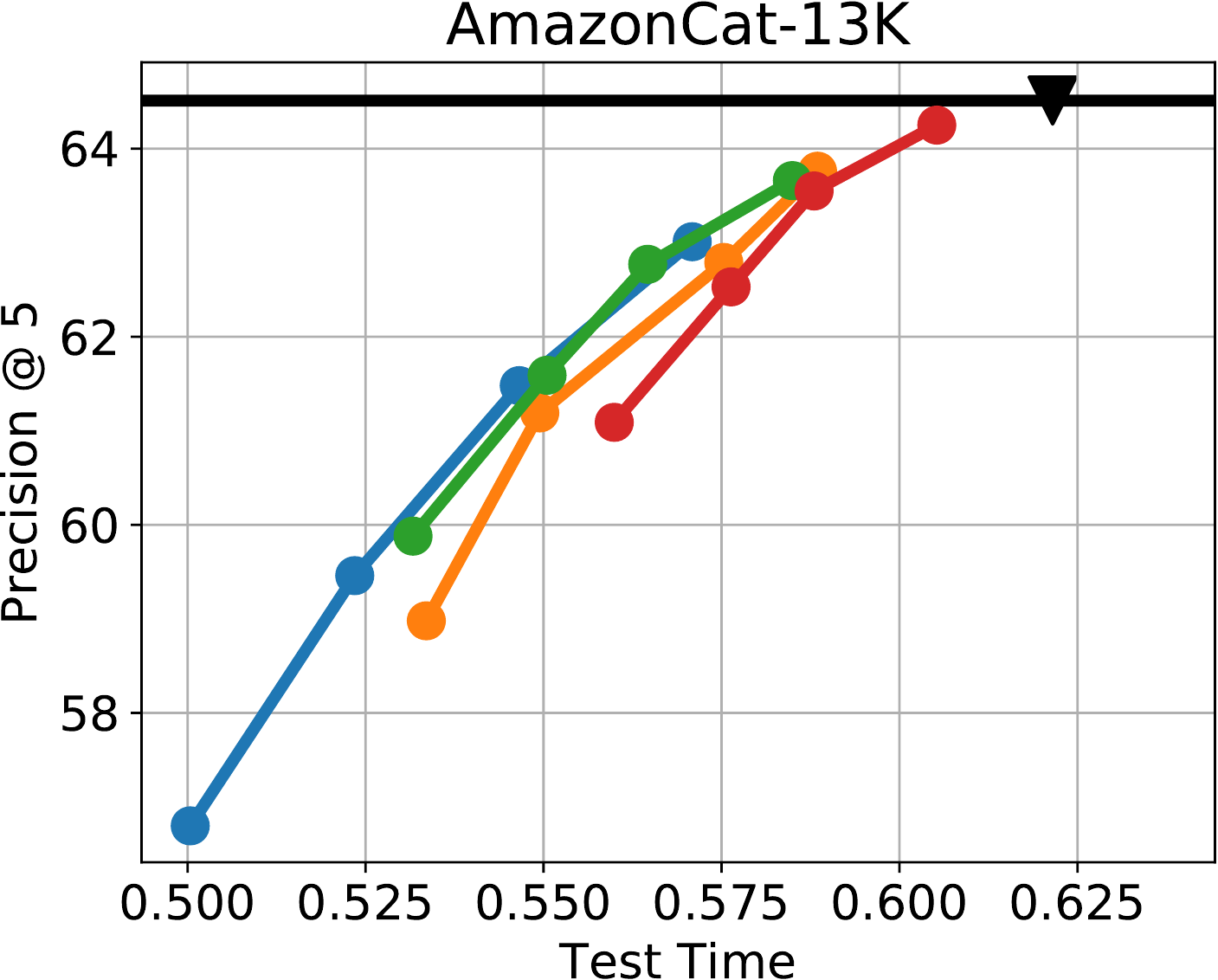}%
\end{subfigure}
\begin{subfigure}[b]{0.22\textwidth}
	\includegraphics[width=\textwidth]{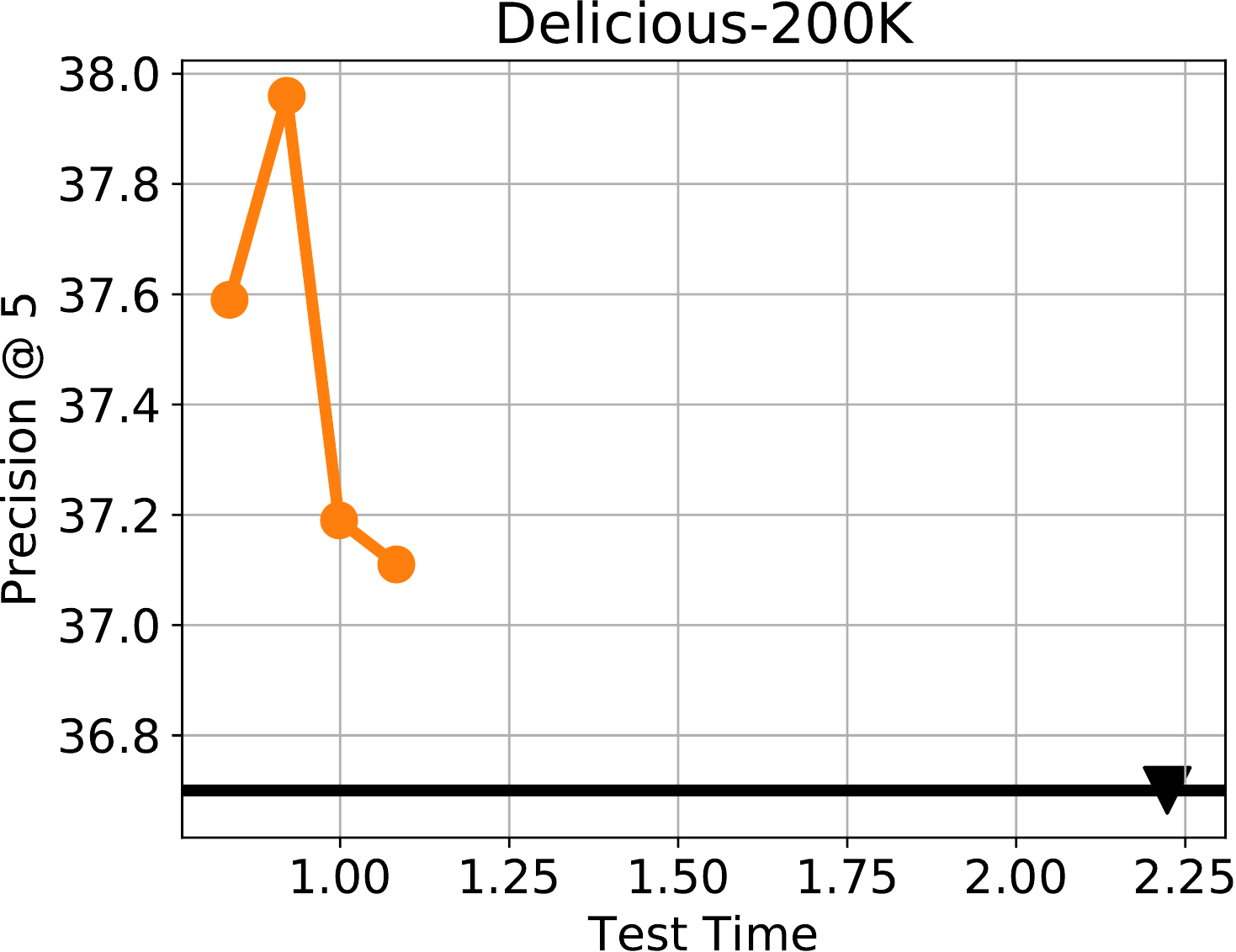}%
\end{subfigure}
\begin{subfigure}[b]{0.32\textwidth}
	\includegraphics[width=\textwidth]{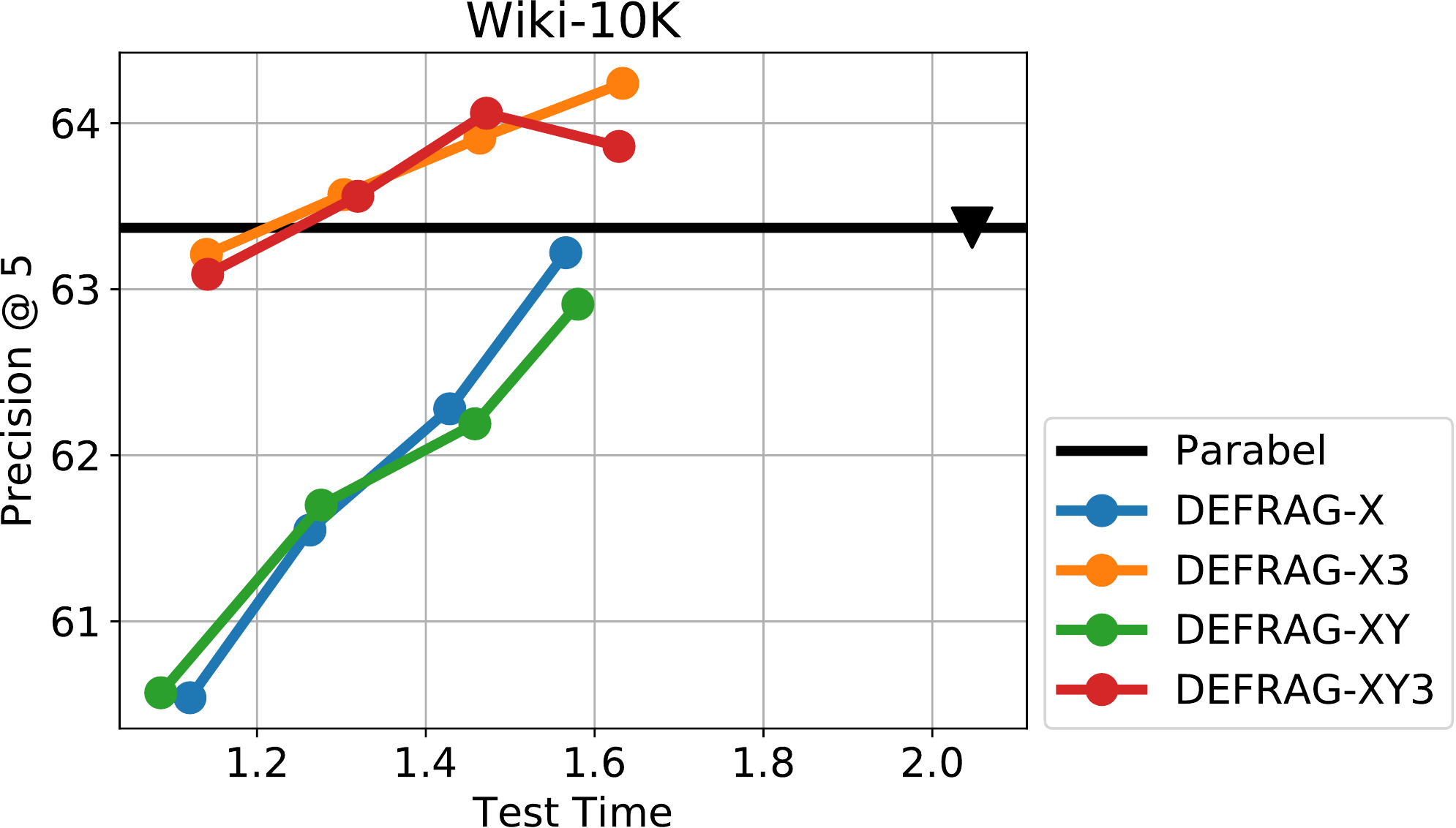}%
\end{subfigure}
\begin{subfigure}[b]{0.245\textwidth}
	\includegraphics[width=\textwidth]{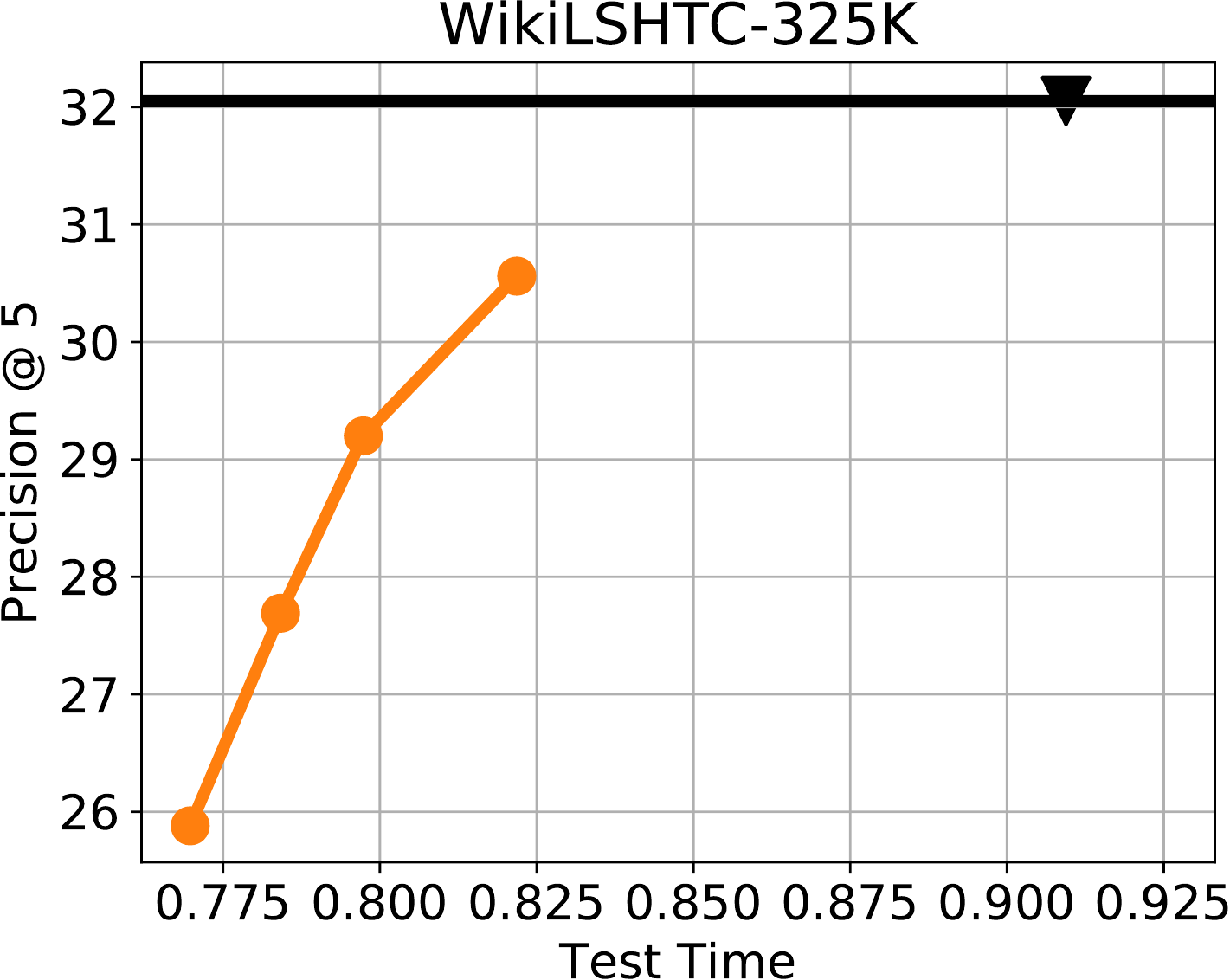}%
\end{subfigure}
\begin{subfigure}[b]{0.245\textwidth}
	\includegraphics[width=\textwidth]{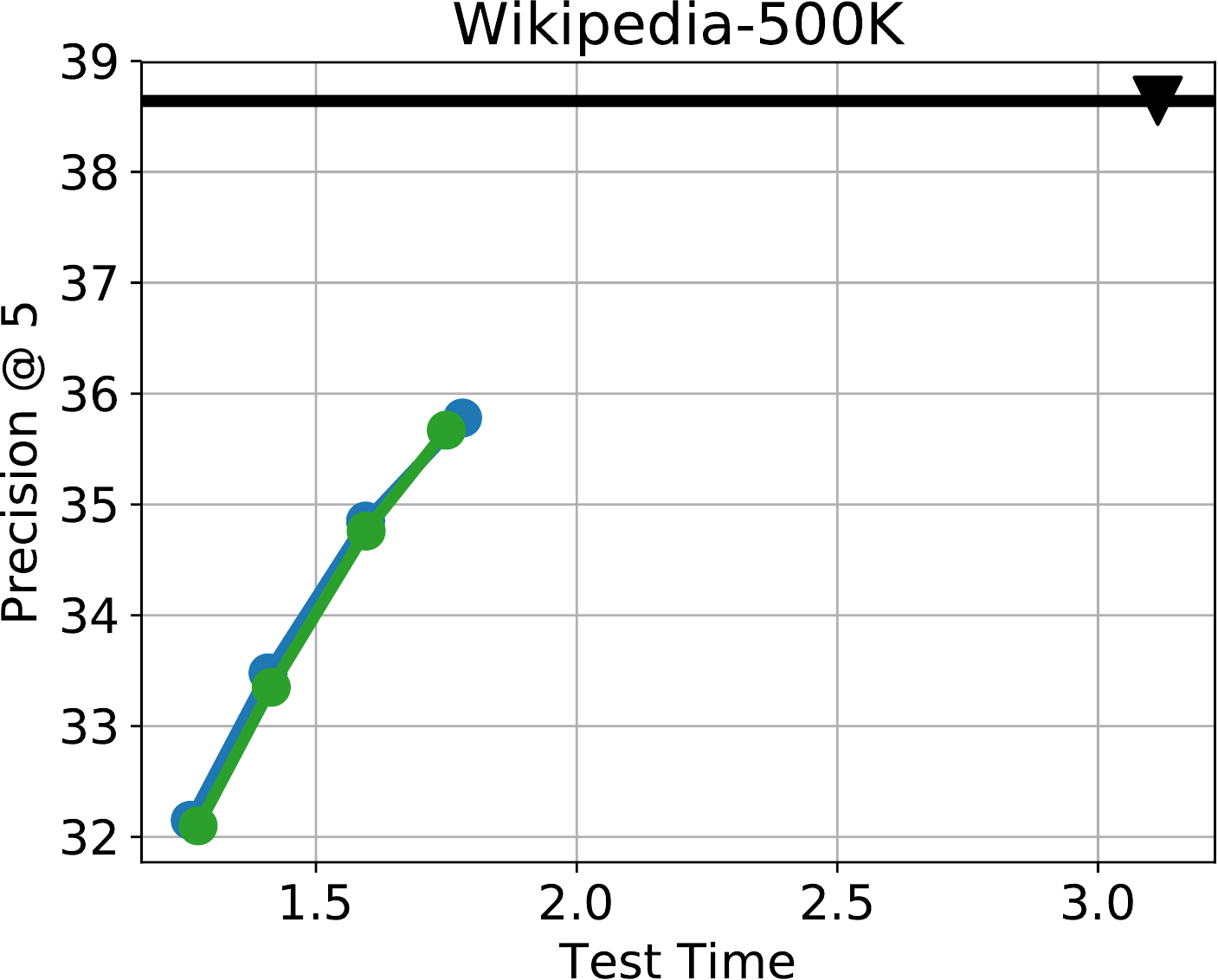}%
\end{subfigure}
\begin{subfigure}[b]{0.245\textwidth}
	\includegraphics[width=\textwidth]{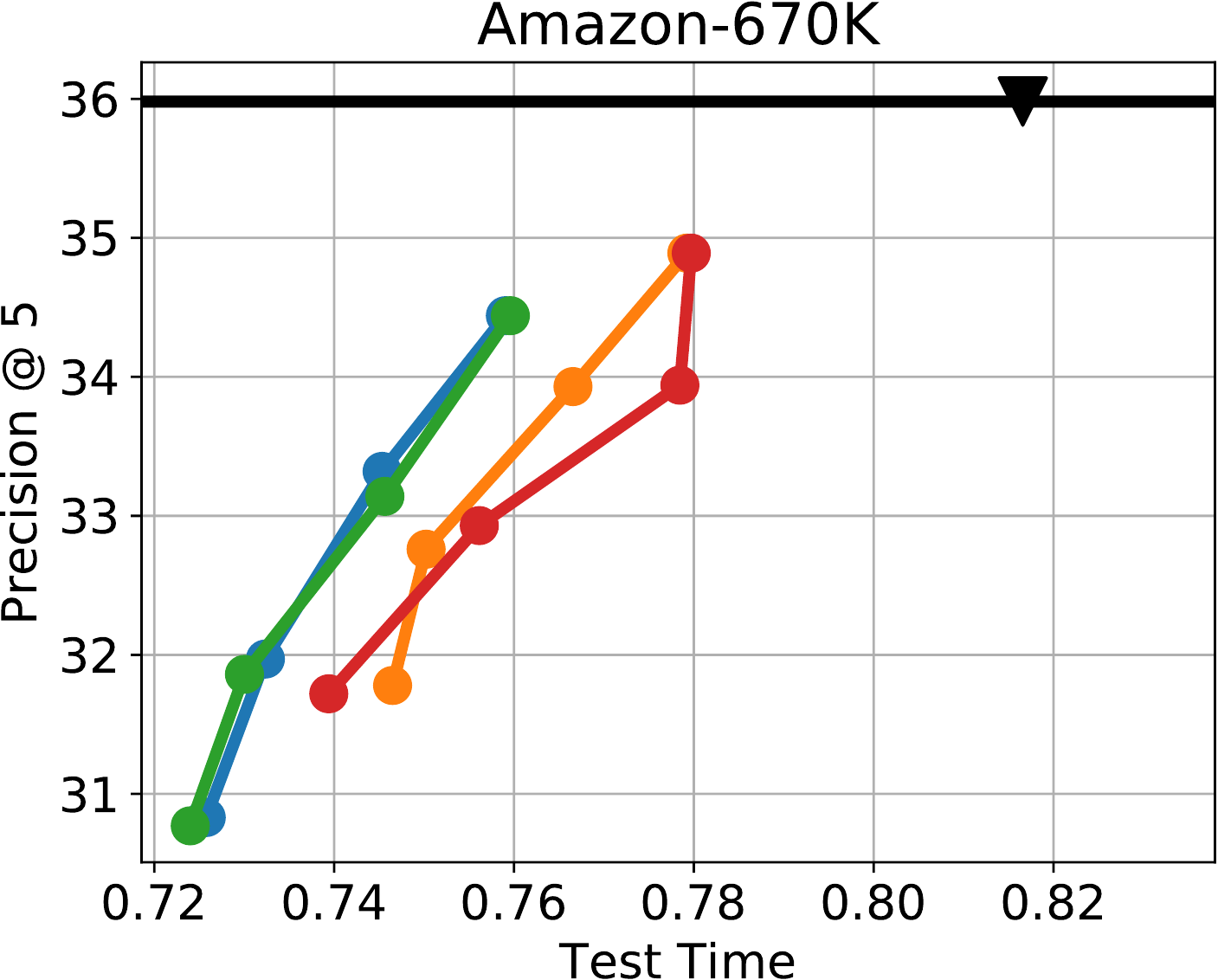}%
\end{subfigure}
\begin{subfigure}[b]{0.245\textwidth}
	\includegraphics[width=\textwidth]{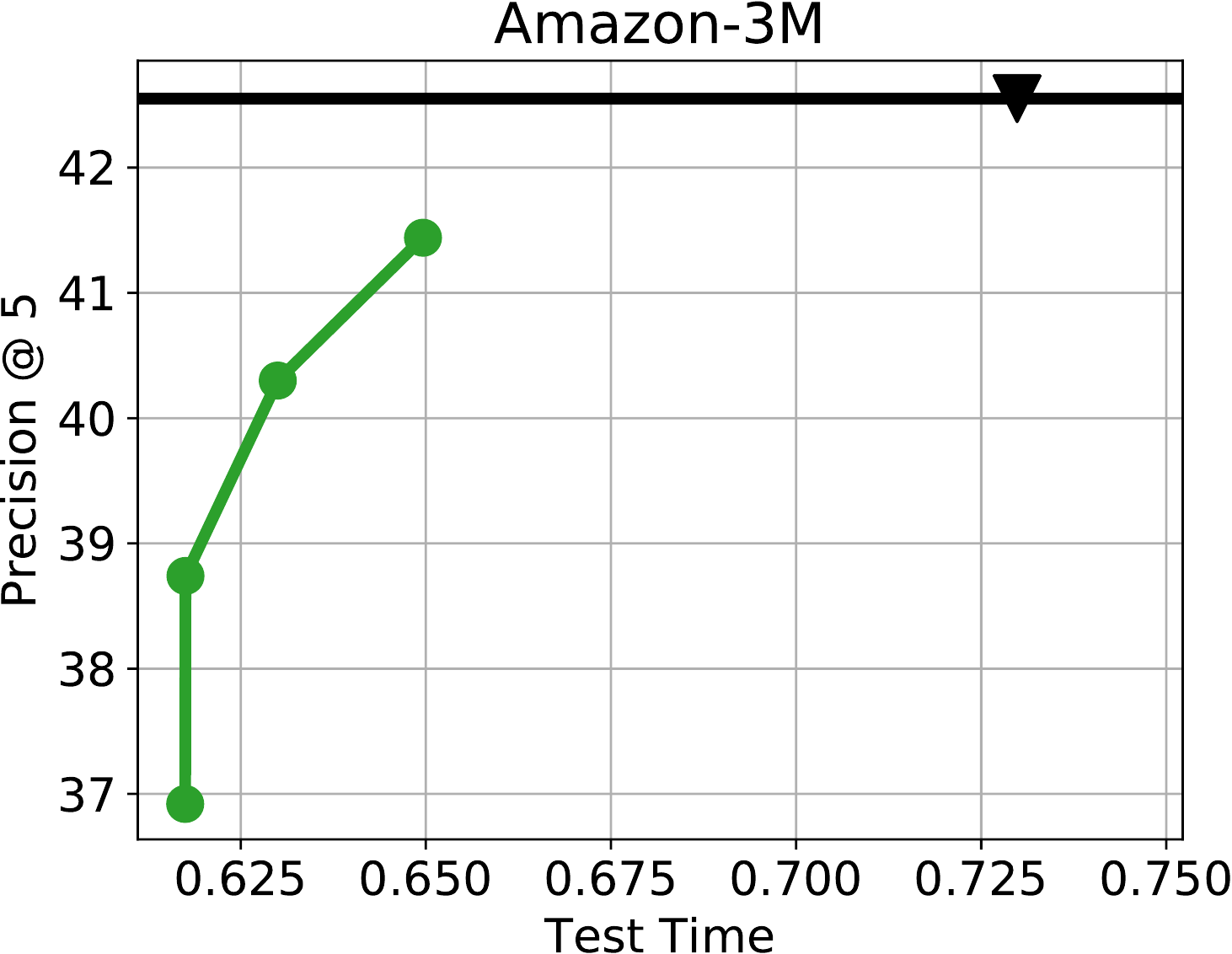}%
\end{subfigure}
\caption{Effect of \defrag variants on reducing prediction time. Note that the prediction time includes the feature agglomeration time as well as the actual prediction time. The performance of the classifier is indicated using the precision@5 metric. The feature agglomeration step being so inexpensive, allows \defrag to offer drastic reductions in prediction times on most datasets, for example Delicious (70\% reduction), Wiki10 (40\% reduction), Wikipedia-500K (more than 40\% reduction)}%
\label{fig:app-compare-test-time}%
\end{figure}

\begin{figure}[t]%
\centering
\begin{subfigure}[b]{0.22\textwidth}
	\includegraphics[width=\textwidth]{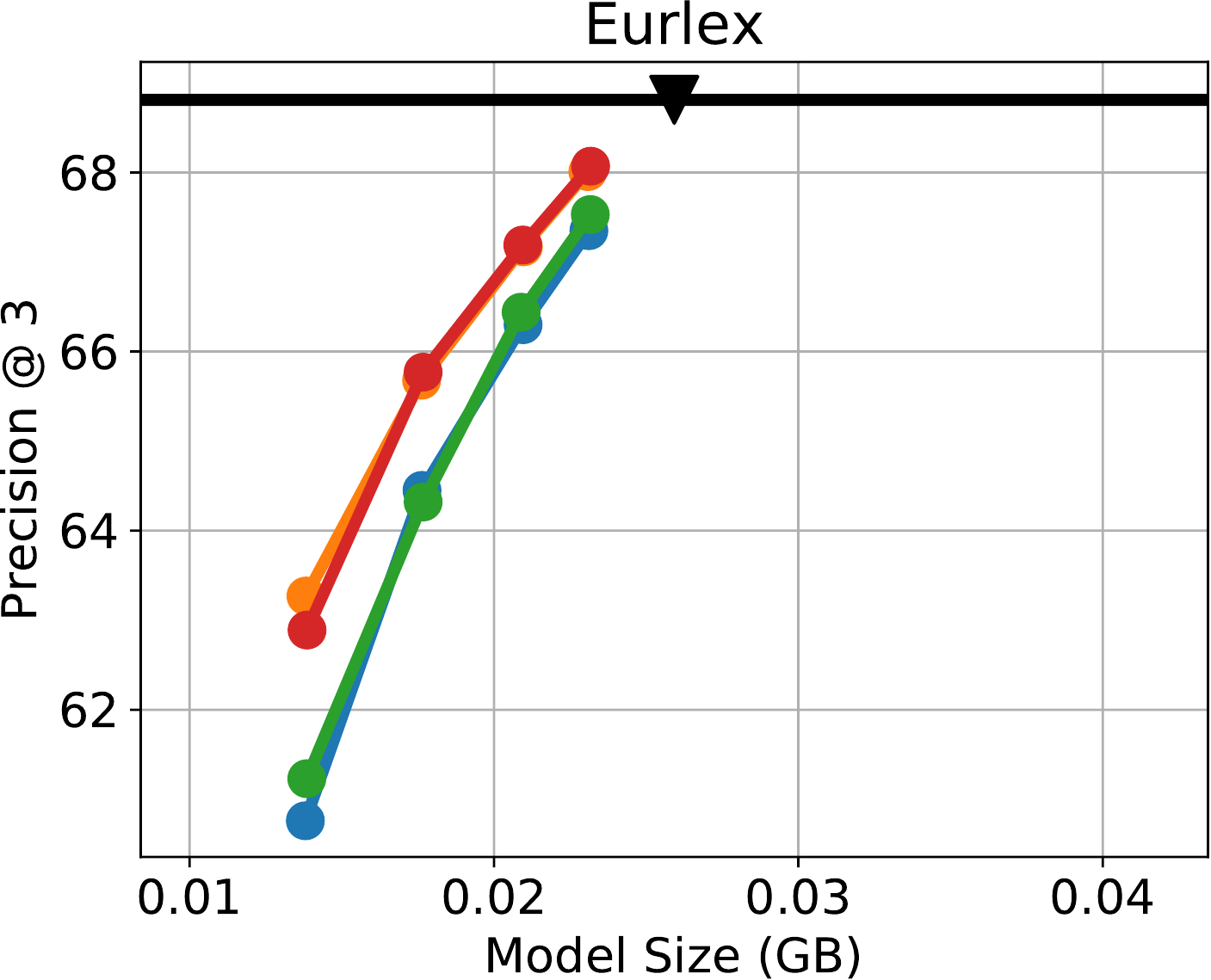}%
\end{subfigure}
\begin{subfigure}[b]{0.22\textwidth}
	\includegraphics[width=\textwidth]{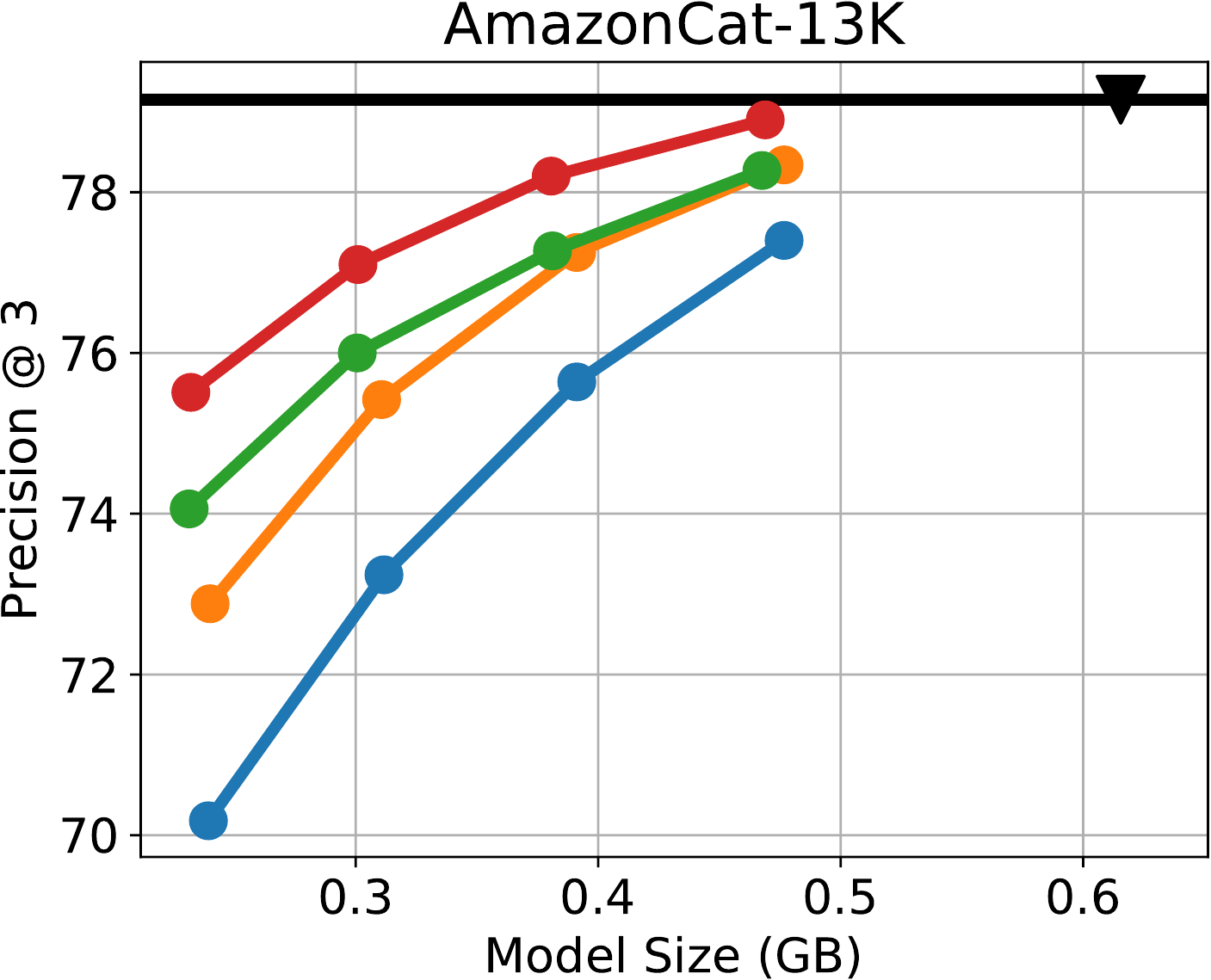}%
\end{subfigure}
\begin{subfigure}[b]{0.22\textwidth}
	\includegraphics[width=\textwidth]{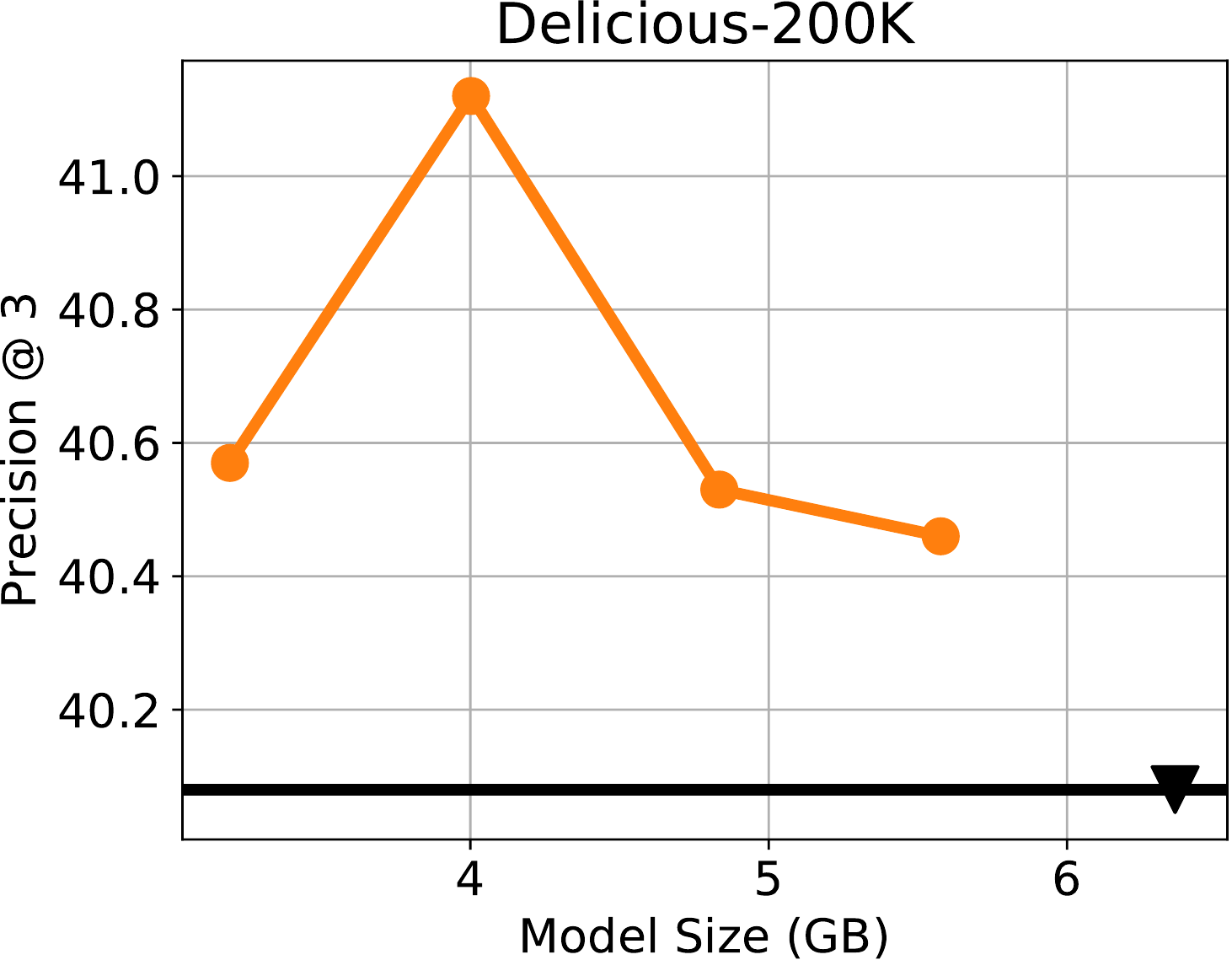}%
\end{subfigure}
\begin{subfigure}[b]{0.32\textwidth}
	\includegraphics[width=\textwidth]{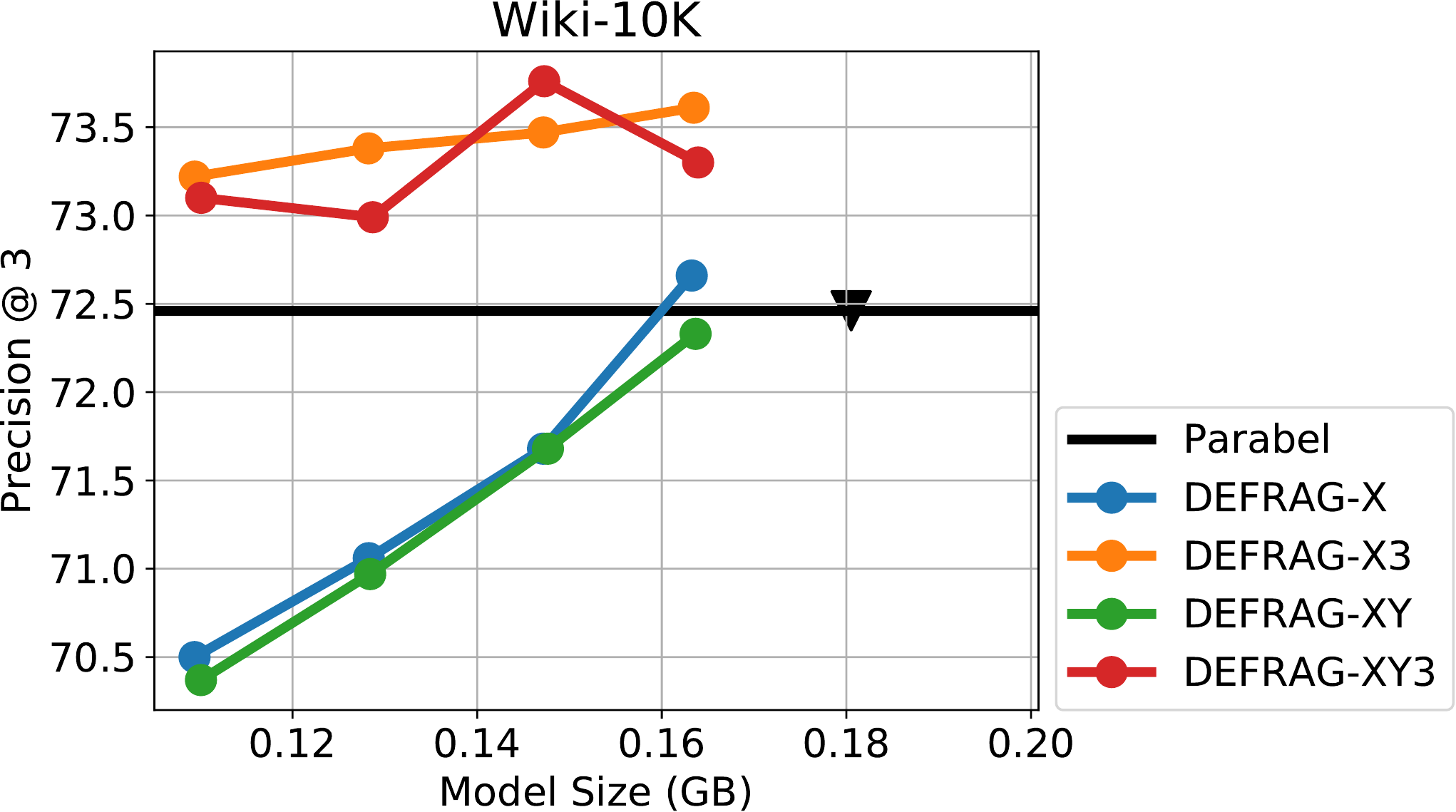}%
\end{subfigure}
\begin{subfigure}[b]{0.245\textwidth}
	\includegraphics[width=\textwidth]{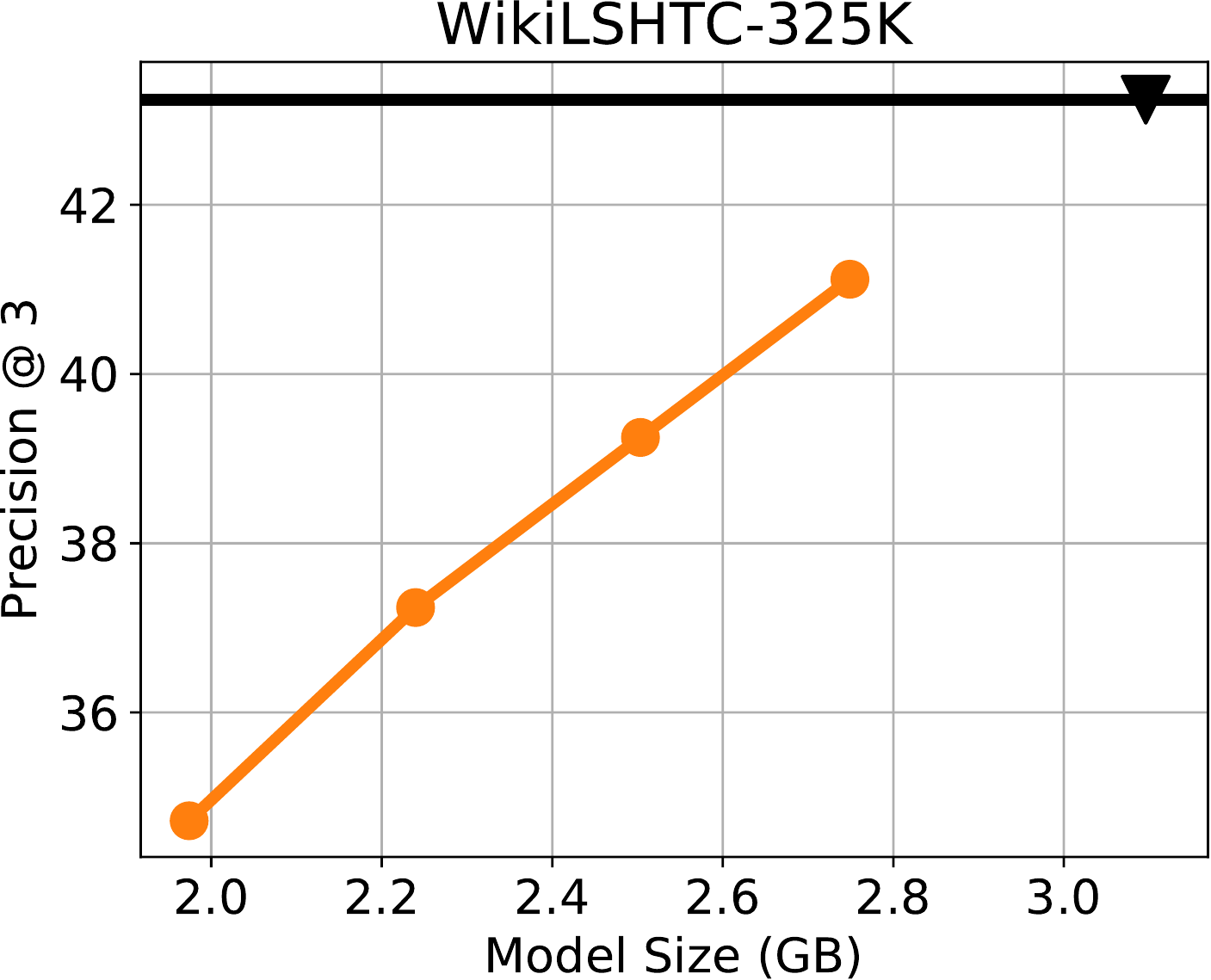}%
\end{subfigure}
\begin{subfigure}[b]{0.245\textwidth}
	\includegraphics[width=\textwidth]{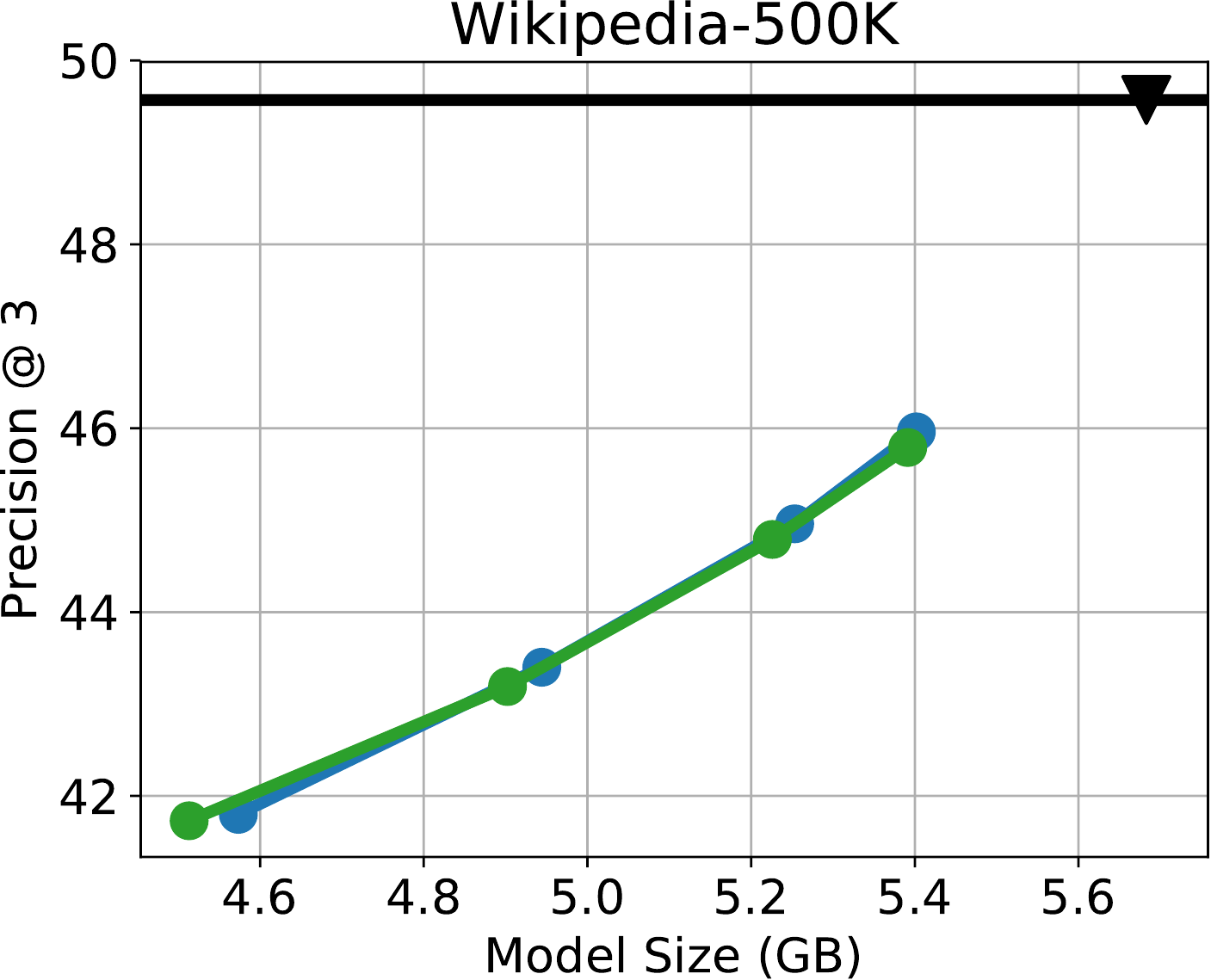}%
\end{subfigure}
\begin{subfigure}[b]{0.245\textwidth}
	\includegraphics[width=\textwidth]{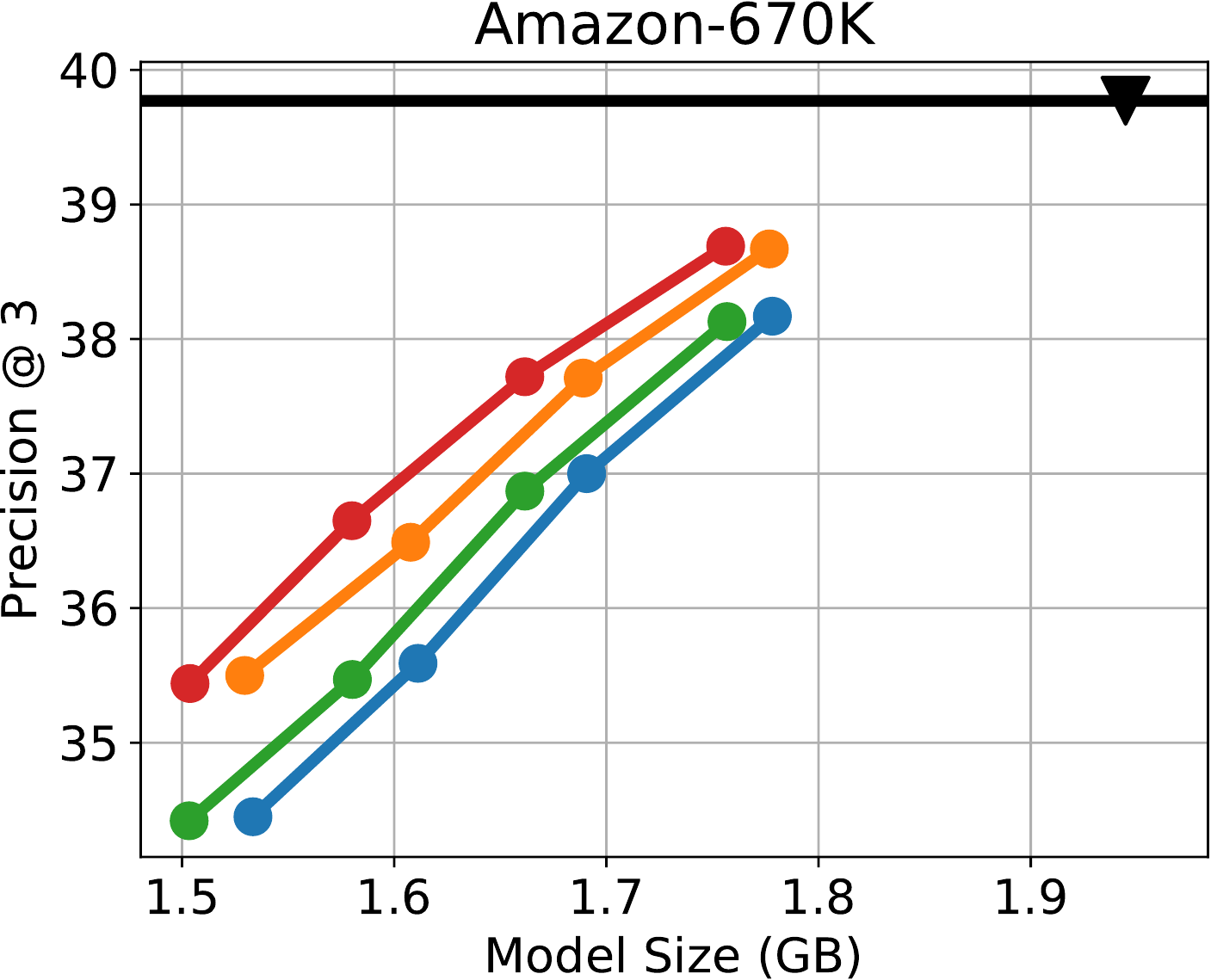}%
\end{subfigure}
\begin{subfigure}[b]{0.245\textwidth}
	\includegraphics[width=\textwidth]{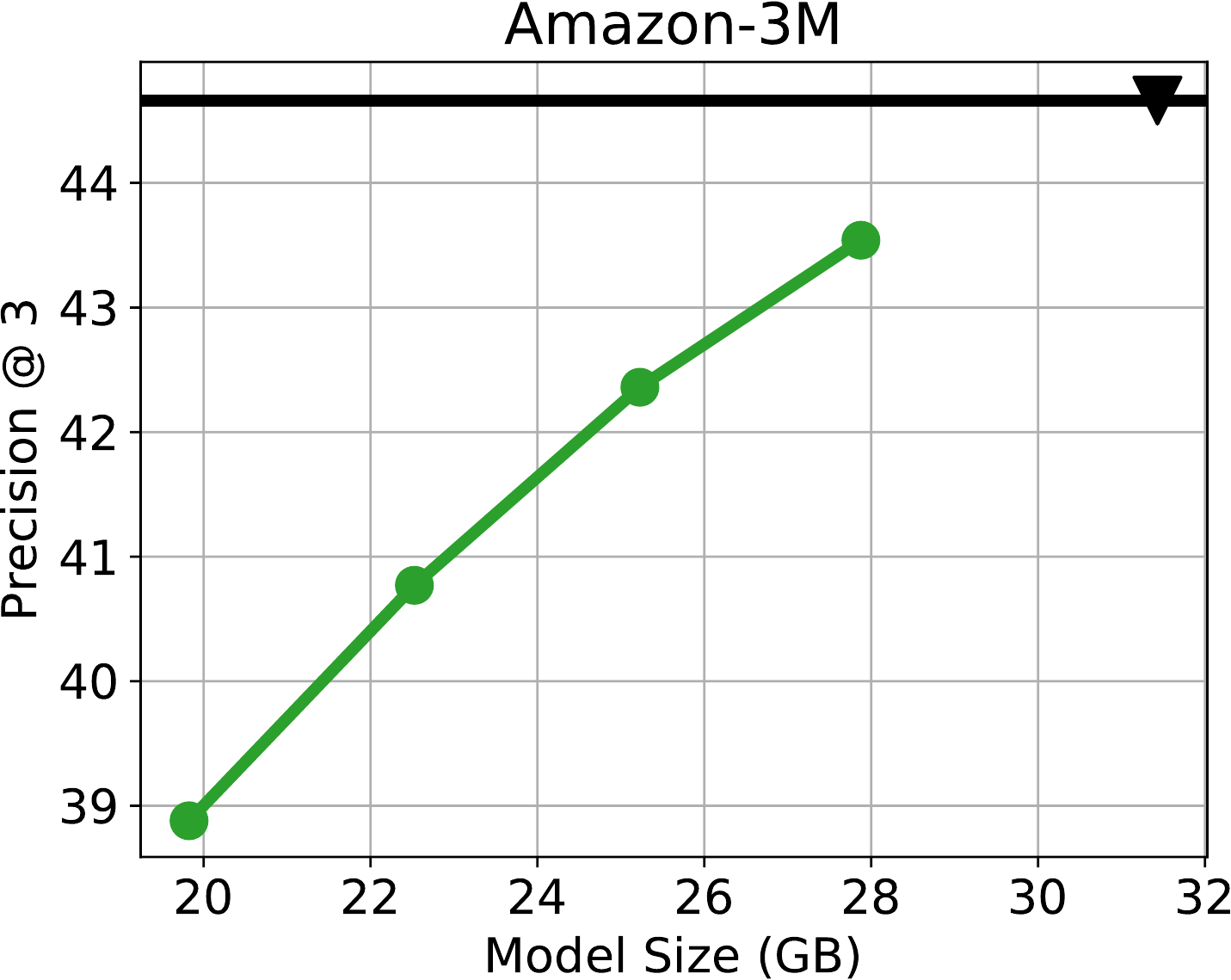}%
\end{subfigure}
\caption{Effect of \defrag variants on reducing model sizes. The performance of the classifier is indicated using the precision@3 metric. Since the model sizes of most algorithms depend on the dimensionality of the data, by performing feature agglomeration, \defrag variants offer model size reductions as well.}%
\label{fig:app-compare-model-size}%
\end{figure}

\begin{figure}[t]%
\centering
\begin{subfigure}[b]{0.22\textwidth}
	\includegraphics[width=\textwidth]{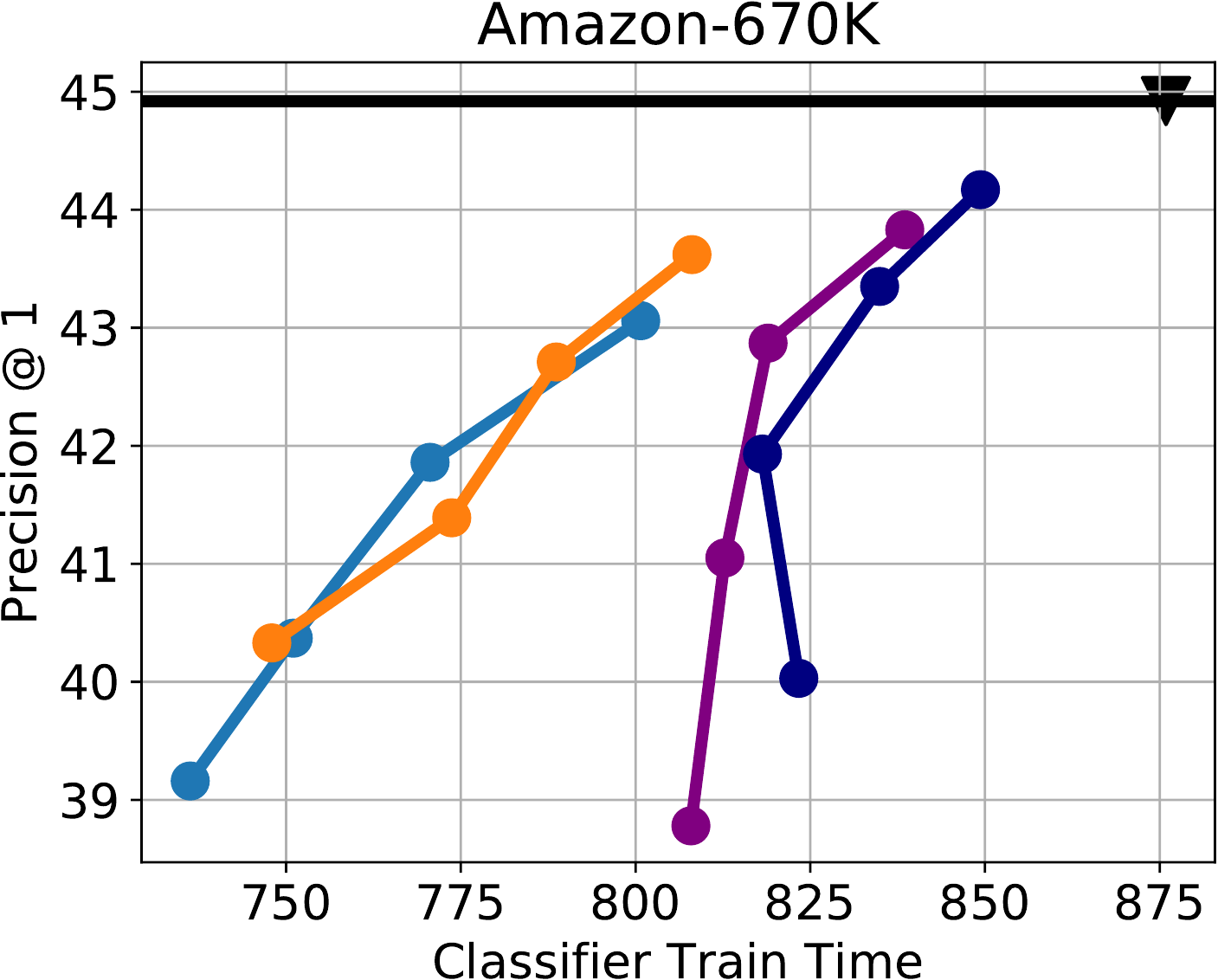}%
\end{subfigure}
\begin{subfigure}[b]{0.22\textwidth}
	\includegraphics[width=\textwidth]{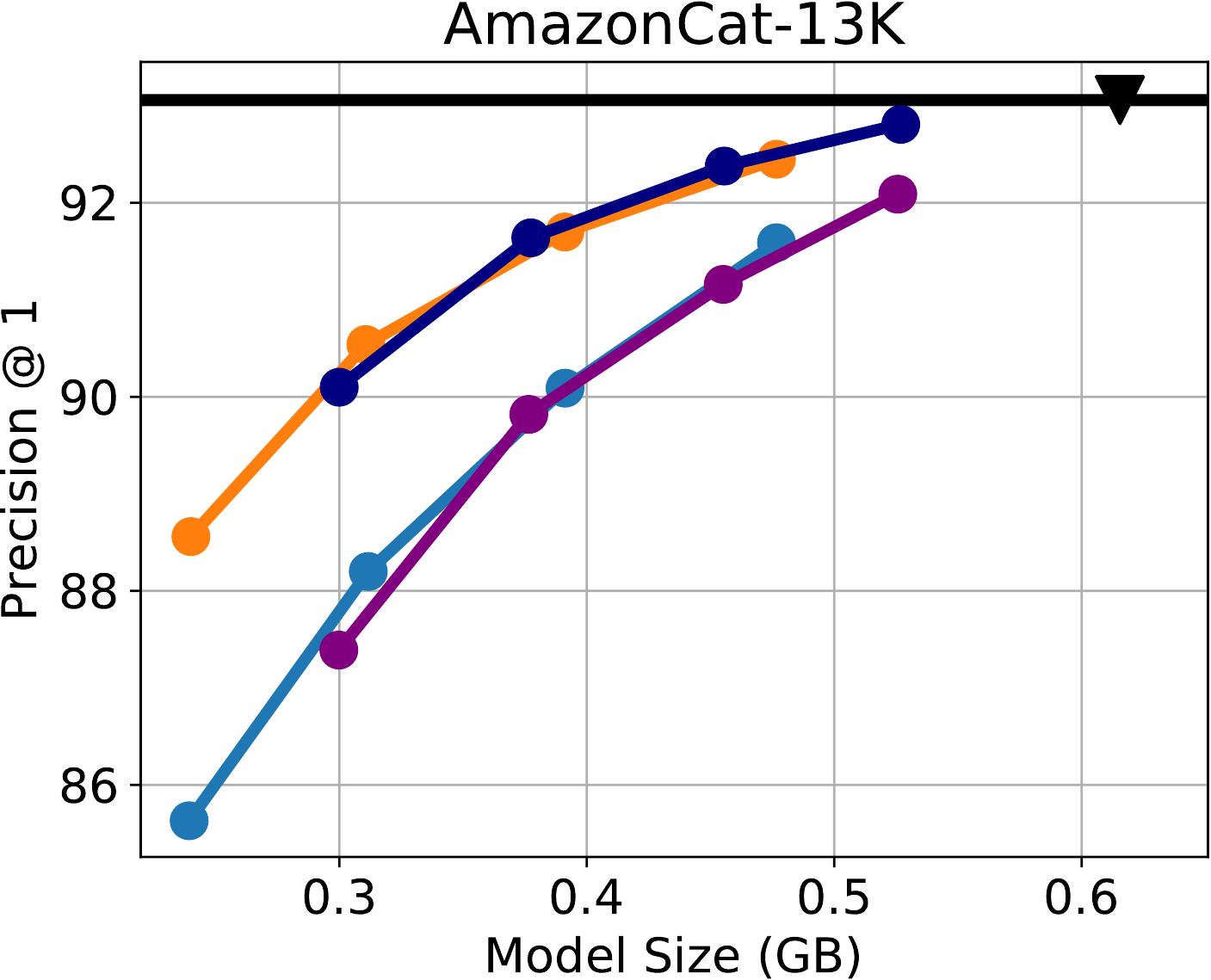}%
\end{subfigure}
\begin{subfigure}[b]{0.22\textwidth}
	\includegraphics[width=\textwidth]{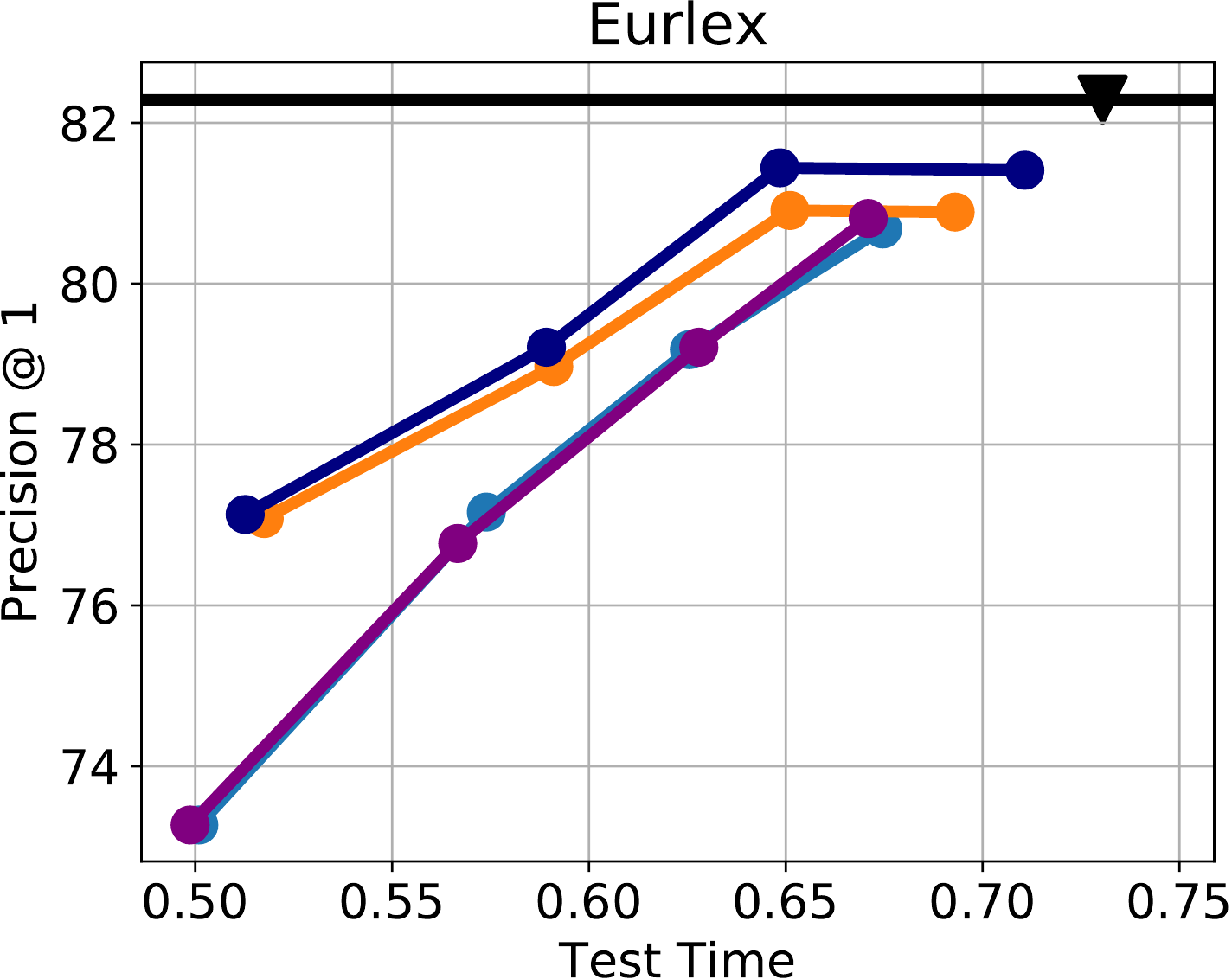}%
\end{subfigure}
\begin{subfigure}[b]{0.32\textwidth}
	\includegraphics[width=\textwidth]{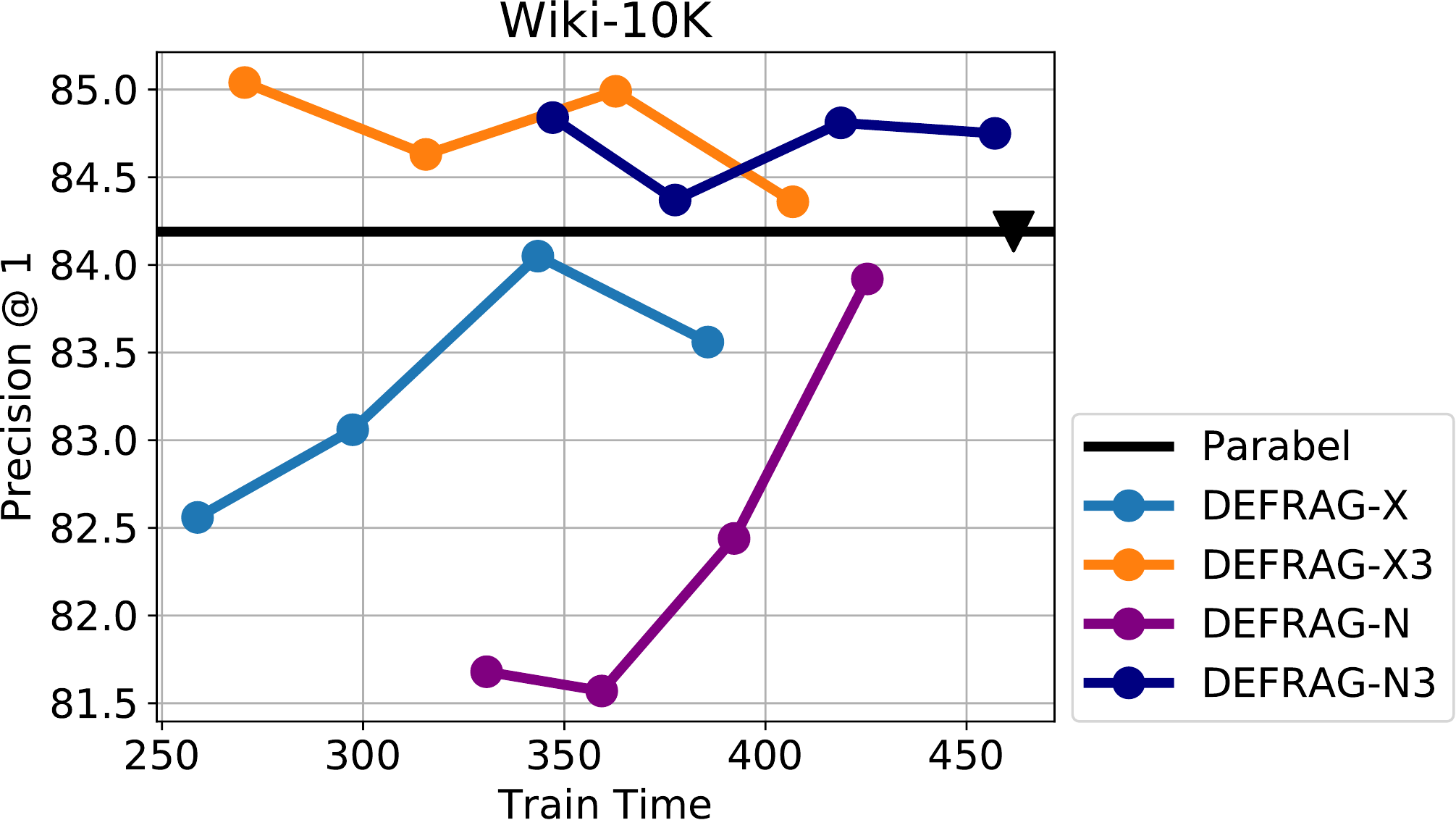}%
\end{subfigure}\\
\begin{subfigure}[b]{0.245\textwidth}
	\includegraphics[width=\textwidth]{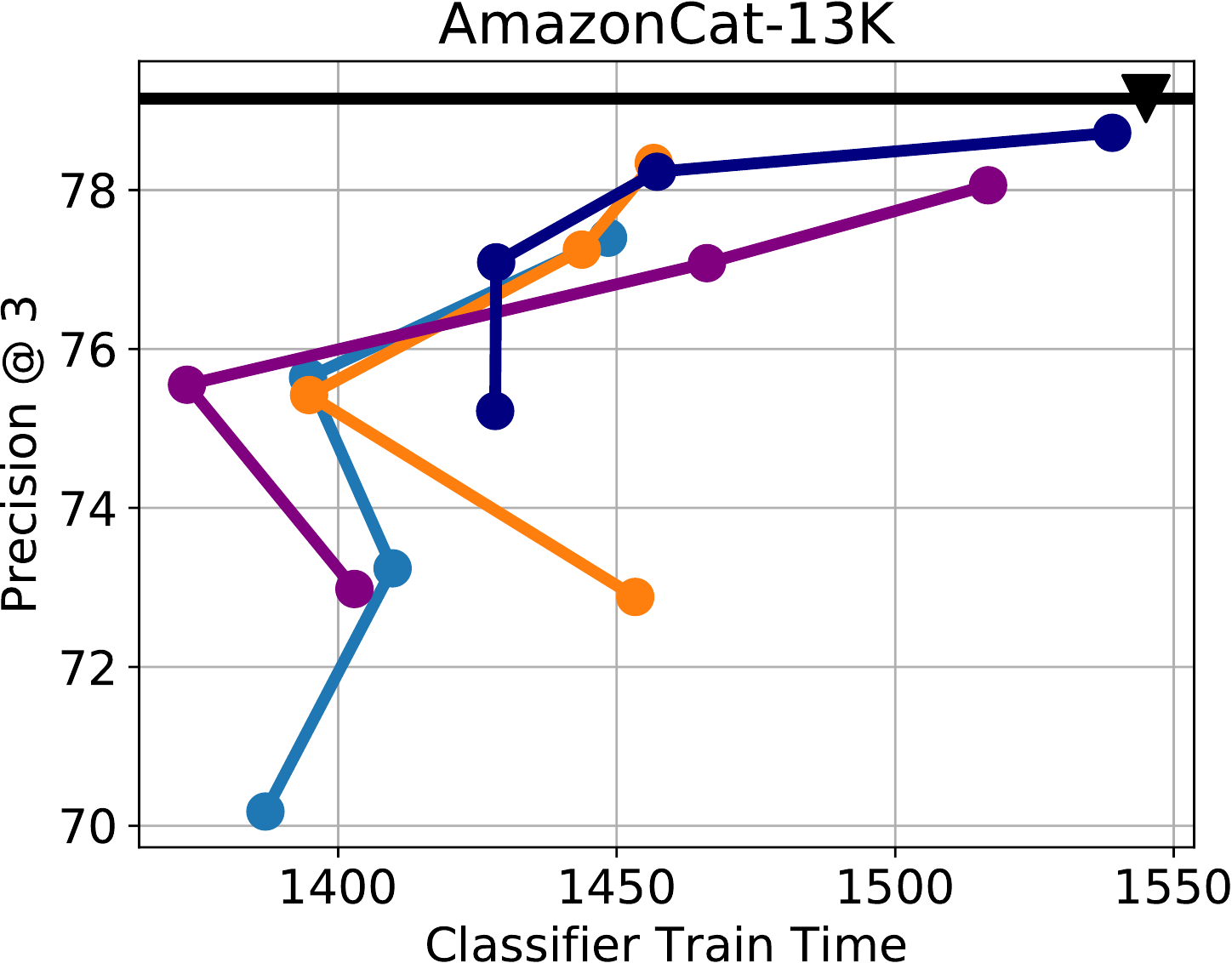}%
\end{subfigure}
\begin{subfigure}[b]{0.245\textwidth}
	\includegraphics[width=\textwidth]{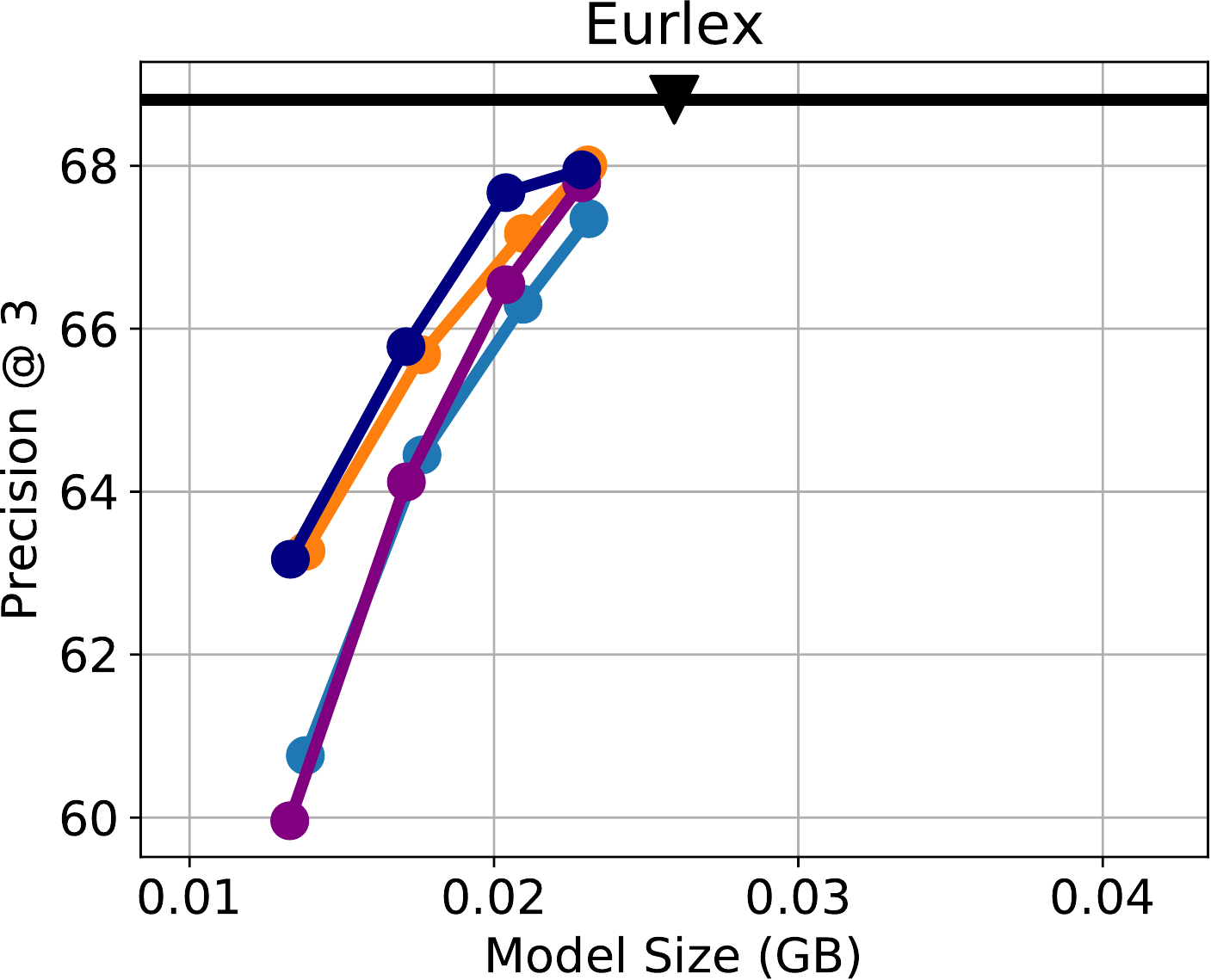}%
\end{subfigure}
\begin{subfigure}[b]{0.245\textwidth}
	\includegraphics[width=\textwidth]{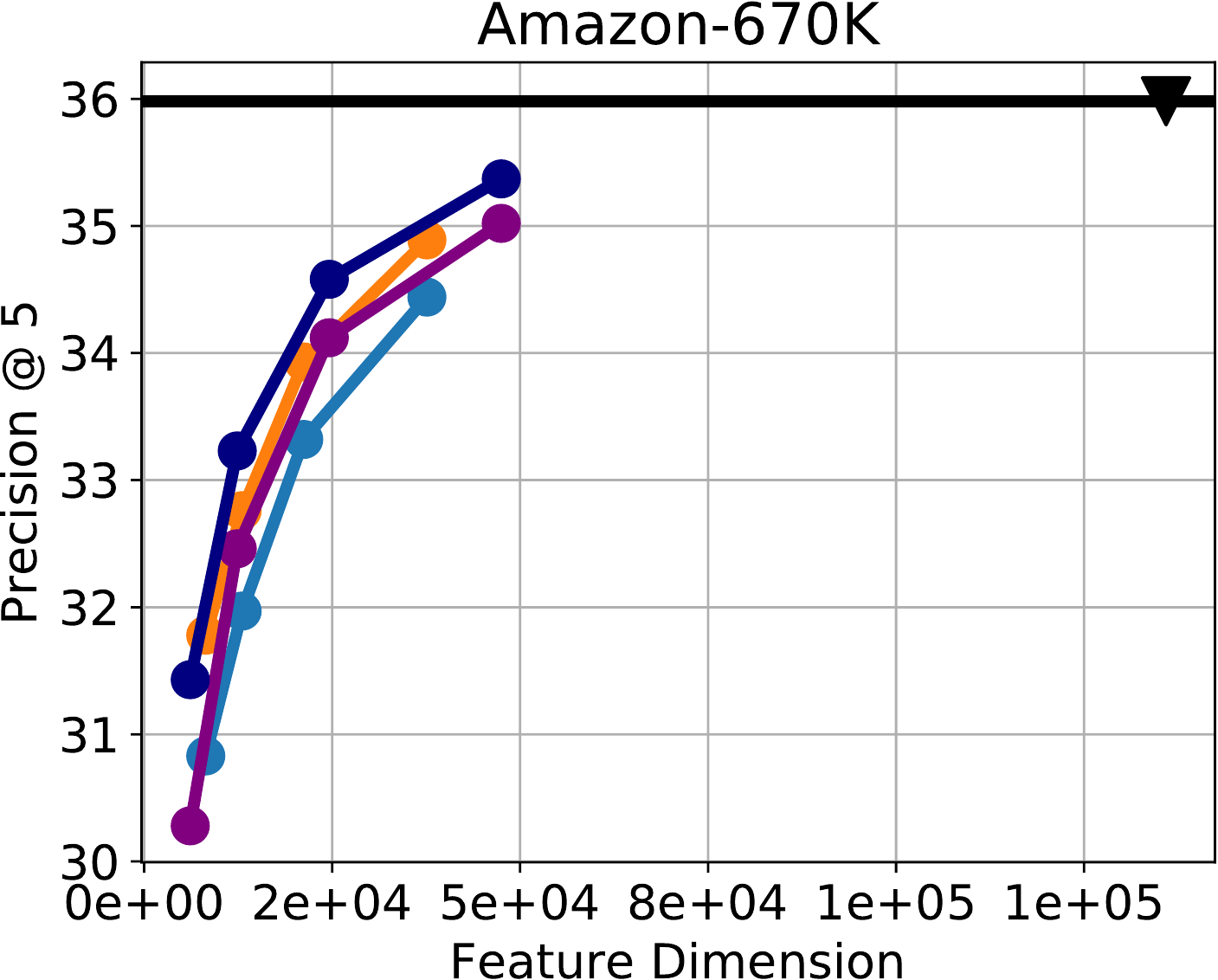}%
\end{subfigure}
\begin{subfigure}[b]{0.245\textwidth}
	\includegraphics[width=\textwidth]{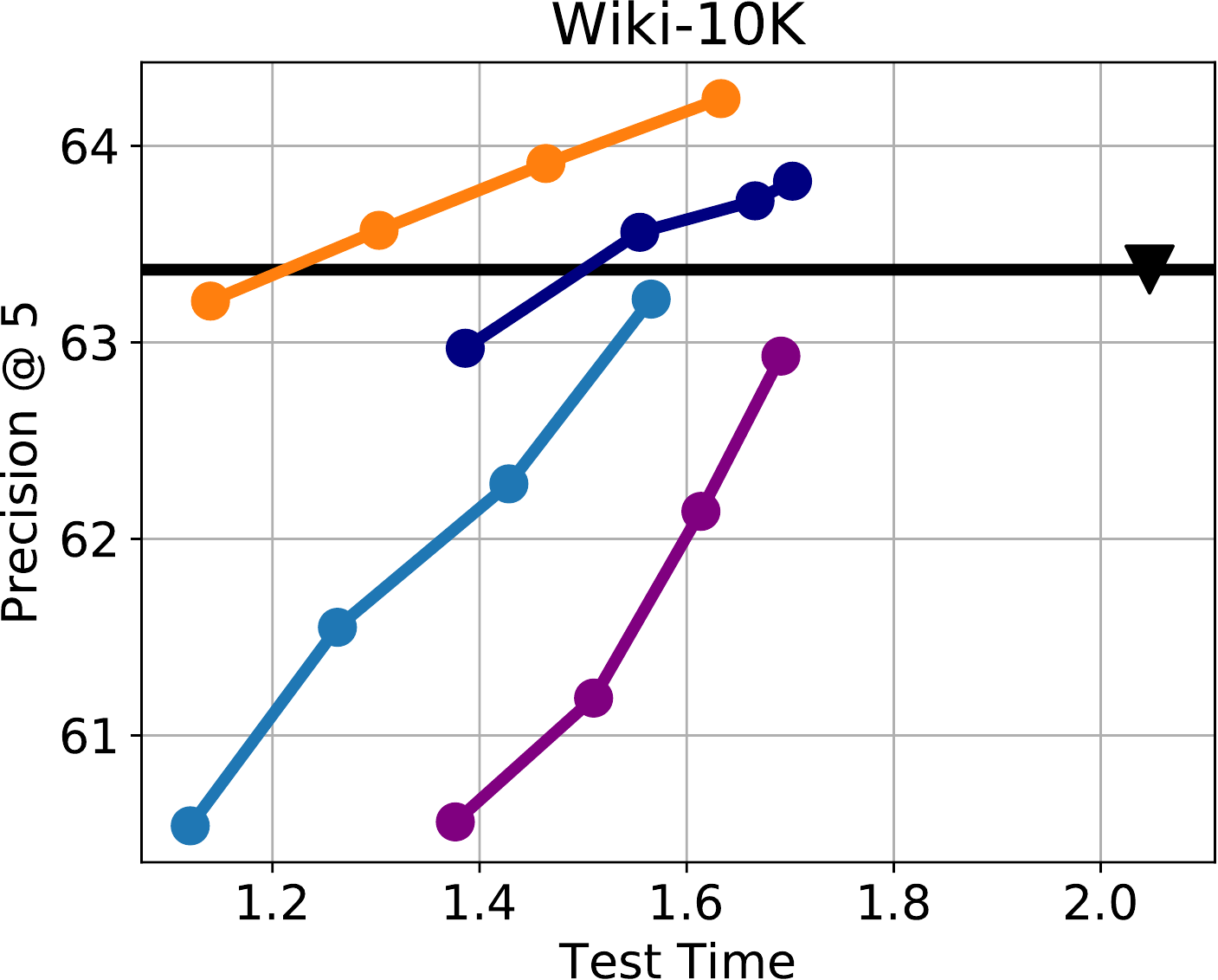}%
\end{subfigure}
\caption{A comparison between k-means based clustering, as followed by \defragx and nDCG based clustering, as followed by \defragn. Note that \ndcg-based clustering is more expensive (see Algorithm~\ref{algo:ndgc}). However, in several cases, e.g. Amazon-670K, AmazonCat, EURLex, \ndcg clustering can offer slightly superior performance in terms of classification accuracy.}%
\label{fig:app-ndcg}%
\end{figure}

\end{document}